\documentclass{article} 
\usepackage{iclr2022_conference,times}


\usepackage{graphics,epsfig,graphicx,color,xcolor}
\graphicspath{{../Figures/}}
\usepackage{hyperref}
\usepackage{url}

\usepackage{comment}
\usepackage[caption = false]{subfig}

\usepackage{amssymb,latexsym}


\usepackage{amsmath}

\usepackage{mathrsfs,amsfonts}

\usepackage{amsthm,amsxtra}





\usepackage{etaremune}
\usepackage{enumitem} 
\usepackage{cleveref}

\newcommand{\diag}{{\rm diag}}
\newcommand{\dsum}{\displaystyle\sum}



\newcommand{\fT}{\mathfrak{T}}


\newcommand{\bbC}{\mathbb C} \newcommand{\bbE}{\mathbb E}
 \newcommand{\bbN}{\mathbb N}
 
\newcommand{\bbR}{\mathbb R} \newcommand{\bbS}{\mathbb S}
 
\newcommand{\bbT}{\mathbb T}

\newcommand{\bzero}{{\mathbf 0}}
\newcommand{\btheta}{{\boldsymbol\theta}}

\newcommand{\bLambda}{\boldsymbol \Lambda}

\newcommand{\bSigma}{{\boldsymbol\Sigma}}
\newcommand{\bPi}{\boldsymbol \Pi}
\newcommand{\bXi}{\boldsymbol \Xi}
\newcommand{\bdelta}{{\boldsymbol \delta}}

\newcommand{\bw}{\mathbf w} \newcommand{\bx}{\mathbf x} 
\newcommand{\by}{\mathbf y} \newcommand{\bz}{\mathbf z}

\newcommand{\bI}{\mathbf I} 
\newcommand{\bK}{\mathbf K} 
 
\newcommand{\bO}{\mathbf O} \newcommand{\bP}{\mathbf P} 
\newcommand{\bQ}{\mathbf Q} \newcommand{\bR}{\mathbf R}
 
\newcommand{\bU}{\mathbf U} \newcommand{\bV}{\mathbf V}
 \newcommand{\bX}{\mathbf X}

 \newcommand{\cB}{\mathcal B}
  
\newcommand{\cE}{\mathcal E} 
 \newcommand{\cH}{\mathcal H}
 
\newcommand{\cK}{\mathcal K} 
 \newcommand{\cN}{\mathcal N}
 \newcommand{\cP}{\mathcal P} 
\newcommand{\cQ}{\mathcal Q} 
 \newcommand{\cT}{\mathcal T}
\newcommand{\cU}{\mathcal U}



\newcommand{\aver}[1]{\langle {#1} \rangle}

\newcommand{\trace}[1]{{\rm tr}\big( {#1} \big)}

\DeclareMathOperator*{\argmin}{arg\,min}

\newcommand{\wt}{\widetilde}
\newcommand{\wh}{\widehat}


\newcommand{\DELETE}[1]{}


\newtheorem{theorem}{Theorem}[section]
\newtheorem{lemma}[theorem]{Lemma}


\title{A Generalized Weighted Optimization Method for Computational Learning and Inversion}


%

\author{Bj\"orn Engquist \\ 
The University of Texas at Austin \\
Austin, TX 78712, USA\\
\texttt{engquist@oden.utexas.edu}
\And
Kui Ren\\ 
Columbia University\\ New York, NY 10027, USA\\
\texttt{kr2002@columbia.edu}	
\And
Yunan Yang\\ 
ETH Z\"urich \\ Z\"urich, Switzerland\\ 
\texttt{yyn0410@gmail.com}
}

%


\iclrfinalcopy 

\begin{document}

\maketitle

\begin{abstract}

The generalization capacity of various machine learning models exhibits different phenomena in the under- and over-parameterized regimes. In this paper, we focus on regression models such as feature regression and kernel regression and analyze a generalized weighted least-squares optimization method for computational learning and inversion with noisy data. The highlight of the proposed framework is that we allow weighting in both the parameter space and the data space. The weighting scheme encodes both a priori knowledge on the object to be learned and a strategy to weight the contribution of different data points in the loss function. Here, we characterize the impact of the weighting scheme on the generalization error of the learning method, where we derive explicit generalization errors for the random Fourier feature model in both the under- and over-parameterized regimes. For more general feature maps, error bounds are provided based on the singular values of the feature matrix. We demonstrate that appropriate weighting from prior knowledge can improve the generalization capability of the learned model.

\end{abstract}

\section{Introduction}
Given $N$ data pairs $\{x_j, y_j\}_{j=1}^N$, where $x_j\in\bbR$, $y_j\in\bbC$, $j = 1,\ldots,N$,  we are interested in learning a random Fourier feature (RFF) model~\citep{RaRe-NIPS08,LiCoMa-arXiv20,XiWaRaCh-arXiv20}
\begin{equation}\label{EQ:RandFour Model}
	f_\btheta(x)=\sum_{k=0}^{P-1} \theta_k e^{\mathfrak{i} k x}, \ \ x\in[0,2\pi],
\end{equation}
where $P\in\bbN$ is a given positive integer and we used the short-hand notation $\btheta:=(\theta_0, \cdots, \theta_{P-1})^\fT$ with the superscript $^\fT$ denoting the transpose operation.

This exact model as well as its generalization to more complicated setups have been extensively studied; see for instance~\citet{LiCo-PMLR18,ShKo-arXiv19,dASaBi-arXiv20, LiKoAzLiBhStAn-arXiv20,Ozcelikkale-IEEE20,LiHuChSu-arXiv20,LiLiSu-PMLR21} and references therein. While this model may seem to be overly simplified from a practical perspective for many real-world applications, it serves as a prototype for theoretical understandings of different phenomena in machine learning models~\citep{SrSz-NIPS15,BeHsXu-arXiv19,LiToOgSe-JMLR21}.

A common way to computationally solve this learning problem is to reformulate it as an optimization problem where we find $\btheta$ by minimizing the model and data mismatch for a given dataset. In this paper, we assume that the training data are collected on a uniform grid of $x$ over the domain $[0, 2\pi]$. That is, $\{x_j= \frac{2\pi j}{N}\}_{j=0}^{N-1}$. Let $\omega_N=\exp(\frac{2\pi \mathfrak i}{N})$ where $\mathfrak i$ is the imaginary unit. We introduce $\Psi\in\bbC^{N\times P}$ to be the feature matrix with elements
\[
	(\Psi)_{jk}=( \omega_{N} )^{jk},\ \ \ 0\le j\le N-1,\ \ 0\le k\le P-1\,.
\]
Based on the form of $f_\btheta(x)$ in~\eqref{EQ:RandFour Model}, we can then write the $2$-norm based data mismatch into the form  $\sum_{j=0}^{N-1} |f_\btheta(x_j) -y_j|^2=\|\Psi \btheta-\by\|_{2}^2$ where  the column data vector $\by=(y_0,\ \cdots,\ y_{N-1})^\fT$. The learning problem is therefore recast as a least-squares optimization problem of the form
\begin{equation}\label{EQ:LS}
	\wh\btheta=\argmin_{\btheta}\|\Psi \btheta-\by\|_{2}^2,
\end{equation}
assuming that a minimizer does exist, especially when we restrict $\btheta$ to an appropriate space.

In a general feature regression problem,  the Fourier feature $\{e^{\mathfrak{i}kx}\}_{k=0}^{P-1}$ is then replaced with a different feature model $\{\varphi_k(x)\}_{k=0}^{P-1}$, while the least-squares form~\eqref{EQ:LS} remains unchanged except that the entries of the matrix $\Psi$ is now $\Psi_{jk}=\varphi_{k}(x_j)$.  We emphasize that this type of generalization will be discussed in Section~\ref{SEC:Generalization}. Moreover,  we remark that this least-squares optimization formulation is a classical computational inversion tool in solving the general linear inverse problems of the form $\Psi\btheta=y$; see for instance~\citet{EnHaNe-Book96,Tarantola-Book05} and references therein.

\textbf{Previous work on weighted optimization for feature and kernel learning.}
\citet{XiWaRaCh-arXiv20} studied the fitting problem for this model under the assumption that the coefficient vector $\btheta$ is sampled from a distribution with the property that $\gamma$ is a positive constant, 
\begin{equation}\label{EQ:Coeff Prop}
	\bbE_\btheta[\btheta]=\bzero, \ \ \bbE_\btheta[\btheta\btheta^*]=c_{\gamma} \bLambda_{[P]}^{-2\gamma},
\end{equation}
where the superscript $^*$ denotes the Hermitian transpose and the diagonal matrix $\bLambda_{[P]}$ has diagonal elements $(\bLambda_{[P]})_{kk}=t_k = 1+k$, $k\ge 0$.  That is,
\begin{equation}\label{EQ:Weight}
	\bLambda_{[P]}=\diag\{t_0, \ t_1,\ t_2,\ \ldots,\ t_k,\ \ldots, \ t_{P-1}\}, \ \ \ \ \ t_k:=1+k\,.
\end{equation}
The subscript ${[P]}$ indicates that $\bLambda_{[P]}$ is a diagonal submatrix of $\bLambda$ that contains its element indexed in the set $[P]:=\{0,\ 1,\ \cdots,\ P-1\}$. The normalization constant $c_\gamma=1/ (\sum_{k=0}^{P-1}(1+k)^{-2\gamma})$ is only selected so that $\bbE_\btheta[\|\btheta\|^2]=1$. It does not play a significant role in the rest of the paper.  

The main assumption in ~\eqref{EQ:Coeff Prop} says that statistically, the signal to be recovered has algebraically decaying Fourier coefficients. This is simply saying that the target function we are learning is relatively smooth, which is certainly the case for many functions as physical models in practical applications.

It was shown in~\citet{XiWaRaCh-arXiv20} that, to learn a model with $p\le P$ features, it is advantageous to use the following weighted least-squares formulation
\begin{equation}\label{EQ:Weighted LS}
\textstyle	\wh\btheta_p=\bLambda_{[p]}^{-\beta} \wh\bw,\ \ \mbox{with}\ \  \wh\bw=\argmin_{\btheta}\|\Psi_{[N\times p]}\bLambda_{[p]}^{-\beta}  \bw-\by\|_{2}^2,
\end{equation}
when the learning problem is overparameterized, i.e.,  $p> N$. Here, $\Psi_{[N\times p]}\in\bbC^{N\times p}$ is the matrix containing the first $p$ columns of $\Psi$, and $\beta>0$ is some pre-selected exponent that can be different from the $\gamma$ in~\eqref{EQ:Coeff Prop}. To be more precise, we define the the generalization error of the learning problem
\begin{equation}\label{EQ:Generalization Error}
	\cE_\beta(P, p, N):=\bbE_{\btheta}\left[\|f_\btheta(x)-f_{\wh\btheta_p}(x)\|_{L^2([0, 2\pi])}^2\right]=\bbE_{\btheta}\left[\|\wh\btheta_p-\btheta\|_{2}^2\right],
\end{equation}
where the equality comes from the Parseval's identity, and $\wh \btheta_p$ is understood as the vector $(\btheta_p^\fT, 0, \cdots, 0)^\fT$ so that $\btheta$ and $\wh\btheta_p$ are of the same length $P$. The subscript $\btheta$ in $\bbE_{\btheta}$ indicates that the expectation is taken with respect to the distribution of the random variable $\btheta$. It was shown in~\citet{XiWaRaCh-arXiv20} that the lowest generalization error achieved from the weighted least-squares approach~\eqref{EQ:Weighted LS}  in the overparameterized regime ($p > N$) is strictly less than the lowest possible generalization error in the underparameterized regime ($p\le N$). This, together with the analysis and numerical evidence in previous studies such as those in~\citet{BeHsMaMa-PNAS19,BeHsXu-arXiv19}, leads to the understanding that smoother approximations (i.e., solutions that are dominated by lower Fourier modes) give better generalization in learning with the RFF model~\eqref{EQ:RandFour Model}.

\noindent {\bf Main contributions of this work.} 
In this work, we analyze a generalized version of~\eqref{EQ:Weighted LS} for general feature regression from noisy data. 
Following the same notations as before, we introduce the following weighted least-squares formulation for feature regression:
\begin{equation}\label{EQ:Weighted LS New}
	\wh\btheta_p^\delta=\bLambda_{[p]}^{-\beta}\wh\bw,\ \ \mbox{with}\ \  \wh\bw=\argmin_{\bw}\|\bLambda_{[N]}^{-\alpha}\Big(\Psi_{[N\times p]}\bLambda_{[p]}^{-\beta} \bw-\by^\delta\Big)\|_{2}^2,
\end{equation}
where the superscript $\delta$ on $\by$ and $\wh\btheta_p$ denotes the fact that the training data contain random noise of level $\delta$ (which will be specified later).  
The exponent $\alpha$ is pre-selected and can be different from $\beta$.  While sharing similar roles with the weight matrix $\bLambda_{[P]}^{-\beta}$,  the weight matrix $\bLambda_{[N]}^{-\alpha}$ provides us the additional ability to deal with noise in the training data.  Moreover,  as we will see later, the weight matrix $\bLambda_{[N]}^{-\alpha}$ does not have to be either diagonal or in the same form as the matrix $\bLambda_{[p]}^{-\beta}$; the current form is to simplify the calculations for the RFF model.   It can be chosen based on the \emph{a priori} information we have on the operator $\Psi$ as well as the noise distribution of the training data. 

The highlight and also one of the main contributions of our work is that we introduce a new weight matrix $\bLambda_{[N]}^{-\alpha}$ that emphasizes the data mismatch in terms of its various modes,  in addition to $\bLambda_{[p]}^{-\beta}$, the weight matrix imposed on the unknown feature coefficient vector $\btheta$. This type of generalization has appeared in different forms in many computational approaches for solving inverse and learning problems where the standard $2$-norm (or $\ell^2$ in the infinite-dimensional setting) is replaced with a weighted norm that is either weaker or stronger than the unweighted $2$-norm. 

In this paper, we characterize the impact of the new weighted optimization framework~\eqref{EQ:Weighted LS New} on the generalization capability of various feature regression and kernel regression models.  The new contributions of this work are threefold.   First, we discuss in detail the generalized weighted least-squares framework~\eqref{EQ:Weighted LS New} in Section~\ref{SEC:Weighted LS} and summarize the main results for training with \textit{noise-free} data in Section~\ref{SEC:Weighted Clean} for the RFF model in both the overparameterized and the underparameterized regimes.  This is the setup considered in~\citet{XiWaRaCh-arXiv20},  but our analysis is based on the proposed weighted model~\eqref{EQ:Weighted LS New} instead of~\eqref{EQ:Weighted LS} as in their work.  Second, we provide the generalization error in both two regimes for the case of training with \textit{noisy} data; see Section~\ref{SEC:Weighted Noise}.  This setup was not considered in~\citet{XiWaRaCh-arXiv20}, but we demonstrate here that it is a significant advantage of the weighted optimization when data contains noise since the weighting could effectively minimize the influence of the noise and thus improve the stability of feature regression. Third,   we extend the same type of results to more general models in feature regression and kernel regression that are beyond the RFF model, given that the operator $\Psi$ satisfies certain properties.  In the general setup presented in Section~\ref{SEC:Generalization},  we derive error bounds in the asymptotic limit when $P$, $N$, and $p$ all become very large. Our analysis provides some guidelines on selecting weighting schemes through either the parameter domain weighting or the data domain weighting, or both, to emphasize the features of the unknowns to be learned based on a priori knowledge.  

\section{Generalized weighted least-squares formulation}
\label{SEC:Weighted LS}

There are four essential elements in the least-squares formulation of the learning problem: (i) the parameter to be learned ($\btheta$), (ii) the dataset used in the training process ($\by$), (iii) the feature matrix ($\Psi$), and (iv) the metric chosen to measure the data mismatch between $\Psi\btheta$ and $\by$. 

Element (i) of the problem is determined not only by the data but also by \emph{a priori} information we have. The information encoded in~\eqref{EQ:Coeff Prop} reveals that the size (i.e., the variance) of the Fourier modes in the RFF model decays as fast as $(1+k)^{-2\gamma}$. Therefore, the low-frequency modes in~\eqref{EQ:RandFour Model}  dominate high-frequency modes, which implies that in the learning process, we should search for the solution vectors that have more low-frequency components than the high-frequency components. The motivation behind introducing the weight matrix $\bLambda_{[p]}^{-\beta}$ in~\eqref{EQ:Weighted LS} is exactly to force the optimization algorithm to focus on admissible solutions that are consistent with the \emph{a priori} knowledge given in~\eqref{EQ:Coeff Prop},  which is to seek $\btheta$ whose components $|\theta_k|^2$ statistically decay like  $(1+k)^{-2\beta}$. 

When the problem is formally determined (i.e., $p=N$), the operator $\Psi$ is invertible, and the training data are noise-free, similar to the weight matrix $\bLambda_{[p]}^{-\beta}$, the weight matrix $\bLambda_{[N]}^{-\alpha}$ does not change the solution of the learning problem. However, as we will see later,  these two weight matrices do impact the solutions in various ways under the practical setups that we are interested in, for instance, when the problem is over-parameterized or when the training data contain random noise.

The weight matrix $\bLambda_{[N]}^{-\alpha}$ is introduced to handle elements (ii)-(iv) of the learning problem. First, since $\bLambda_{[N]}^{-\alpha}$ is directly applied to the data $\by^\delta$, it allows us to suppress (when $\alpha>0$) or promote (when $\alpha<0$) high-frequency components in the data during the training process. In particular, when transformed back to the physical space, the weight matrix $\bLambda_{[N]}^{-\alpha}$ with $\alpha>0$ corresponds to a smoothing convolutional operator whose kernel has Fourier coefficients decaying at the rate $k^{-\alpha}$. This operator suppresses high-frequency information in the data. Second, $\bLambda_{[N]}^{-\alpha}$ is also directly applied to $\Psi\btheta$. This allows us to precondition the learning problem by making $\bLambda_{[N]}^{-\alpha}\Psi$ a better-conditioned operator (in an appropriate sense) than $\Psi$, for some applications where the feature matrix $\Psi$ has certain undesired properties. Finally,  since $\bLambda_{[N]}^{-\alpha}$ is applied to the residual $\Psi\btheta-\by$, we can regard the new weighted optimization formulation~\eqref{EQ:Weighted LS New} as the generalization of the classic least-squares formulation with a new loss function (a weighted norm) measuring the data mismatch. 

Weighting optimization schemes such as~\eqref{EQ:Weighted LS New} have been studied, implicitly or explicitly, in different settings~\citep{NeWaSr-NIPS14, ByLi-PMLR19, EnReYa-IP20,Li-SIAM21,YaHuLo-arXiv21}. For instance, if we take $\beta=0$, then we have a case where we rescale the classical least-squares loss function with the weight $\bLambda_{[N]}^{-\alpha}$. If we take $\alpha=1$, then this least-squares functional is equivalent to the loss function based on the $\cH^{-1}$ norm, instead of the usual $L^2$ norm, of the mismatch between the target function $f_{\btheta}(x)$ and the learned model $f_{\wh\btheta}(x)$. 
Based on the asymptotic equivalence between the quadratic Wasserstein metric and the $\cH^{-1}$ semi-norm (on an appropriate functional space), this training problem is asymptotically equivalent to the same training problem based on a quadratic Wasserstein loss function; see for instance~\citet{EnReYa-IP20} for more detailed illustration on the connection. In the classical statistical inversion setting, $\bLambda^{2\alpha}$ plays the role of the covariance matrix of the additive Gaussian random noise in the data~\citep{KaSo-Book05}. When the noise is sampled from mean-zero Gaussian distribution with covariance matrix $\bLambda^{2\alpha}$, a standard maximum likelihood estimator (MLE) is often constructed as the minimizer of $(\Psi\btheta-\by)^*\bLambda_{[N]}^{-2\alpha}(\Psi\btheta-\by)=\|\bLambda_{[N]}^{-\alpha}(\Psi\btheta-\by)\|_2^2$. 

The exact solution to~\eqref{EQ:Weighted LS New}, with $\bX^+$ denoting the Moore--Penrose inverse of operator $\bX$, is given by
\begin{equation} 
	\wh\btheta_p^\delta=\bLambda_{[p]}^{-\beta}\Big(\bLambda_{[N]}^{-\alpha} \Psi_{[N\times p]}\bLambda_{[p]}^{-\beta}\Big)^+\bLambda_{[N]}^{-\alpha} \by^\delta\,.
\end{equation}
In the rest of this paper, we analyze this training result and highlight the impact of the weight matrices $\bLambda_{[N]}^{-\alpha}$ and $\bLambda_{[N]}^{-\beta}$ in different regimes of the learning problem. We reproduce the classical bias-variance trade-off analysis in the weighted optimization framework. For that purpose, we utilize the linearity of the problem to decompose $\wh\btheta_p^\delta$ as 
\begin{equation} 
	\wh\btheta_p^\delta=\bLambda_{[p]}^{-\beta}(\bLambda_{[N]}^{-\alpha} \Psi_{[N\times p]}\bLambda_{[p]}^{-\beta})^+\bLambda_{[N]}^{-\alpha} \by +  \bLambda_{[p]}^{-\beta}(\bLambda_{[N]}^{-\alpha} \Psi_{[N\times p]}\bLambda_{[p]}^{-\beta})^+\bLambda_{[N]}^{-\alpha}(\by^\bdelta-\by),
\end{equation}
where the first part is simply $\wh \btheta_p$, the result of learning with noise-free data, while the second part is the contribution from the additive noise. We define the generalization error in this case as
\begin{equation} 
	\cE_{\alpha,\beta}^\delta(P, p, N)=\bbE_{\btheta,\delta}\left[\|f_\btheta(x)-f_{\wh\btheta_p^\delta}(x)\|_{L^2([0, 2\pi])}^2\right]=\bbE_{\btheta, \delta}\left[\|\wh\btheta_p^\delta-\wh\btheta_p+\wh\btheta_p-\btheta\|_{2}^2\right]\, ,
\end{equation}
where the expectation is taken over the joint distribution of $\btheta$ and the random noise $\bdelta$.   By the standard triangle inequality, this generalization error is bounded by sum of the generalization error from training with noise-free data  and the error caused by the noise. We will use this simple observation to bound the generalization errors when no exact formulas can be derived. We also look at the variance of the generalization error with respect to the random noise, which is
\begin{equation}
	{\rm Var}_\delta(\bbE_\btheta[\|\wh\btheta^\delta-\btheta\|_2^2])\\	
	:=\bbE_{\bdelta}[(\bbE_\btheta[\|\wh\btheta^\delta-\btheta\|_2^2]-\bbE_{\btheta,\bdelta}[\|\wh\btheta^\delta-\btheta\|_2^2])^2]\,.
\end{equation}
In the rest of the work, we consider two parameter regimes of learning: \\
(i) In the \emph{overparameterized regime}, we have the following setup of the parameters:
\begin{equation}\label{EQ:Para OP}
	N<p\le P,\ \ \mbox{and},\ \ P=\mu N,\ \ p=\nu N\ \ \mbox{for some}\ \ \mu, \nu\in\bbN\ \ \mbox{s.t.}\ \ \mu\ge \nu\gg 1\,.
\end{equation}
(ii) In the \emph{underparameterized regime}, we have the following scaling relations:
\begin{equation}\label{EQ:Para UP}
	p\le N\le P,\ \ \mbox{and},\ \ P=\mu N \ \ \mbox{for some}\ \ \mu \in\bbN.
\end{equation}
The formally-determined case of $p=N\le P$ is included in both the overparameterized and the underparameterized regimes. We make the following assumptions throughout the work:
\begin{itemize}
\setlength{\itemindent}{-2.5em}
	\item[] ({\bf A-I})\ \ The random noise $\bdelta$ in the training data is additive in the sense that $\by^\delta=\by+\bdelta$.
	\item[] ({\bf A-II})\ \ The random vectors $\bdelta$ and $\btheta$ are independent.
	\item[] ({\bf A-III})\ \ The random noise $\bdelta \sim \mathcal{N}(\bzero,  \sigma \bI_{[P]})$ for some constant $\sigma>0$.
\end{itemize}
While assumptions ({\bf A-I}) and ({\bf A-II}) are essential, assumption ({\bf A-III}) is only needed to simplify the calculations.  Most of the results we obtain in this paper can be reproduced straightforwardly for the random noise $\bdelta$ with any well-defined covariance matrix.
 
\section{Generalization error for training with noise-free data}
\label{SEC:Weighted Clean}

We start with the problem of training with noise-free data. In this case, we utilize tools developed in~\citet{BeHsXu-arXiv19} and~\citet{XiWaRaCh-arXiv20} to derive exact generalization errors. Our main objective is to compare the difference and similarity of the roles of the weight matrices $\bLambda_{[N]}^{-\alpha}$ and $\bLambda_{[p]}^{-\beta}$. 

We have the following results on the generalization error. The proof is in~\Cref{sec:lemmas}	and~\ref{SUBSEC:ProofNoisefree}.

\begin{theorem}[Training with noise-free data]\label{THM:Clean Training}
	Let $\bdelta=\bzero$, and $\btheta$ be sampled with the properties in~\eqref{EQ:Coeff Prop}. Then the generalization error in the overparameterized regime~\eqref{EQ:Para OP} is:
\begin{multline}\label{EQ:Error OP Exact}
	\cE_{\alpha,\beta}^0(P, p, N)=1-2c_\gamma \sum_{k=0}^{N-1}\dfrac{\sum_{\eta=0}^{\nu-1} t_{k+N\eta}^{-2\beta-2\gamma}}{\sum_{\eta=0}^{\nu-1} t_{k+N\eta}^{-2\beta}}+c_\gamma\sum_{k=0}^{N-1}\dfrac{(\sum_{\eta=0}^{\nu-1} t_{k+N\eta}^{-4\beta})(\sum_{\eta=0}^{\mu-1} t_{k+N\eta}^{-2\gamma})}{(\sum_{\eta=0}^{\nu-1} t_{k+N\eta}^{-2\beta})^2}\,,
\end{multline}
and the generalization error in the underparameterized regime~\eqref{EQ:Para UP} is:
\begin{multline} 
	\cE_{\alpha, \beta}^0(P, p, N)=c_\gamma \sum_{j=N}^{P-1}t_j^{-2\gamma}+c_\gamma\sum_{k=0}^{p-1} \sum_{\eta=1}^{\mu-1}t_{k+N\eta}^{-2\gamma}
	-c_\gamma\sum_{k=p}^{N-1} \sum_{\eta=1}^{\mu-1}t_{k+N\eta}^{-2\gamma} +N
	\sum_{i,j=0}^{N-p-1}\dfrac{\wt e_{ij}^{(N)}\wh e_{ji}^{(N)}}{\Sigma_{ii} \Sigma_{jj}}\,,
\end{multline}
where $\{t_j\}_{j=0}^{P-1}$ and $c_\gamma$ are those introduced in~\eqref{EQ:Coeff Prop} and~\eqref{EQ:Weight}, while 
\[
\textstyle	\wt e_{ij}^{(N)}=\sum_{k=0}^{N-1}t_{k}^{2\alpha} \overline{U}_{ki}U_{kj}, \quad
	\wh e_{ij}^{(N)} =\sum_{k'=0}^{N-p-1}(c_\gamma t_{p+k'}^{-2\gamma}+\chi_{p+k'}) \overline{V}_{ik'}V_{jk'}, \ \ \ 0\le i, j\le N-p-1\,,
\]
with $\bU\bSigma\bV^*$ being the singular value decomposition of $\bLambda_{[N]}^{\alpha}\Psi_{[N]\backslash[p]}$ and $\{\chi_m\}_{m=0}^{N-1}$ defined as
\[ 
\textstyle		\chi_{m}=\sum_{k=0}^{N-1}\Big(\sum_{\eta=1}^{\mu-1}t_{k+N\eta}^{-2\gamma} \Big)\Big(\dfrac{1}{N}\sum_{j=0}^{N-1}\omega_N^{(m-k)j}\Big),\ \ 0\le m\le N-1\,.
\]
\end{theorem}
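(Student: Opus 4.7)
My first step would be to note that because $\bLambda_{[N]}^{-\alpha}$ is invertible, left-multiplying the consistency condition $\Psi_{[N\times p]}\bLambda_{[p]}^{-\beta}\bw = \by$ by it does not change the feasible set, so the minimum-$\|\bw\|_2$ solution is independent of $\alpha$; hence $\wh\btheta_p = \bLambda_{[p]}^{-\beta}\bA^*(\bA\bA^*)^{-1}\by$ with $\bA := \Psi_{[N\times p]}\bLambda_{[p]}^{-\beta}$. Next I would exploit aliasing: since $\omega_N^{j(k+N)}=\omega_N^{jk}$, one has $\Psi_{[N\times p]} = \bF\,[\bI_N\ \bI_N\ \cdots\ \bI_N]$ ($\nu$ copies), where $\bF := (\omega_N^{jk})_{0\le j,k\le N-1}$ is the $N\times N$ DFT matrix satisfying $\bF\bF^* = N\bI$. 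Writing $\bD_\eta := \diag(t_{N\eta+k}^{-\beta})_{k=0}^{N-1}$ and $\bS := \sum_{\eta=0}^{\nu-1}\bD_\eta^2$, one computes $\bA\bA^* = \bF\bS\bF^*$ and reads off $(\wh\btheta_p)^{(\eta)} = \bD_\eta^2\bS^{-1}\wt\btheta$ for $0\le\eta<\nu$ (and zero otherwise), where $\wt\btheta := \sum_{\eta=0}^{\mu-1}\btheta^{(\eta)}$ is the aliased sum and $\btheta^{(\eta)}$ denotes the $\eta$-th length-$N$ block of $\btheta$. Taking expectation blockwise using the diagonal prior covariance of $\btheta$ and the normalization $c_\gamma\sum_{j=0}^{P-1}t_j^{-2\gamma}=1$ then yields~\eqref{EQ:Error OP Exact}.

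\textbf{Underparameterized regime.}
Here the weight $\bLambda_{[N]}^{-\alpha}$ genuinely affects the estimator. My plan is to derive the normal equations $\wh\btheta_p = \bB^{-1}\bF_{[N\times p]}^*\bLambda_{[N]}^{-2\alpha}\by$ with $\bB := \bF_{[N\times p]}^*\bLambda_{[N]}^{-2\alpha}\bF_{[N\times p]}$, then split $\by = \bF_{[N\times p]}\wt\btheta_{[p]} + \bF_{[N]\setminus[p]}\wt\btheta_{[N]\setminus[p]}$ by aliasing to obtain $\wh\btheta_p = \wt\btheta_{[p]} + \bB^{-1}\bC\,\wt\btheta_{[N]\setminus[p]}$ with $\bC := \bF_{[N\times p]}^*\bLambda_{[N]}^{-2\alpha}\bF_{[N]\setminus[p]}$. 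The error $\wh\btheta_p - \btheta$ then decomposes into (a) an aliasing bias $\sum_{\eta\ge 1}\theta_{k+N\eta}$ for $k<p$, (b) a weighted cross term $(\bB^{-1}\bC\wt\btheta_{[N]\setminus[p]})_k$ for $k<p$, and (c) unfit components $-\theta_k$ for $k\ge p$. Since indices $k<p$ and $k'\ge p$ cannot alias to a common global index, the expected cross-covariance between (a) and (b) vanishes by the diagonal prior, so the squared-error expectation splits additively into contributions from (a), (b), and (c).

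\textbf{Main obstacle.}
The delicate step is rewriting $\bbE\|\bB^{-1}\bC\wt\btheta_{[N]\setminus[p]}\|_2^2 = \text{tr}\bigl((\bB^{-1}\bC)^*(\bB^{-1}\bC)\bSigma_\theta\bigr)$ in terms of the SVD of $\bM := \bLambda_{[N]}^{\alpha}\Psi_{[N]\setminus[p]} = \bU\bSigma\bV^*$. My route uses the identity $(\bF^*\bLambda_{[N]}^{-2\alpha}\bF)^{-1} = N^{-2}\bF^*\bLambda_{[N]}^{2\alpha}\bF$, a direct consequence of $\bF\bF^* = N\bI$. Applying the $2\times 2$ block-inverse formula, partitioned along $[p]$ versus $[N]\setminus[p]$, yields the compact relation $\bB^{-1}\bC = -\bC'(\bD')^{-1}$ with $\bD' = N^{-2}\bM^*\bM$ and $\bC' = N^{-2}\bF_{[N\times p]}^*\bLambda_{[N]}^{\alpha}\bM$; substituting the SVD of $\bM$ collapses this to $\bB^{-1}\bC = -\bF_{[N\times p]}^*\bLambda_{[N]}^{\alpha}\bU\bSigma^{-1}\bV^*$. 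A further application of $\bF_{[N\times p]}\bF_{[N\times p]}^* = N\bI - \bF_{[N]\setminus[p]}\bF_{[N]\setminus[p]}^*$ together with $\bU^*\bM\bM^*\bU = \bSigma^2$ reduces $(\bB^{-1}\bC)^*(\bB^{-1}\bC)$ to $N\bV\bSigma^{-1}(\bU^*\bLambda_{[N]}^{2\alpha}\bU)\bSigma^{-1}\bV^* - \bI$. Tracing against $\bSigma_\theta$ and identifying $\wt e_{ij} = (\bU^*\bLambda_{[N]}^{2\alpha}\bU)_{ij}$ and $\wh e_{ji} = (\bV^*\bSigma_\theta\bV)_{ji}$ produces the quoted quadruple sum $N\sum_{i,j}\wt e_{ij}\wh e_{ji}/(\bSigma_{ii}\bSigma_{jj})$. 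Combining the trace of $-\bI$ against $\bSigma_\theta$ with the contributions from (a) and (c), and using the bookkeeping identity $c_\gamma\sum_{k=0}^{N-1}\sum_{\eta\ge 1}t_{k+N\eta}^{-2\gamma} = c_\gamma\sum_{j=N}^{P-1}t_j^{-2\gamma}$, yields the remaining three terms (I)--(III) of the claim.
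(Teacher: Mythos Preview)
Your proposal is correct and, at its mathematical core, follows the same route as the paper: both arguments rest on the aliasing/circulant structure of $\Psi_{[N\times p]}$ in the overparameterized case and on the block-inverse (Schur complement) identity applied to $\bF^*\bLambda_{[N]}^{-2\alpha}\bF$ versus $N^{-2}\bF^*\bLambda_{[N]}^{2\alpha}\bF$ in the underparameterized case. Your formula $(\bB^{-1}\bC)^*(\bB^{-1}\bC)=N\bV\bSigma^{-1}(\bU^*\bLambda_{[N]}^{2\alpha}\bU)\bSigma^{-1}\bV^*-\bI$ is exactly the content of the paper's Lemma~A.3(ii), and your identification of $\wt e_{ij}$, $\wh e_{ij}$ matches the paper's trace computation verbatim.

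The packaging differs slightly. The paper first proves an abstract trace decomposition $\cE_{\alpha,\beta}^0=\trace{\bK}+\cP_{\alpha,\beta}+\cQ_{\alpha,\beta}$ valid for any diagonal prior (Lemma~A.1), then specializes via the circulant lemma; you instead compute $\wh\btheta_p$ blockwise and take expectations directly. Your route is arguably cleaner in two spots: your one-line argument that $\alpha$ is irrelevant in the overparameterized regime (invertible left-multiplication does not change the constraint set, hence not the minimum-norm solution) replaces the paper's explicit cancellation of $\bLambda_{[N]}^{\pm\alpha}$ inside the trace; and your orthogonality argument for the (a)--(b) cross term (disjoint residue classes mod $N$) is more transparent than the paper's $\bbE_\btheta[\cT_1]=\bzero$. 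Conversely, the paper's abstract lemma is reusable for the noisy-data results, which your blockwise approach would have to redo.
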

We want to emphasize that the generalization errors we obtained in Theorem~\ref{THM:Clean Training} are for the weighted optimization formulation~\eqref{EQ:Weighted LS New} where we have an additional weight matrix $\bLambda_{[N]}^{-\alpha}$ compared to the formulation in~\citet{XiWaRaCh-arXiv20}, even though our results look similar to the previous results of~\citet{BeHsXu-arXiv19} and~\citet{XiWaRaCh-arXiv20}.  Moreover, we kept the weight matrix $\bLambda_{[p]}^{-\beta}$ in the underparameterized regime, which is different from the setup in~\citet{XiWaRaCh-arXiv20}  where the same weight matrix was removed in this regime. The reason for keeping $\bLambda_{[p]}^{-\beta}$ in the underparameterized regime will become more obvious in the case of training with noisy data, as we will see in the next section.

Here are some key observations from the above results, which, we emphasize again, are obtained in the setting where the optimization problems are solved exactly, and the data involved contain no random noise. The conclusion will differ when data contain random noise or when optimization problems cannot be solved exactly.

First, the the weight matrix $\bLambda_{[p]}^{-\beta}$ only matters in the overparameterized regime while $\bLambda_{[N]}^{-\alpha}$ only matters in the underparameterized regime. In the overparameterized regime, the weight $\bLambda_{[p]}^{-\beta}$ forces the inversion procedure to focus on solutions that are biased toward the low-frequency modes. In the underparameterized regime, the matrix $\bLambda_{[N]}^{-\alpha}$ re-weights the frequency content of the ``residual'' (data mismatch) before it is backprojected into the learned parameter $\wh\btheta$. Using the scaling $P=\mu N$ and $p=\nu N$ in the overparameterized regime, and the definition of $c_\gamma$, we can verify that when $\alpha=\beta=0$, the generalization error reduces to
\begin{equation}\label{EQ:Generalization Error alpha0 beta0}
\textstyle	\cE_{0, 0}^0(P, p, N)=1-\dfrac{N}{p}+\dfrac{2N}{p} c_\gamma \sum_{j=p}^{P-1}t_j^{-2\gamma}=1+\dfrac{N}{p}-\dfrac{2N}{p} c_\gamma \sum_{j=0}^{p-1}t_j^{-2\gamma}\,.
\end{equation}
This is given in~\citet[Theorem 1]{XiWaRaCh-arXiv20}. 

Second, when the learning problem is formally determined,  i.e.,  when $p=N$, neither weight matrices play a role when the training data are generated from the true model with no additional random noise and the minimizer can be found exactly.  The generalization error simplifies to
\begin{equation}\label{EQ:Generalization Error p=N} 
\textstyle	\cE_{\alpha, \beta}^0(P, p, N)=2 c_\gamma \sum_{j=p}^{P-1}t_j^{-2\gamma}\,.
\end{equation}
This is not surprising as, in this case, $\Psi$ is invertible (because it is a unitary matrix scaled by the constant $N$). The true solution to $\Psi\btheta=\by$ is simply $\btheta=\Psi^{-1}\by$. The weight matrices in the optimization problem are invertible and therefore do not change the true solution of the problem. The generalization error, in this case, is therefore only due to the Fourier modes that are not learned from the training data, i.e., modes $p$ to $P-1$.

\begin{figure}
	\centering
	\subfloat[singular values of $\bLambda_{[N]}^{-\alpha}\Psi_{[N]\backslash[p]}$ ]{
	\includegraphics[width=0.45\linewidth]{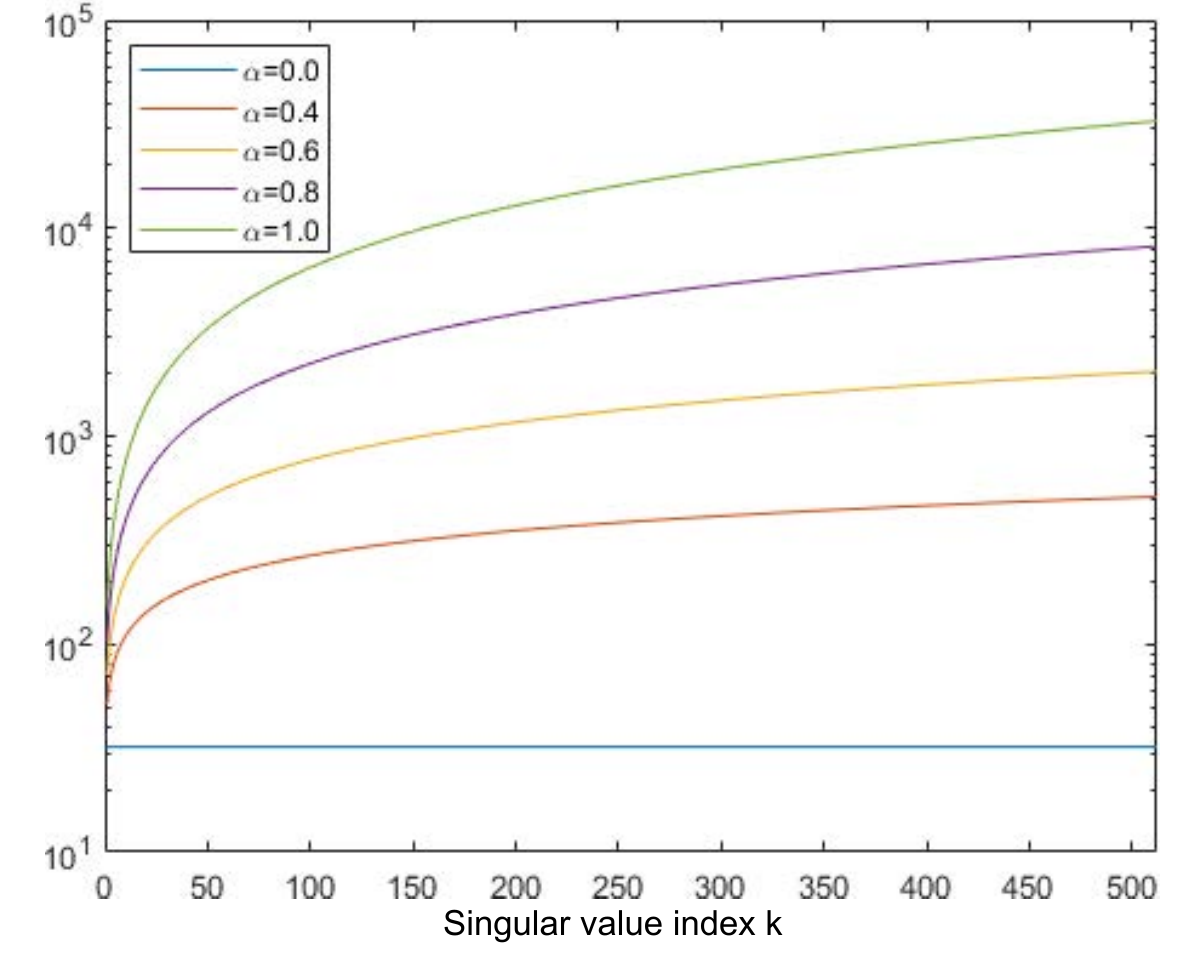}\label{fig:svd}}
	\subfloat[double-descent curves with different $\alpha$]{
	\includegraphics[width=0.45\linewidth]{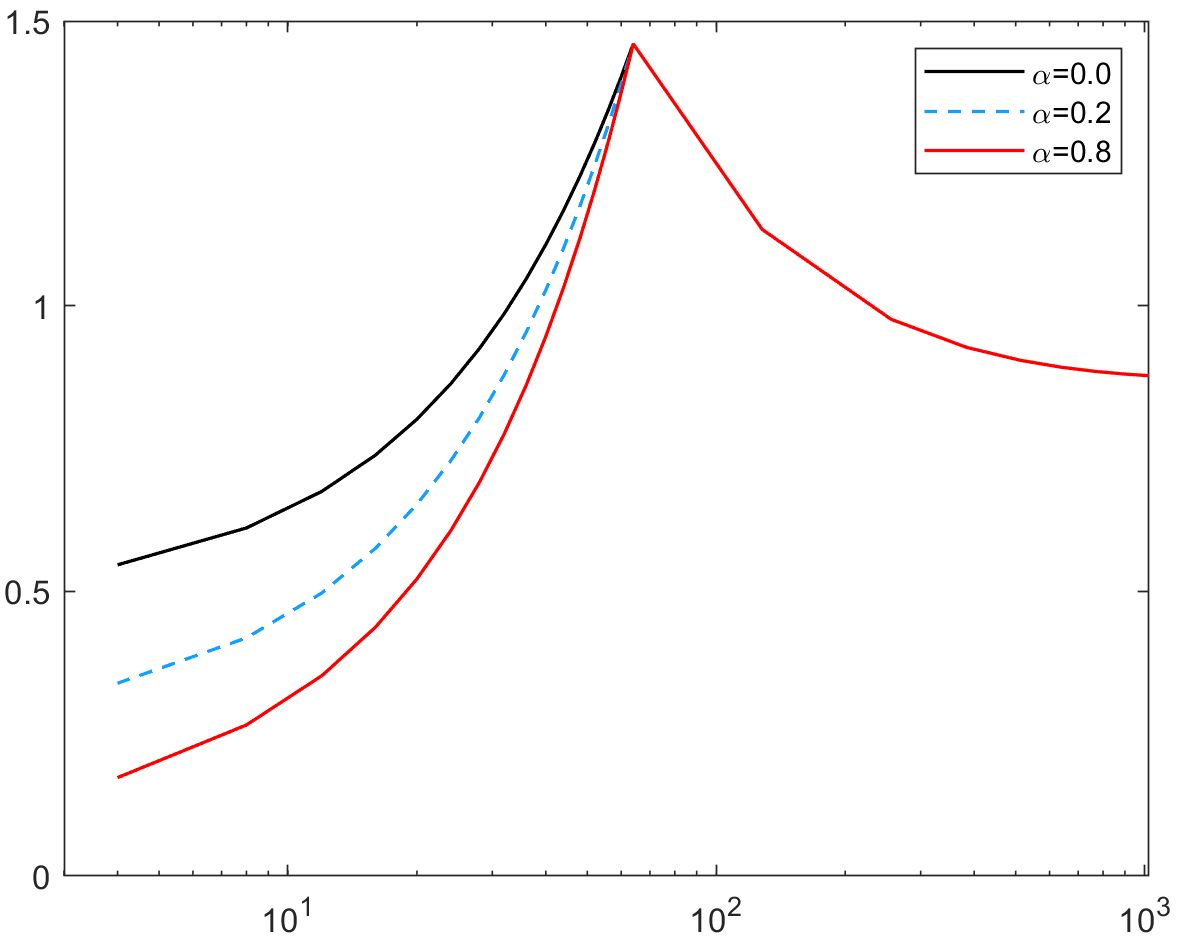}\label{fig:doubledescent}}
	\caption{Left: singular values of $\bLambda_{[N]}^{-\alpha}\Psi_{[N]\backslash[p]}$ for a system with $N=1024$ and $p=512$.  Shown are singular values for $\alpha=0$, $\alpha=0.4$, $\alpha=0.6$, $\alpha=0.8$, and $\alpha=1.0$; Right: double-descent curves for the generalization error $\cE^0_{\alpha,\beta}$ for the cases of $N=64$, $\gamma=0.3$, $\beta=0.3$, and $\alpha=0,\ 0.3,\ 0.8$.}
	\label{FIG:SVD}
\end{figure}
While it is obvious from the formulas for the generalization error that the weight matrix $\bLambda_{[N]}^{-\alpha}$ indeed plays a role in the underparameterized regime, we show in Figure~\ref{fig:svd} the numerical calculation of the singular value decomposition of the matrix $\bLambda_{[N]}^{\alpha}\Psi_{[N]\backslash[p]}$ for the case of $(N, p)=(1024, 512)$. The impact of $\alpha$ can be seen by comparing the singular values to their correspondence in the $\alpha=0$ case (where all the singular values are the same and equal to $\sqrt{N}$). When the system size is large (in this case $N=1024$), even a small $\alpha$ (e.g., $\alpha=0.4$) can significantly impact the result. In Figure~\ref{fig:doubledescent}, we plot the theoretical prediction of~$\cE^0_{\alpha, \beta}$ in Theorem~\ref{THM:Clean Training} to demonstrate the double-descent phenomenon observed in the literature on the RFF model. We emphasize again that in this particular noise-free setup with perfectly solved minimization problem by the pseudoinverse, $\bLambda_{[N]}^{-\alpha}$ only plays a role in the underparameterized regime as can be seen from the double-descent curves.

%


\paragraph{Selecting $p$ to minimize generalization error.} It is clear (and also expected) from~\eqref{EQ:Generalization Error alpha0 beta0} and~\eqref{EQ:Generalization Error p=N}  that, in the cases of $p=N$ or $\alpha=\beta=\gamma=0$, the generalization error decreases monotonically with respect to the number of modes learned in the training process. One should then learn as many Fourier coefficients as possible. When $p\neq N$ or $\alpha,\, \beta\neq 0$, $\cE_{\alpha,\beta}^0(P, p, N)$, for fixed $P$ and $N$, does not change monotonically with respect to $p$ anymore. In such a situation, we can choose the $p$ values that minimize the generalization error and perform learning with these $p$ values.

\section{Error bounds for training with noisy data}
\label{SEC:Weighted Noise}

In this section, we study the more realistic setting of training with noisy data. The common practice is that when we solve the minimization problem in such a case, we should avoid overfitting the model to the data by stopping the optimization algorithm early at an appropriate level of values depending on the noise level in the data, but see~\citet{bartlett2020benign,li2021towards} for some analysis in the direction of ``benign overfitting''. When training with noisy data, the impact of the weight matrices $\bLambda_{[N]}^{-\alpha}$ and $\bLambda_{[p]}^{-\beta}$ on the generalization error becomes more obvious. In fact, both weight matrices play non-negligible roles in the overparameterized and the underparameterized regimes, respectively.

We start with the circumstance where we still match the data perfectly for each realization of the noise in the data. The result is summarized as follows.
\begin{lemma}[Training with noisy data: exact error]\label{THM:Noisy Training}
Under the assumptions {\rm ({\bf A-I})}-{\rm ({\bf A-III})}, the generalization error $\cE_{\alpha,\beta}^\delta(P, p, N)$ is given as
\[
	\cE_{\alpha,\beta}^\delta(P, p, N) = \cE_{\alpha,\beta}^0(P, p, N) + \cE_{\rm noise}(P, p, N)\,,
\]
where $\cE_{\alpha,\beta}^0(P, p, N)$ is the generalization error from training with noise-free data given in Theorem~\ref{THM:Clean Training} and $\cE_{\rm noise}(P, p, N)$ is the error due to noise. The error due to noise and the variance of the generalization error with respect to noise are respectively
\begin{equation}\label{EQ:Noise Error OP}
\cE_{\rm noise}(P, p, N) =\sigma^2 \dsum_{k=0}^{N-1}\dfrac{\sum_{\eta=0}^{\nu-1}t_{k+N\eta}^{-4\beta} }{\Big[\sum_{\eta=0}^{\nu-1}t_{k+N\eta}^{-2\beta}\Big]^2},\  	{\rm Var}_\delta\Big(\bbE_\btheta[\|\btheta^\delta-\btheta\|_2^2]\Big) = \dfrac{2\sigma^4}{N^2} \dsum_{k=0}^{N-1}\dfrac{\Big[\sum_{\eta=0}^{\nu-1}t_{k+N\eta}^{-4\beta} \Big]^2}{\Big[\sum_{\eta=0}^{\nu-1}t_{k+N\eta}^{-2\beta}\Big]^4}.
\end{equation}
in the overparameterized regime~\eqref{EQ:Para OP}, and
\begin{equation}\label{EQ:Noise Error UP}
\begin{array}{rcl}
\textstyle \cE_{\rm noise}(P, p, N) &=&\sigma^2\Big(\dfrac{2p-N}{N}+\dsum_{j=0}^{N-p-1}\dfrac{\wt e_{jj}^{(N)}}{\Sigma_{jj}^2}\Big),\ \ p>N/2, \\
	{\rm Var}_\delta\Big(\bbE_\btheta[\|\btheta^\delta-\btheta\|_2^2]\Big) &=& 2\sigma^4\Big(\dfrac{2p-N}{N^2} + \dsum_{i,j=0}^{N-p-1}\dfrac{\wt e_{ij}^{(N)}\wt e_{ji}^{(N)}}{\Sigma_{ii}^2 \Sigma_{jj}^2}\Big),
\end{array}
\end{equation}
in the underparameterized regime~\eqref{EQ:Para UP}. 
\end{lemma}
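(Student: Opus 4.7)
The plan is to leverage the additive decomposition already displayed just before the lemma,
\[
\wh\btheta_p^\delta = \wh\btheta_p + \bA\bdelta, \qquad \bA := \bLambda_{[p]}^{-\beta}\bigl(\bLambda_{[N]}^{-\alpha}\Psi_{[N\times p]}\bLambda_{[p]}^{-\beta}\bigr)^{+}\bLambda_{[N]}^{-\alpha},
\]
and reduce everything to traces of $\bA^*\bA$. Expanding the squared norm $\|\wh\btheta_p^\delta - \btheta\|_2^2 = \|\wh\btheta_p - \btheta\|_2^2 + 2\,\mathrm{Re}\langle \wh\btheta_p - \btheta,\bA\bdelta\rangle + \|\bA\bdelta\|_2^2$ and taking the joint expectation, the cross term vanishes by independence ({\bf A-II}) and the mean-zero property ({\bf A-III}); the first term is $\cE^0_{\alpha,\beta}$ by Theorem~\ref{THM:Clean Training}; and the third equals $\sigma^2\,\mathrm{tr}(\bA^*\bA)$ by the standard identity $\bbE[\bdelta^* M \bdelta] = \sigma^2\,\mathrm{tr}(M)$ for Gaussian $\bdelta$ with covariance $\sigma^2\bI$. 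This gives the decomposition $\cE^\delta_{\alpha,\beta} = \cE^0_{\alpha,\beta} + \cE_{\rm noise}$ with $\cE_{\rm noise} = \sigma^2\,\mathrm{tr}(\bA^*\bA)$. For the $\bdelta$-variance of $\bbE_\btheta[\|\wh\btheta^\delta-\btheta\|_2^2]$ I would invoke the quadratic-form identity $\mathrm{Var}(\bdelta^* M \bdelta) = 2\sigma^4\,\mathrm{tr}(M^2)$ for Gaussian $\bdelta$ and Hermitian $M$, applied with $M = \bA^*\bA$, reducing the variance to $2\sigma^4\,\mathrm{tr}((\bA^*\bA)^2)$. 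The rest is explicit evaluation of $\mathrm{tr}(\bA^*\bA)$ and $\mathrm{tr}((\bA^*\bA)^2)$.

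In the overparameterized regime, $p = \nu N$ forces $\Psi_{[N\times p]}$ to consist of $\nu$ horizontal copies of the DFT matrix $\bF$ because $(\Psi)_{j,k}$ is $N$-periodic in $k$. Using the right pseudoinverse $\bM^+ = \bM^*(\bM\bM^*)^{-1}$ with $\bM := \bLambda_{[N]}^{-\alpha}\Psi_{[N\times p]}\bLambda_{[p]}^{-\beta}$, a direct calculation gives $\bM\bM^* = \bLambda_{[N]}^{-\alpha}\bF\,\bS\,\bF^*\bLambda_{[N]}^{-\alpha}$ with $\bS = \diag(s_0,\ldots,s_{N-1})$ and $s_k := \sum_{\eta=0}^{\nu-1}t_{k+N\eta}^{-2\beta}$. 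The flanking $\bLambda_{[N]}^{\pm\alpha}$ factors inside $\bA$ then cancel (which is also why $\cE^0$ is $\alpha$-independent in this regime), and using $\bF^*\bF = N\bI$ one finds that $\bA^*\bA$ is unitarily equivalent to the diagonal matrix $\diag\!\bigl(r_k/(Ns_k^2)\bigr)$ with $r_k := \sum_{\eta=0}^{\nu-1}t_{k+N\eta}^{-4\beta}$. Taking the trace of this diagonal (or of its square) yields~\eqref{EQ:Noise Error OP} immediately.

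The main obstacle is the underparameterized regime, where $\bM^+ = (\bM^*\bM)^{-1}\bM^*$ is a left pseudoinverse and $\bM^*\bM = \bLambda_{[p]}^{-\beta}\bM_1^*\bLambda_{[N]}^{-2\alpha}\bM_1\bLambda_{[p]}^{-\beta}$ with $\bM_1 := \Psi_{[N\times p]}$ no longer diagonalizes in the Fourier basis because of the weight $\bLambda_{[N]}^{-2\alpha}$. My plan is to exploit the fact that the columns of $\bM_1 = \bF_{[N\times p]}$ together with $\bM_2 := \Psi_{[N]\backslash[p]} = \bF_{[N]\backslash[p]}$ form the full DFT $\bF$, and to apply a Schur-complement block inversion to the $N\times N$ Gram matrix of $[\bLambda_{[N]}^{-\alpha}\bM_1,\,\bLambda_{[N]}^{-\alpha}\bM_2]$, inverting first the $(N-p)\times(N-p)$ lower-right block. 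Since that lower-right block is precisely the Gram matrix whose SVD is $\bU\bSigma\bV^* = \bLambda_{[N]}^\alpha\Psi_{[N]\backslash[p]}$, the Schur complement yields $(\bM_1^*\bLambda_{[N]}^{-2\alpha}\bM_1)^{-1}$ as an $\alpha$-independent leading part (from $\bM_1^*\bM_1 = N\bI_p$) plus an SVD-dependent rank-$(N-p)$ correction built out of $\bU, \bSigma$. Substituting into $\mathrm{tr}(\bA^*\bA)$ and using the unweighted DFT orthogonality $\bM_1^*\bM_2 = 0$, the $\alpha$-independent part contributes a Parseval-type trace evaluating to $\frac{2p-N}{N}$ (this is where the condition $p>N/2$ enters, ensuring this contribution has the claimed sign), while the correction becomes $\sum_{j=0}^{N-p-1}\wt e_{jj}^{(N)}/\Sigma_{jj}^2$, giving~\eqref{EQ:Noise Error UP}. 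The same block decomposition applied to $\mathrm{tr}((\bA^*\bA)^2)$ produces the diagonal piece $(2p-N)/N^2$ together with the off-diagonal SVD double sum $\sum_{i,j=0}^{N-p-1}\wt e_{ij}^{(N)}\wt e_{ji}^{(N)}/(\Sigma_{ii}^2\Sigma_{jj}^2)$, which after the $2\sigma^4$ factor yields the variance formula. The careful bookkeeping of the SVD through the Schur complement is the delicate part; the remaining manipulations are routine linear algebra.
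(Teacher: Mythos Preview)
Your overall strategy matches the paper's: the additive split $\wh\btheta_p^\delta=\wh\btheta_p+\bA\bdelta$, the vanishing of the cross term by independence, the identification $\cE_{\rm noise}=\sigma^2\trace{\bA^*\bA}$, and the Gaussian fourth-moment identity ${\rm Var}(\bdelta^*M\bdelta)=2\sigma^4\trace{M^2}$ are exactly what the paper uses (its Lemma~\ref{LMMA:Error General Noise}). Your treatment of the overparameterized regime---the $N$-periodicity of $\Psi$, the cancellation of the $\bLambda_{[N]}^{\pm\alpha}$ factors, and the reduction to $\trace{\bLambda_{\Pi_1,2\beta}^{-2}\bLambda_{\Pi_1,4\beta}}$---is also the paper's computation, just phrased in slightly different language.

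Where your plan needs correction is the underparameterized sketch. The lower-right block of the Gram matrix of $[\bLambda_{[N]}^{-\alpha}\bM_1,\bLambda_{[N]}^{-\alpha}\bM_2]$ is $\bM_2^*\bLambda_{[N]}^{-2\alpha}\bM_2$, \emph{not} the Gram matrix associated with $\bU\bSigma\bV^*=\bLambda_{[N]}^{\alpha}\Psi_{[N]\backslash[p]}$, which is $\bM_2^*\bLambda_{[N]}^{+2\alpha}\bM_2=\bV\bSigma^2\bV^*$. The sign of the $\alpha$-exponent is opposite, so inverting that block does not connect you to the given SVD. The paper bridges this gap via the unitarity identity
\[
\bigl(\Psi_{[N]}^*\bLambda_{[N]}^{-2\alpha}\Psi_{[N]}\bigr)^{-1}=\tfrac{1}{N^2}\,\Psi_{[N]}^*\bLambda_{[N]}^{+2\alpha}\Psi_{[N]},
\]
and then compares two block representations of this inverse (one by Schur complement on the upper-left block $\bP=\bM_1^*\bLambda^{-2\alpha}\bM_1$, the other by the explicit formula with $+2\alpha$). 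That comparison is what produces the identity in Lemma~\ref{LMMA:Property of Psi}(ii) expressing $\bXi^*\bXi$ in terms of $\Psi_{[N]\backslash\bbT}^*\bLambda_{[N]}^{2\alpha}\Psi_{[N]\backslash\bbT}$. Separately, in the main computation the paper does not apply the Schur complement directly to $\trace{\bA^*\bA}$; it first sandwiches by $\tfrac{1}{N^2}(\Psi_\bbT\Psi_\bbT^*+\Psi_{[N]\backslash\bbT}\Psi_{[N]\backslash\bbT}^*)=\bI_{[N]}$ (equation~\eqref{EQ:Inv}), after which DFT orthogonality $\Psi_{[N]\backslash\bbT}^*\Psi_\bbT=0$ kills the cross traces and isolates the $\tfrac{p}{N}$ term and the $(N-p)\times(N-p)$ piece to which Lemma~\ref{LMMA:Property of Psi}(ii) is applied. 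Your ``$\alpha$-independent leading part from $\bM_1^*\bM_1=N\bI_p$'' is not quite what happens: $(\bM_1^*\bLambda^{-2\alpha}\bM_1)^{-1}$ itself is fully $\alpha$-dependent, and the $\tfrac{2p-N}{N}$ emerges only after both the sandwich trick and the sign-flip identity have been used. Once you insert the unitarity step, your plan and the paper's coincide.
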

The proof of this lemma is documented in Appendix~\ref{SUBSEC:ProofNoise}. Note that due to the assumption that the noise $\bdelta$ and the coefficient $\btheta$ are independent, the noise-averaged generalization errors are split exactly into two separate parts: the part due to $\btheta$ and the part due to $\bdelta$. The coupling between them disappears. This also leads to the fact that the variance of the generalization error with respect to noise only depends on the noise distribution (instead of both the noise variance and the $\btheta$ variance). 

For the impact of noise on the generalization error, we see again that the impact of $\bLambda_{[P]}^{-\beta}$ is only seen in the overparameterized regime while that of $\bLambda_{[N]}^{-\alpha}$ is only seen in the underparameterized regime. This happens because we assume that we can solve the optimization problem exactly for each realization of the noisy data. In the overparameterized regime, when $\beta=0$, we have that $\cE_{\rm noise}=N\sigma^2/p$ as expected. In such a case, the variance reduces to ${\rm Var}_\delta(\bbE_\btheta[\|\btheta^\delta-\btheta\|_2^2])=2N\sigma^4/p^2$. In the underparameterized regime, when $\alpha=0$, we have that $\cE_{\rm noise}= p\sigma^2/N$. The corresponding variance reduces to ${\rm Var}_\delta(\bbE_\btheta[\|\btheta^\delta-\btheta\|_2^2])=2p\sigma^4/N^2$. 

In the limit when $p\to N$, that is, in the formally determined regime, the mean generalization error due to random noise (resp. the variance of the generalization error with respect to noise) converges to the same value $\cE_{\rm noise}(P, p, N) =\sigma^2$ (resp.~${\rm Var}_\delta(\bbE_\btheta[\|\btheta^\delta-\btheta\|_2^2])=2\sigma^4/N$) from both the overparameterized and the underparameterized regimes. This is the classical result in statistical learning theory~\citep{KaSo-Book05}.

The explicit error characterization above is based on the assumption that we solve the optimization exactly by minimizing the mismatch to $0$ in the learning process. In practical applications,  it is often the case that we stop the minimization process when the value of the loss function reaches a level that is comparable to the size of the noise in the data (normalized by the size of the target function, for instance). We now present the generalization error bounds for such a case.

\begin{theorem}[Training with noisy data: error bounds]\label{THM:Noisy Training Bounds}
In the same setup as Lemma~\ref{THM:Noisy Training}, for fixed $N$, we have the following general bound on the generalization error when $p$ is sufficiently large: 
\[
	 \cE_{\alpha,\beta}^\delta(P, p, N)  \lesssim p^{-2\wh\alpha} \bbE_{\bdelta}[\|\bdelta\|_{2,\bLambda_{[N]}^{-\alpha}}^2] + p^{-2\beta}\bbE_{\btheta}[\|\btheta\|_{2,\bLambda_{[p]}^{-\beta}}^2]\,,
\]
where $\wh\alpha:=\alpha+1/2$ (resp $\wh \alpha:=\alpha$) in the overparameterized (resp. underparameterized) regime. When $\wh\alpha\ge 0$, the error decreases monotonically with respect to $p$. When $\wh\alpha<0$, the bound is minimized by selecting
\begin{equation}\label{EQ:p Scaling}
	p \sim \Big(\bbE_{\bdelta}[\|\bdelta\|_{2,\bLambda_{[N]}^{-\alpha}}^2]^{-1} \bbE_\btheta[\|\btheta\|_{2,\bLambda_{[p]}^{-\beta}}^2]\Big)^{\frac{1}{2(\beta-\wh\alpha)}},
\end{equation}
in which case we have
\begin{equation} \label{EQ:Error Bound Noise UP}
 \cE_{\alpha,\beta}^\delta(P, p, N) \lesssim  \bbE_{\btheta}[\|\btheta\|_{2,\bLambda_{[p]}^{-\beta}}^2]^{\frac{-2\wh\alpha}{2(\beta-\wh\alpha)}} \bbE_{\bdelta}[\|\bdelta\|_{2,\bLambda_{[N]}^{-\alpha}}^2]^{\frac{2\beta}{2(\beta-\wh\alpha)}}.
\end{equation}
\end{theorem}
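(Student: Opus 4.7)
}

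The plan is to start from the exact decomposition
\[
  \cE_{\alpha,\beta}^\delta(P,p,N)=\cE_{\alpha,\beta}^0(P,p,N)+\cE_{\rm noise}(P,p,N)
\]
furnished by Lemma~\ref{THM:Noisy Training} and bound the two pieces separately against the two weighted norms appearing on the right-hand side. For the noise-free term I would use the explicit formulas in Theorem~\ref{THM:Clean Training}. In the overparameterized regime, a direct asymptotic expansion of
$\sum_{\eta=0}^{\nu-1}t_{k+N\eta}^{-2\gamma}$ and $\sum_{\eta=0}^{\nu-1}t_{k+N\eta}^{-2\beta}$ (treating them as Riemann sums in $\eta$ when the corresponding exponent is below $1/2$, as geometric tails otherwise) shows that the leading contribution of $\cE_{\alpha,\beta}^0$ for large $p=\nu N$ behaves like $p^{-2\beta}\cdot c_\gamma\sum_{k\ge p}t_k^{-2\gamma}$ up to a constant, which by the definition of $\bbE_\btheta[\|\btheta\|_{2,\bLambda_{[p]}^{-\beta}}^2]=c_\gamma\sum_{k=0}^{p-1}t_k^{-2\gamma-2\beta}$ can be absorbed into $p^{-2\beta}\bbE_\btheta[\|\btheta\|_{2,\bLambda_{[p]}^{-\beta}}^2]$. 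The underparameterized case is analogous, using the singular value decomposition $\bU\bSigma\bV^*$ of $\bLambda_{[N]}^{\alpha}\Psi_{[N]\setminus[p]}$ to bound the trailing sum $c_\gamma\sum_{j=N}^{P-1}t_j^{-2\gamma}$ together with the bias term $N\sum_{i,j}\wt e_{ij}^{(N)}\wh e_{ji}^{(N)}/(\Sigma_{ii}\Sigma_{jj})$ by the same weighted norm.

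For the noise term, I would treat the two regimes separately using the exact formulas in~\eqref{EQ:Noise Error OP} and~\eqref{EQ:Noise Error UP}. In the overparameterized regime $\cE_{\rm noise}$ reduces, after the same asymptotic analysis of $\sum_\eta t_{k+N\eta}^{-s}$, to a quantity of order $p^{-1}\sigma\operatorname{tr}(\bLambda_{[N]}^{-2\alpha})=p^{-2\widehat\alpha}\bbE_\bdelta[\|\bdelta\|_{2,\bLambda_{[N]}^{-\alpha}}^2]$ with $\widehat\alpha=\alpha+1/2$ (the extra factor $p^{-1}$ is the classical overparameterized noise-averaging factor arising from the pseudoinverse of a random-feature submatrix). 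In the underparameterized regime, I bound $\sum_j\wt e_{jj}^{(N)}/\Sigma_{jj}^2$ by the trace of $(\bLambda_{[N]}^{\alpha}\Psi_{[N]\setminus[p]})^{-\ast}\bLambda_{[N]}^{2\alpha}(\bLambda_{[N]}^{\alpha}\Psi_{[N]\setminus[p]})^{-1}$, which, because $\Psi$ is (a scaled submatrix of) the unitary DFT, reduces to $\sigma p^{-2\alpha}\operatorname{tr}(\bLambda_{[N]}^{-2\alpha})$ up to constants and gives $p^{-2\widehat\alpha}\bbE_\bdelta[\|\bdelta\|_{2,\bLambda_{[N]}^{-\alpha}}^2]$ with $\widehat\alpha=\alpha$. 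Summing the two bounds yields the advertised inequality.

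Once the bound
\[
  \cE_{\alpha,\beta}^\delta(P,p,N)\lesssim A\,p^{-2\widehat\alpha}+B\,p^{-2\beta},\qquad A:=\bbE_\bdelta[\|\bdelta\|_{2,\bLambda_{[N]}^{-\alpha}}^2],\ B:=\bbE_\btheta[\|\btheta\|_{2,\bLambda_{[p]}^{-\beta}}^2],
\]
is in hand, the monotonicity statement is immediate since both exponents are nonnegative when $\widehat\alpha\ge 0$. When $\widehat\alpha<0$, the two terms have opposite monotonicity, so I would set $\frac{d}{dp}(Ap^{-2\widehat\alpha}+Bp^{-2\beta})=0$, solve for $p\sim (B/A)^{1/(2(\beta-\widehat\alpha))}$, which is precisely~\eqref{EQ:p Scaling}, and substitute back to obtain~\eqref{EQ:Error Bound Noise UP} after observing that at the optimum the two summands are comparable (their ratio is $\beta/(-\widehat\alpha)$).

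The main obstacle is the first paragraph of the bounding argument: extracting a clean $p^{-2\widehat\alpha}$ scaling from the singular-value-weighted sum in the underparameterized regime and a clean $p^{-2\beta}$ scaling from the nested sums in the overparameterized regime. These require combining (i) the Riemann-sum asymptotics of $\sum_\eta t_{k+N\eta}^{-s}$ in different ranges of $s$ with (ii) uniform control in $k$ to justify exchanging the outer sum with the asymptotics, and (iii) in the underparameterized regime, a lower bound on the smallest singular value of $\bLambda_{[N]}^{\alpha}\Psi_{[N]\setminus[p]}$ of order $\sqrt{N}\,p^{-\alpha}$, which is the fact visualized in Figure~\ref{fig:svd}. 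Once these technical ingredients are in place, the remainder of the proof is the elementary Lagrange-type optimization above.
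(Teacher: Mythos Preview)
Your plan takes a genuinely different route from the paper. You propose to start from the \emph{exact} formulas for $\cE_{\alpha,\beta}^0$ and $\cE_{\rm noise}$ in Theorem~\ref{THM:Clean Training} and Lemma~\ref{THM:Noisy Training} and then squeeze the desired power-law bounds out of those nested sums and the SVD of $\bLambda_{[N]}^{\alpha}\Psi_{[N]\setminus[p]}$. The paper does none of that: it never revisits the exact formulas. It writes the pointwise decomposition
\[
\wh\btheta^\delta-\btheta=\begin{pmatrix}\Psi_\bbT^+\\ \bO\end{pmatrix}\bdelta+\Big(\bI_{[P]}-\begin{pmatrix}\Psi_\bbT^+\Psi\\ \bO\end{pmatrix}\Big)\btheta,
\]
applies the triangle inequality, and bounds each piece by an operator norm in a weighted space times the corresponding weighted vector norm: $\|\Psi_\bbT^+\|_{2,\bLambda_{[N]}^{-\alpha}\to 2}\sim p^{-\wh\alpha}$ and $\|\bI-\Psi_\bbT^+\Psi\|_{2,\bLambda_{[P]}^{-\beta}\to 2}\sim p^{-\beta}$. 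After taking expectations this gives the bound immediately, and the optimization in $p$ is the same elementary step you describe. The paper's argument is therefore much shorter and sidesteps entirely the Riemann-sum asymptotics and the singular-value lower bounds you flag as the main obstacle.

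There is also a specific gap in your overparameterized noise step. The exact formula~\eqref{EQ:Noise Error OP} depends only on $\beta$, not on $\alpha$; your ``asymptotic analysis of $\sum_\eta t_{k+N\eta}^{-s}$'' will therefore produce a $\beta$-dependent scaling, not the $\alpha$-dependent quantity $p^{-1}\operatorname{tr}(\bLambda_{[N]}^{-2\alpha})$ you write (and the equality you record between that and $p^{-2\wh\alpha}\bbE_\bdelta[\|\bdelta\|_{2,\bLambda_{[N]}^{-\alpha}}^2]$ drops a factor $p^{-2\alpha}$). The reason $\alpha$ appears in the theorem's bound is not that it shows up in $\cE_{\rm noise}$; it appears because the paper deliberately inserts and removes $\bLambda_{[N]}^{-\alpha}$ when passing to an operator-norm bound, obtaining an inequality valid for every $\alpha$ rather than an asymptotic equality. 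Your route cannot recover that $\alpha$-dependence by computing the exact expression more carefully; you would have to imitate the operator-norm trick anyway, at which point the detour through Theorem~\ref{THM:Clean Training} and Lemma~\ref{THM:Noisy Training} becomes unnecessary.
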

Note that the calculations in~\eqref{EQ:p Scaling} and~\eqref{EQ:Error Bound Noise UP} are only to the leading order. We neglected the contribution from the term involving $\gamma$. Also, the quantities in expectations can be simplified. We avoided doing so to make these quantities easily recognizable as they are the main quantities of interests.

In the underparameterized regime, that is, the regime of reconstruction, classical statistical inversion theory shows that it is statistically beneficial to introduce the weight matrix $\bLambda_{[N]}^{-\alpha}$ that is related to the covariance matrix of the noise in the data~\citep{KaSo-Book05} (see also~\citet{BaRe-IP09} for an adaptive way of adjusting the weight matrix for some specific applications). Our result here is consistent with the classical result as we could see that if we take $\bLambda_{[N]}^{-\alpha}$ to be the inverse of the covariance matrix for the noise distribution, the size of the noise contribution in the generalization error is minimized.

Theorem~\ref{THM:Noisy Training Bounds} shows that, in the case of training with noisy data, the $\alpha$ parameter can be tuned to reduce the generalization error of the learning process just as the $\beta$ parameter in the weight matrix $\bLambda_{[P]}^{-\beta}$. Moreover, this result can serve as a guidance on the selection of the number of features to be pursued in the training process to minimize the generalization error, depending on the level of random noise in the training data as well as the 
regime of the problem.

\section{Extension to general feature regression}
\label{SEC:Generalization}

The results in the previous sections, even though are obtained for the specific form of Fourier feature regression,  also hold in more general settings. Let $f_\btheta$ be a general random feature model of the form
\begin{equation}
\textstyle	f_\btheta(x)=\sum_{k =0}^{P-1} \theta_k \varphi_k(x), \ \ x\in X,
\end{equation}
constructed from a family of orthonormal features $\{\varphi_k\}_{k=0}^P$ in $L^2(X)$. Let $\Psi$ be the feature matrix constructed from the dataset $\{x_j, y_j\}_{j=0}^{N-1}$:  
\[
	(\Psi)_{jk}=\varphi_k(x_j), \ \ 0\le j, k\le P-1\,.
\]
We can then apply the same weighted optimization framework~\eqref{EQ:Weighted LS New} to this general model.  As in the case of RFF model,  the data in the learning process allows decomposition
\[
	\by^\bdelta = \Psi_{[N]\times[p]} \btheta_{[p]}+ \Psi_{[N]\times([P]\backslash[p])}\btheta_{[P]\backslash[p]} + \bdelta\,.
\]
The learning process only aims to learn the first $p$ modes, i.e., $\btheta_{[p]}$,  indicating that the effective noise that we backpropagate into $\btheta_{[p]}$ in the learning process is $\Psi_{[N]\times([P]\backslash[p])}\btheta_{[P]\backslash[p]} + \bdelta$. The frequency contents of this effective noise can come from the true noise $\bdelta$, the part of Fourier modes that we are not learning (i.e., $\btheta_{[P]\backslash[p]}$), or even the feature matrix $\Psi_{[N]\times([P]\backslash[p])}$. For the RFF model, the feature matrix $\Psi_{[N]\times [N]}$ is unitary after being normalized by $1/\sqrt{N}$. Therefore,  all its singular values are homogeneously $\sqrt{N}$. For learning with many other features in practical applications, we could have feature matrices $\Psi_{[N]\times [N]}$ with fast-decaying singular values. This, on one hand, means that $\Psi_{[N]\times([P]\backslash[p])}\btheta_{[P]\backslash[p]}$ will decay even faster than $\btheta_{[P]\backslash[p]}$, making its impact on the learning process smaller. On the other hand, $\Psi_{[N]\times [N]}$ having fast-decaying singular values will make learning $\btheta_{[p]}$ harder (since it is less stable).

The impact of the weighting scheme on the generalization error as an expectation over $\btheta$ and $\bdelta$ (defined in~\Cref{THM:Noisy Training}) is summarized in~\Cref{THM:Kernel Learning}.  Its proof is in Appendix~\ref{SUBSEC:ProofKernel}. Note that here we do not assume any specific structure on the distributions of $\btheta$ and $\bdelta$ except there independence. 
\begin{theorem}\label{THM:Kernel Learning}
Let $\Psi=\bU\bSigma\bV^*$ be the singular value decomposition of $\Psi$. Under the assumptions {\rm ({\bf A-I})}-{\rm ({\bf A-III})}, assume further that $\Sigma_{kk}\sim t_k^{-\zeta}$ for some $\zeta>0$.
Then the generalization error of training, using the weighted optimization scheme~\eqref{EQ:Weighted LS New} with $\bLambda_{[N]}^{-\alpha}$ replaced with $\bLambda_{[N]}^{-\alpha}\bU^*$ and $\bLambda_{[P]}^{-\beta}$ replaced with $\bV \bLambda_{[P]}^{-\beta}$, satisfies
\[
  \cE_{\alpha,\beta}^\delta(P, p, N) \lesssim p^{2(\zeta-\alpha)} \bbE_{\bdelta}[\|\bdelta\|_{2,\bLambda_{[N]}^{-\alpha}}^2] + p^{-2\beta}\bbE_{\btheta}[\|\btheta\|_{2,\bLambda_{[p]}^{-\beta}}^2]\,,
\]
in the asymptotic limit $p\sim N\sim P\to\infty$.
The bound is minimized, as a function of $p$, when 
\begin{equation}\label{EQ:p critical general}
	p \sim ( \bbE_{\bdelta}[\|\bdelta\|_{2,\bLambda_{[N]}^{-\alpha}}^2]^{-1}\bbE_\btheta[\|\btheta\|_{2,\bLambda_{[p]}^{-\beta}}]^2)^{\frac{1}{2(\zeta+\beta-\alpha)}},
\end{equation}
in which case we have that
\begin{equation} 
 \cE_{\alpha,\beta}^\delta(P, p, N) \lesssim \bbE_\btheta[\|\btheta\|_{2,\bLambda_{[p]}^{-\beta}}^2]^{\frac{\zeta-\alpha}{(\zeta+\beta-\alpha)}} \bbE_{\bdelta}[\|\bdelta\|_{2,\bLambda_{[N]}^{-\alpha}}^2]^{\frac\beta{(\zeta+\beta-\alpha)}}.
\end{equation}
\end{theorem}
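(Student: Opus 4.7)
The plan is to reduce the general-feature problem to an effectively diagonal one via the SVD, after which the analysis parallels that of Theorem~\ref{THM:Noisy Training Bounds} but with a $p$-dependent singular-value scaling in place of the constant $\Sigma_{kk}=\sqrt{N}$ of the RFF setting. Writing $\Psi = \bU\bSigma\bV^*$ and changing coordinates to $\tilde\btheta = \bV^*\btheta$, $\tilde\bdelta = \bU^*\bdelta$, the substitutions $\bLambda_{[N]}^{-\alpha}\to\bLambda_{[N]}^{-\alpha}\bU^*$ and $\bLambda_{[P]}^{-\beta}\to\bV\bLambda_{[P]}^{-\beta}$ collapse the inner operator of the weighted least squares to the diagonal $\bLambda_{[N]}^{-\alpha}\bSigma\bLambda_{[P]}^{-\beta}$. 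Since $\bU,\bV$ are unitary, $\|\wh\btheta-\btheta\|_2=\|\wh{\tilde\btheta}-\tilde\btheta\|_2$, so the entire analysis can be carried out in the rotated coordinates, where the structure mimics the Fourier diagonalization exploited in the RFF case.

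Next, mirroring Lemma~\ref{THM:Noisy Training}, I would split $\wh\btheta_p^\delta-\btheta = (\wh\btheta_p-\btheta) + (\wh\btheta_p^\delta-\wh\btheta_p)$, taking expectations to obtain
\[
  \cE_{\alpha,\beta}^\delta \lesssim \cE_{\alpha,\beta}^0 + \cE_{\mathrm{noise}}.
\]
The first (bias) term is the noise-free generalization error, which in the rotated frame reduces, to leading order in $p\sim N\sim P\to\infty$, to the tail $\sum_{k\ge p}|\tilde\theta_k|^2$; extracting the weight $t_k^{-2\beta}$ from the definition of $\|\btheta\|_{2,\bLambda_{[p]}^{-\beta}}^2$ and using the monotonicity of $t_k$ yields the estimate $t_p^{-2\beta}\bbE_\btheta[\|\btheta\|_{2,\bLambda_{[p]}^{-\beta}}^2] \sim p^{-2\beta}\bbE_\btheta[\|\btheta\|_{2,\bLambda_{[p]}^{-\beta}}^2]$. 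The second (noise) term, which in the diagonal frame takes the form
\[
  \cE_{\mathrm{noise}} = \bbE_{\bdelta}\Bigl[\bigl\|(\bLambda_{[N]}^{-\alpha}\bSigma\bLambda_{[p]}^{-\beta})^{+}\bLambda_{[N]}^{-\alpha}\tilde\bdelta\bigr\|_{2,\bLambda_{[p]}^{-\beta}}^2\Bigr],
\]
is the new ingredient: the pseudoinverse of a diagonal matrix amplifies the $k$th rotated noise coordinate by $\Sigma_{kk}^{-2}\sim t_k^{2\zeta}$, and combining this with the $\bLambda_{[N]}^{-2\alpha}$ weight in front of $\bdelta$ gives the leading-order bound $p^{2(\zeta-\alpha)}\bbE_\bdelta[\|\bdelta\|_{2,\bLambda_{[N]}^{-\alpha}}^2]$, recovering the claimed exponent.

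With the two contributions in hand, the optimal choice of $p$ in \eqref{EQ:p critical general} follows by differentiating $p^{2(\zeta-\alpha)}A + p^{-2\beta}B$ in $p$ and setting the derivative to zero, giving $p^{2(\zeta+\beta-\alpha)} \sim B/A$; substituting back produces the final displayed estimate. The main obstacle will be the noise step: because $\bLambda_{[N]}^{-\alpha}$ and $\bU^*$ do not commute in general, one must be careful when transferring the weighted noise norm through the SVD rotation and combining it with the pseudoinverse. This is where the asymptotic regime $p\sim N\sim P\to\infty$ is essential, since it allows one to treat the weighted expectations as given scalars and absorb sub-leading commutator-type terms into the ``$\lesssim$'' while preserving the clean $p^{2(\zeta-\alpha)}$ scaling dictated by the worst-case singular value at index $k\sim p$.
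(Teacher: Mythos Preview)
Your proposal is essentially the paper's proof: reduce to a diagonal problem via the SVD (the replacements $\bLambda_{[N]}^{-\alpha}\to\bLambda_{[N]}^{-\alpha}\bU^*$, $\bLambda_{[P]}^{-\beta}\to\bV\bLambda_{[P]}^{-\beta}$ in the hypothesis make this exact), split into bias and noise parts via the decomposition of Theorem~\ref{THM:Noisy Training Bounds}, bound each by the worst singular-value scaling at index $\sim p$ (the paper phrases these as operator norms $\|\Psi^+\|_{2,\bLambda_{[N]}^{-\alpha}\to 2}\sim p^{\zeta-\alpha}$ and $\|\bI-\Psi^+\Psi\|_{2,\bLambda_{[P]}^{-\beta}\to 2}\sim p^{-\beta}$, which are exactly your coordinate-wise estimates), and then balance the two terms for the optimal $p$. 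Your closing worry about $\bLambda_{[N]}^{-\alpha}$ and $\bU^*$ not commuting is moot: the theorem's hypothesis has already absorbed $\bU^*$ into the data-side weight, so after rotation the problem is genuinely diagonal and no commutator residue arises.
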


On the philosophical level, the result says that when the learning model is smoothing, that is, when the singular values of $\Psi$ decays fast, we can select the appropriate weight matrix to compensate the smoothing effect so that the generalization error is optimized. The assumption that we have access to the exact form of the singular value decomposition of the feature matrix is only made to trivialize the calculations, and is by no means essential. 
When the optimization algorithm is stopped before perfect matching can be achieve, the fitting from the weighted optimization scheme generated a smoother approximation (with a smaller $p$, according to~\eqref{EQ:p critical general}, than what we would obtain with a regular least-squares minimization). 

Feature matrices with fast-decaying singular values are ubiquitous in applications. In Appendix~\ref{SUBSEC:ProofKernel Applicability}, we provide some discussions on the applicability of this result in understanding general kernel learning~\citep{AmWu-NN99,KaCaFo-ICML18,JeXiEr-NIPS18,OwYo-arXiv18,BoCaPe-PMLR20,CaBoPe-arXiv21} and learning with simplified neural networks.



\section{Concluding remarks}
\label{SEC:Concl}

In this work, we analyzed the impact of weighted optimization on the generalization capability of feature regression with noisy data for the RFF model and generalized the result to the case of feature regression and kernel regression. For the RFF model, we show that the proposed weighting scheme~\eqref{EQ:Weighted LS New} allows us to minimize the impact of noise in the training data while emphasizing the corresponding features according to the \emph{a priori} knowledge we have on the distribution of the features. In general, emphasizing low-frequency features (i.e.,  searching for smoother functions) provide better generalization ability.

While what we analyze in this paper is mainly motivated by the machine learning literature, the problem of fitting models such as the RFF model~\eqref{EQ:RandFour Model} to the observed data is a standard inverse problem that has been extensively studied~\citep{EnHaNe-Book96,Tarantola-Book05}. The main focus of classical inversion theory is on the case when $\Psi_{[N\times p]}$ is at least rank $p$ so that there is a unique least-squares solution to the equation $\Psi\btheta=\by$ for any given dataset $\by$.  This corresponds to the learning problem we described above in the underparameterized regime. 

In general, weighting the least-squares allows us to have an optimization algorithm that prioritize the modes that we are interested in during the iteration process. This can be seen as a preconditioning strategy from the computational optimization perspective.  

It would be of great interest to derive a rigorous theory for the weighted optimization~\eqref{EQ:Weighted LS New} for general non-convex problems (note that while the RFF model itself is nonlinear from input $x$ to output $f_{\btheta}(x)$, the regression problem is linear). Take a general nonlinear model of the form
\begin{equation*}
	F(\bx;\btheta)=\by,
\end{equation*}
where $F$ is nonlinear with respect to $\btheta$. For instance, $F$ could be a deep neural network with $\btheta$ representing the parameters of the network (for instance, the weight matrices and bias vectors at different layers). We formulate the learning problem with a weighted least-squares as
\begin{equation*}
	\wh\btheta=\bLambda_{[p]}^{-\beta}\wh\bw,\ \ \mbox{with}\ \  \wh\bw=\argmin_{\bw}\|\bLambda_{[N]}^{-\alpha} (F(\bx; \bLambda_{[p]}^{-\beta} \bw)-\by )\|_{2}^2\,,
\end{equation*}
where the weight matrices are used to control the smoothness of the gradient of the optimization procedure. The linearized problem for the learning of $\bw$ gives a problem of the form $\Psi \wt\btheta = \wt \by$ with 
\[ 
\Psi= (\bLambda_{[N]}^{s}F(\bx; \bLambda_{[p]}^{\beta} \bw))^* (\bLambda_{[p]}^{\beta})^* (F')^{*}(\bx; \bLambda_{[p]}^{\beta} \bw)  (\bLambda_{[N]}^{\beta})^* ,\,  \wt\by=(\bLambda_{[p]}^{\beta})^*  (F')^{*}(\bx; \bLambda_{[p]}^{\beta} \bw)  (\bLambda_{[N]}^{s})^*\by.
\]
For many practical applications in computational learning and inversion, $F'$ (with respect to $\btheta$) has the properties we need for Theorem~\ref{THM:Kernel Learning} to hold. Therefore, a local theory could be obtained. The question is, can we show that the accumulation of the results through an iterative optimization procedure (e.g.,  a stochastic gradient descent algorithm) does not destroy the local theory so that the conclusions we have in this work would hold globally? We believe a thorough analysis along the lines of the recent work of~\citet{MaYi-arXiv21} is possible with reasonable assumptions on the convergence properties of the iterative process.

\subsubsection*{Acknowledgments}
This work is partially supported by the National Science Foundation through grants DMS-1620396, DMS-1620473, DMS-1913129, and DMS-1913309.  Y.~Yang acknowledges supports from Dr.~Max R\"ossler, the Walter Haefner Foundation and the ETH Z\"urich Foundation.  This work was done in part while Y.~Yang was visiting the Simons Institute for the Theory of Computing in Fall 2021. The authors would like to thank Yuege Xie for comments that helped us correct a mistake in an earlier version of the paper.

\section*{Supplementary Material}\label{SEC:Appendix}
Supplementary material for the paper ``A Generalized Weighted Optimization Method for Computational Learning and Inversion'' is organized as follows.


\appendix

\section{Proof of main results}
\label{SEC:Proof}

We provide here the proofs for all the results that we summarized in the main part of the paper. To simplify the presentation, we introduce the following notations. 
For the feature matrix $\Psi\in\bbC^{N\times P}$, we denote by
\[
	\Psi_\bbT\in\bbC^{N\times p}\ \ \ \mbox{and}\ \ \ \Psi_{\bbT^c} \in\bbC^{N\times (P-p)}
\]
the submatrices of $\Psi$ corresponding to the first $p$ columns and the last $P-p$ columns respectively.  In the underparameterized regime~\eqref{EQ:Para UP}, we will also use the following submatrices
\[
\Psi_{[N]}\in\bbC^{N\times N}\ \ \ \mbox{and}\ \ \ \Psi_{[N]\backslash\bbT}\in\bbC^{N\times (N-p)},
\]
corresponding to the first $N$ columns of $\Psi$ and the last $N-p$ columns of $\Psi_{[N]}$,  respectively. 
We will use $\Psi^*$ to denote the Hermitian transpose of $\Psi$. For a $P\times P$ diagonal matrix $\bLambda$, $\bLambda_\bbT$ ($\equiv \bLambda_{[p]}$) and $\bLambda_{\bbT^c}$ denote respectively the diagonal matrices of sizes $p\times p$ and $(P-p)\times(P-p)$ that contain the first $p$ and the last $P-p$ diagonal elements of $\bLambda$.  Matrix $\bLambda_{[N]}$ is of size $N\times N$ and used to denote the first $N$ diagonal elements of $\bLambda$.  For any column vector $\btheta\in\bbC^{P\times 1}$, $\btheta_{\bbT}$ and $\btheta_{\bbT^c}$ denote respectively the column vectors that contain the first $p$ elements and the last $P-p$ elements of $\btheta$.

To study the impact of the weight matrices $\bLambda_{[N]}^{-\alpha}$ and $\bLambda_{[p]}^{-\beta}$, we introduce the re-scaled version of the feature matrix $\Psi$, denoted by $\Phi$, as
\begin{equation}
	\Phi:=\bLambda_{[N]}^{-\alpha} \Psi \bLambda_{[P]}^{-\beta}\,.
\end{equation}
In terms of the notations above, we have that
\begin{equation}\label{EQ:Feature Matrix Rescale}
	\Phi_\bbT:=\bLambda_{[N]}^{-\alpha}\Psi_\bbT \bLambda_\bbT^{-\beta}\,,\ \ \ \mbox{and}\ \ \ \Phi_{\bbT^c}:=\bLambda_{[N]}^{-\alpha}\Psi_{\bbT^c} \bLambda_{\bbT^c}^{-\beta}\,.
\end{equation}

For a given column vector $\btheta\in\bbC^d$ and a real-valued square-matrix $\bX\in\bbR^{d\times d}$, we denote by 
\[
	\|\btheta\|_{2,\bX}:=\|\bX \btheta\|_{2}=\sqrt{(\bX \btheta)^*\bX \btheta}
\]
the $\bX$-weighted $2$-norm of $\btheta$. We will not differentiate between finite- and infinite-dimensional vectors. In the infinite-dimensional case, we simply understand the $2$-norm as the usual $\ell^2$-norm. 

We first derive a general form for the generalization error for training with $\btheta$ sampled from a distribution with given diagonal covariance matrix. It is the starting point for most of the calculations in this paper. The calculation procedure is similar to that of Lemma 1 of~\citet{XiWaRaCh-arXiv20}. However, our result is for the case with the additional weight matrix $\bLambda_{[N]}^{-\alpha}$.

\subsection{Proofs of \Cref{LMMA:Error General}-\Cref{LMMA:Property of Psi}}\label{sec:lemmas}

\begin{lemma}[General form of $\cE_{\alpha,\beta}^0(P, p, N)$]\label{LMMA:Error General}
 Let $\bK\in\bbC^{P\times P}$ be a diagonal matrix, and $\btheta$ be drawn from a distribution such that
\begin{equation}\label{EQ:Coeff Prop A1}
	\bbE_\btheta[\btheta]=\bzero, \qquad \bbE_\btheta[\btheta\btheta^*]=\bK\,.
\end{equation}
Then the generalization error for training with weighted optimization~\eqref{EQ:Weighted LS New} can be written as
\begin{equation}\label{EQ:Error General}
	\cE_{\alpha,\beta}^0(P, p, N)=\trace{\bK}+\cP_{\alpha,\beta}+\cQ_{\alpha,\beta},
\end{equation}
where
\[
	\cP_{\alpha,\beta}=\trace{\Phi_\bbT^+\Phi_\bbT\bLambda_\bbT^{-2\beta}\Phi_\bbT^+\Phi_\bbT\bLambda_\bbT^{\beta}\bK_\bbT\bLambda_\bbT^{\beta}}-2\trace{\bK_\bbT\Phi_\bbT^+\Phi_\bbT}\,,
\]
and
\[
	\cQ_{\alpha,\beta}=\trace{(\Phi_\bbT^+)^*\bLambda_\bbT^{-2\beta}\Phi_\bbT^+\Phi_{\bbT^c}\bLambda_{\bbT^c}^{\beta} \bK_{\bbT^c}\bLambda_{\bbT^c}^{\beta}\Phi_{\bbT^c}^*}\,,
\]
with $\Phi_\bbT$ and $\Phi_{\bbT^c}$ given in~\eqref{EQ:Feature Matrix Rescale}.
\end{lemma}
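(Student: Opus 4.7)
The plan is to substitute the explicit form of the minimizer into the definition of the generalization error, expand the resulting quadratic form in $\btheta$, and reduce each piece to a trace using the standard identity $\bbE_\btheta[\btheta^* \bM \btheta] = \trace{\bM \bK}$ for any (deterministic) matrix $\bM$, which holds under the hypothesis $\bbE_\btheta[\btheta\btheta^*] = \bK$.

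First I would rewrite the explicit solution using the rescaled feature matrix. Writing $\bLambda_{[N]}^{-\alpha}\Psi_\bbT = \Phi_\bbT \bLambda_\bbT^{\beta}$ and likewise for $\Psi_{\bbT^c}$, and using the noise-free data decomposition $\by = \Psi_\bbT \btheta_\bbT + \Psi_{\bbT^c}\btheta_{\bbT^c}$, the trained vector (extended with zeros in its last $P-p$ coordinates) becomes
\[
	\wh\btheta_p \;=\; \bLambda_\bbT^{-\beta}\Phi_\bbT^+ \Phi_\bbT \bLambda_\bbT^{\beta}\btheta_\bbT \;+\; \bLambda_\bbT^{-\beta}\Phi_\bbT^+ \Phi_{\bbT^c}\bLambda_{\bbT^c}^{\beta}\btheta_{\bbT^c}\,.
\]
Setting $\bA := \bLambda_\bbT^{-\beta}\Phi_\bbT^+\Phi_\bbT\bLambda_\bbT^{\beta}$ and $\bB := \bLambda_\bbT^{-\beta}\Phi_\bbT^+\Phi_{\bbT^c}\bLambda_{\bbT^c}^{\beta}$, the block decomposition of $\wh\btheta_p-\btheta$ yields $(\bA-\bI)\btheta_\bbT + \bB\btheta_{\bbT^c}$ in the first $p$ coordinates and $-\btheta_{\bbT^c}$ in the remaining ones.

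Next I would take the squared $\ell^2$-norm and expectation. Because $\bK$ is diagonal, the off-diagonal block $\bbE_\btheta[\btheta_\bbT\btheta_{\bbT^c}^*]$ vanishes, so the cross term drops and
\[
	\cE_{\alpha,\beta}^0 \;=\; \bbE_\btheta\bigl[\|(\bA-\bI)\btheta_\bbT\|_2^2\bigr] \;+\; \bbE_\btheta\bigl[\|\bB\btheta_{\bbT^c}\|_2^2\bigr] \;+\; \bbE_\btheta\bigl[\|\btheta_{\bbT^c}\|_2^2\bigr]\,.
\]
Applying the trace identity to each piece gives $\trace{(\bA-\bI)^*(\bA-\bI)\bK_\bbT} + \trace{\bB^*\bB\bK_{\bbT^c}} + \trace{\bK_{\bbT^c}}$, and expanding the first trace yields $\trace{\bA^*\bA\bK_\bbT} - 2\Re\trace{\bA\bK_\bbT} + \trace{\bK_\bbT}$.

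The last step is to simplify using that $\Phi_\bbT^+\Phi_\bbT$ is an orthogonal (hence Hermitian) projector and that all diagonal factors $\bLambda_\bbT, \bK_\bbT, \bLambda_{\bbT^c}, \bK_{\bbT^c}$ commute among themselves. Hermiticity gives $\bA^* = \bLambda_\bbT^{\beta}\Phi_\bbT^+\Phi_\bbT\bLambda_\bbT^{-\beta}$, so $\bA^*\bA = \bLambda_\bbT^{\beta}\Phi_\bbT^+\Phi_\bbT\bLambda_\bbT^{-2\beta}\Phi_\bbT^+\Phi_\bbT\bLambda_\bbT^{\beta}$. Plugging this in, cyclicity of trace plus commutativity of the diagonal matrices collapses $\trace{\bA^*\bA\bK_\bbT} - 2\Re\trace{\bA\bK_\bbT}$ into exactly $\cP_{\alpha,\beta}$, and an analogous cyclic rearrangement turns $\trace{\bB^*\bB\bK_{\bbT^c}}$ into $\cQ_{\alpha,\beta}$. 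Adding $\trace{\bK_\bbT}+\trace{\bK_{\bbT^c}}=\trace{\bK}$ yields the claim. The only real subtlety is to verify that $\bA$ itself need not be Hermitian, but $\bA^*\bA$ and $\Re\trace{\bA\bK_\bbT}$ both simplify cleanly because $\Phi_\bbT^+\Phi_\bbT$ is Hermitian and $\bK_\bbT$, $\bLambda_\bbT$ commute; this is the bookkeeping step I would check most carefully, as it is where an erroneous transpose would pass unnoticed.
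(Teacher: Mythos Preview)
Your proposal is correct and follows essentially the same route as the paper's proof: both expand $\|\wh\btheta_p-\btheta\|_2^2$ into block pieces, kill the cross term using the diagonality of $\bK$, and reduce each surviving quadratic form to a trace via $\bbE_\btheta[\btheta^*\bM\btheta]=\trace{\bM\bK}$, then simplify with the Hermiticity of $\Phi_\bbT^+\Phi_\bbT$ and the commutativity of the diagonal factors. The only cosmetic difference is that the paper carries out the computation in the reparametrized variables $\bw=\bLambda_{[P]}^{\beta}\btheta$ and $\bz=\bLambda_{[N]}^{-\alpha}\by$, whereas you absorb that change of variables into the operators $\bA=\bLambda_\bbT^{-\beta}\Phi_\bbT^+\Phi_\bbT\bLambda_\bbT^{\beta}$ and $\bB=\bLambda_\bbT^{-\beta}\Phi_\bbT^+\Phi_{\bbT^c}\bLambda_{\bbT^c}^{\beta}$ and work directly in $\btheta$; your bookkeeping concern about $\Re\trace{\bA\bK_\bbT}$ is handled exactly as in the paper, since $\trace{\bK_\bbT\Phi_\bbT^+\Phi_\bbT}$ is real by cyclicity and the Hermiticity of $\Phi_\bbT^+\Phi_\bbT$.
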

\begin{proof}
	We introduce the new variables
	\[
		\bw=\bLambda_{[P]}^{\beta}\btheta,\ \ \mbox{and}\ \ \bz=\bLambda_{[N]}^{-\alpha}\ \by\,.
	\]
	We can then write the solution to the weighted least-square problem~\eqref{EQ:Weighted LS New} as 
	\[
		\wh \bw_\bbT=\Phi_\bbT^+\bz,\ \ \ \mbox{and}\ \ \ \wh\bw_{\bbT^c}=\bzero\,,
	\]
	where $\Phi_\bbT^+$ is the Moore--Penrose inverse of $\Phi_\bbT$. Using the fact that the data $\by$ contain no random noise, we write 
	\[
		\by=\Psi_\bbT\bLambda_\bbT^{-\beta}\bw_\bbT+\Psi_{\bbT^c}\bLambda_{\bbT^c}^{-\beta}\bw_{\bbT^c},\ \ \ \mbox{and}\ \ \ \bz=\bLambda_{[N]}^{-\alpha}\by= \Phi_\bbT\bw_\bbT+\Phi_{\bbT^c}\bw_{\bbT^c}.
	\]
	Therefore, we have, following simple linear algebra, that
		\begin{eqnarray}\label{EQ:Error OP General 1}
\|\wh \btheta-\btheta\|_2^2  &=&  \|\bLambda_\bbT^{-\beta}(\wh\bw_\bbT-\bw_\bbT)\|_2^2+\|\bLambda_{\bbT^c}^{-\beta}(\wh\bw_{\bbT^c}-\bw_{\bbT^c})\|_2^2  \nonumber \\	
		 &=& \|\bLambda_\bbT^{-\beta}\Phi_\bbT^+(\Phi_\bbT\bw_\bbT+\Phi_{\bbT^c}\bw_{\bbT^c})-\bLambda_\bbT^{-\beta} \bw_\bbT\|_2^2+\|\bLambda_{\bbT^c}^{-\beta}\bw_{\bbT^c}\|_2^2 \nonumber \\	
		& =& \|\bLambda_\bbT^{-\beta}\Phi_\bbT^+\Phi_{\bbT^c}\bw_{\bbT^c}-\bLambda_\bbT^{-\beta}(\bI_\bbT-\Phi_\bbT^+\Phi_{\bbT})\bw_\bbT\|_2^2+\|\bLambda_{\bbT^c}^{-\beta}\bw_{\bbT^c}\|_2^2\,.	
	\end{eqnarray}
	Next, we make the following expansions:
	\begin{eqnarray*}
		 \|\bLambda_\bbT^{-\beta}\Phi_\bbT^+\Phi_{\bbT^c}\bw_{\bbT^c}-\bLambda_\bbT^{-\beta}(\bI-\Phi_\bbT^+\Phi_{\bbT})\bw_\bbT\|_2^2 &=& \|\bLambda_\bbT^{-\beta}\Phi_\bbT^+\Phi_{\bbT^c}\bw_{\bbT^c}\|_2^2+\|\bLambda_\bbT^{-\beta}(\bI-\Phi_\bbT^+\Phi_{\bbT})\bw_\bbT\|_2^2-\cT_1,\\[1ex]
	\|\bLambda_\bbT^{-\beta}(\bI-\Phi_\bbT^+\Phi_\bbT)\bw_\bbT\|_2^2 &=&\|\bLambda_\bbT^{-\beta}\bw_\bbT\|_2^2+\|\bLambda_\bbT^{-\beta}\Phi_\bbT^+\Phi_\bbT\bw_\bbT\|_2^2-\cT_2
\end{eqnarray*}
	with $\cT_1$ and $\cT_2$ given  respectively as
	\[
		\cT_1:=2\Re\Big(\big(\bLambda_\bbT^{-\beta}\Phi_\bbT^+\Phi_{\bbT^c}\bw_{\bbT^c}\big)^*\bLambda_\bbT^{-\beta}(\bI-\Phi_\bbT^+\Phi_{\bbT})\bw_\bbT\Big)\,,
	\]
	and
	\[
		\cT_2:=2\Re\Big(\bw_\bbT^*\bLambda_\bbT^{-\beta} \bLambda_\bbT^{-\beta}\Phi_\bbT^+\Phi_\bbT\bw_\bbT\Big)\,.
	\]
	We therefore conclude from these expansions and~\eqref{EQ:Error OP General 1}, using the linearity property of the expectation over $\btheta$, that
\begin{multline}\label{EQ:Error OP General 2}
	\bbE_\btheta[\|\wh\btheta-\btheta\|_2^2]=\bbE_\btheta[\|\bLambda_\bbT^{-\beta}\bw_\bbT\|_2^2]+\bbE_\btheta[\|\bLambda_{\bbT^c}^{-\beta}\bw_{\bbT^c}\|_2^2]\\
	+\bbE_\btheta[\|\bLambda_\bbT^{-\beta}\Phi_\bbT^+\Phi_\bbT\bw_\bbT\|_2^2]+\bbE_\btheta[\|\bLambda_\bbT^{-\beta}\Phi_\bbT^+\Phi_{\bbT^c}\bw_{\bbT^c}\|_2^2]-\bbE_\btheta[\cT_1]-\bbE_\btheta[\cT_2]\,.
\end{multline}

We first observe that the first two terms in the error are simply $\trace{\bK}$; that is,
\begin{equation}\label{EQ:Term 0}
\bbE_\btheta[\|\bLambda_\bbT^{-\beta}\bw_\bbT\|_2^2]+\bbE_\btheta[\|\bLambda_{\bbT^c}^{-\beta}\bw_{\bbT^c}\|_2^2]=\trace{\bK}\,.
\end{equation}
By the assumption~\eqref{EQ:Coeff Prop A1}, we also have that $\bbE_\btheta[\btheta_{\bbT^c}\bX\btheta_\bbT^*]=\bzero$ for any matrix $\bX$ such that the product is well-defined. Using the relation between $\btheta$ and $\bw$, we conclude that
\begin{equation}\label{EQ:Term 1}
	\bbE_\btheta[\cT_1]=\bzero\,.
\end{equation}
We then verify, using simple trace tricks, that
\begin{equation}\label{EQ:Term 2}
	\bbE_\btheta[\cT_2]=2\bbE_\btheta\Big[\Re\Big(\trace{\bw_\bbT^*\bLambda_\bbT^{-\beta} \bLambda_\bbT^{-\beta}\Phi_\bbT^+\Phi_\bbT\bw_\bbT}\Big)\Big]= 2\Re\Big(\trace{\bK_\bbT\Phi_\bbT^+\Phi_\bbT}\Big)=2 \trace{\bK_\bbT\Phi_\bbT^+\Phi_\bbT}\,,
\end{equation}
where we have also used the fact that $\bK_\bbT\bLambda_\bbT^{-\beta}=\bLambda_\bbT^{-\beta}\bK_\bbT$ since both matrices are diagonal, and the last step comes from the fact that $\Phi_\bbT^+\Phi_\bbT$ is Hermitian.

Next, we observe, using standard trace tricks and the fact that $\Phi_\bbT^+\Phi_\bbT$ is Hermitian, that
\begin{eqnarray}\label{EQ:Term 3}
	\bbE_\btheta[\|\bLambda_\bbT^{-\beta}\Phi_\bbT^+\Phi_\bbT \bw_\bbT\|_2^2] &=& \bbE_\btheta[\trace{\bw_\bbT^*\Phi_\bbT^+\Phi_\bbT\bLambda_\bbT^{-2\beta}\Phi_\bbT^+\Phi_\bbT\bw_\bbT}] \nonumber \\
	&=& \trace{\Phi_\bbT^+\Phi_\bbT \bLambda_\bbT^{-2\beta}\Phi_\bbT^+\Phi_\bbT\bbE_\btheta[\bw_\bbT\bw_\bbT^*]} \nonumber \\
	&=& \trace{\Phi_\bbT^+\Phi_\bbT\bLambda_\bbT^{-2\beta}\Phi_\bbT^+\Phi_\bbT\bLambda_\bbT^{\beta}\bK_\bbT\bLambda_\bbT^{\beta}}\,,
\end{eqnarray}
and
\begin{eqnarray}\label{EQ:Term 4}
	\bbE_\btheta[\|\bLambda_\bbT^{-\beta}\Phi_\bbT^+\Phi_{\bbT^c} \bw_{\bbT^c}\|_2^2] &=& \bbE_\btheta[\trace{\bw_{\bbT^c}^*\Phi_{\bbT^c}^*(\Phi_\bbT^+)^*\bLambda_\bbT^{-2\beta}\Phi_\bbT^+\Phi_{\bbT^c}\bw_{\bbT^c}}] \nonumber \\
	&=& \trace{\Phi_{\bbT^c}^*(\Phi_\bbT^+)^*\bLambda_\bbT^{-2\beta}\Phi_\bbT^+\Phi_{\bbT^c}\bbE_\btheta[\bw_{\bbT^c}\bw_{\bbT^c}^*]} \nonumber \\
	&=&\trace{\Phi_{\bbT^c}^*(\Phi_\bbT^+)^*\bLambda_\bbT^{-2\beta}\Phi_\bbT^+\Phi_{\bbT^c}\bLambda_{\bbT^c}^{\beta} \bK_{\bbT^c}\bLambda_{\bbT^c}^{\beta}} \nonumber \\
	&=& \trace{(\Phi_\bbT^+)^*\bLambda_\bbT^{-2\beta}\Phi_\bbT^+\Phi_{\bbT^c}\bLambda_{\bbT^c}^{\beta} \bK_{\bbT^c}\bLambda_{\bbT^c}^{\beta}\Phi_{\bbT^c}^*}\,.
\end{eqnarray}
The proof is complete when we put~\eqref{EQ:Term 0}, ~\eqref{EQ:Term 1}, ~\eqref{EQ:Term 2}, ~\eqref{EQ:Term 3} and~\eqref{EQ:Term 4} into~\eqref{EQ:Error OP General 2}.
\end{proof}

Next we derive the general formula for the generalization error for training with noisy data.
\begin{lemma}[General form of $\cE_{\alpha,\beta}^\delta(P, p, N)$]\label{LMMA:Error General Noise}
	Let $\btheta$ be given as in Lemma~\ref{LMMA:Error General}. Then, under the assumptions {\rm ({\bf A-I})}-{\rm ({\bf A-II})}, the generalization error for training with weighted optimization~\eqref{EQ:Weighted LS New} from noisy data $\by^\delta$ can be written as
\begin{equation}\label{EQ:Error General Noise}
		\cE_{\alpha,\beta}^\delta(P, p, N)=\cE_{\alpha,\beta}^0(P, p, N)+\cE_{\rm noise}(P, p, N)
\end{equation}
where $\cE_{\alpha,\beta}^0(P, p, N)$ is given as in~\eqref{EQ:Error General} and $\cE_{\rm noise}(P, p, N)$ is given as
\[
	\cE_{\rm noise}(P, p, N)=\trace{\bLambda_{[N]}^{-\alpha} (\Phi_\bbT^+)^*\bLambda_\bbT^{-2\beta} \Phi_\bbT^+\bLambda_{[N]}^{-\alpha}\bbE_\bdelta[\bdelta\bdelta^*]}\,.
\]
The variance of the generalization error with respect to the random noise is
\begin{multline}\label{EQ:General Variance}
	{\rm Var}_\delta\Big(\bbE_\btheta[\|\wh\btheta^\delta-\btheta\|_2^2]\Big)
	=\bbE_{\bdelta} \big [\trace{\bdelta^*\bLambda_{[N]}^{-\alpha} (\Phi_\bbT^+)^*\bLambda_\bbT^{-2\beta} \Phi_\bbT^+\bLambda_{[N]}^{-\alpha}\bdelta\bdelta^*\bLambda_{[N]}^{-\alpha} (\Phi_\bbT^+)^*\bLambda_\bbT^{-2\beta} \Phi_\bbT^+\bLambda_{[N]}^{-\alpha}\bdelta}\big ] \\ - \big(\cE_{\rm noise}(P, p, N)\big)^2\,.
\end{multline}
\end{lemma}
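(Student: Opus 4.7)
}

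The plan is to exploit linearity of the weighted least-squares minimizer in the data, together with the independence of $\btheta$ and $\bdelta$, to split the squared error into a clean-data piece already handled by Lemma~\ref{LMMA:Error General} and a noise piece that is a quadratic form in $\bdelta$.

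First, I would write the minimizer in closed form as
$$\wh\btheta_p^\delta \;=\; \bLambda_{[p]}^{-\beta}\,\Phi_\bbT^+\,\bLambda_{[N]}^{-\alpha}\,\by^\delta,$$
and use assumption ({\bf A-I}) to decompose it as $\wh\btheta_p^\delta = \wh\btheta_p + \bn$, where $\wh\btheta_p$ is the minimizer for the noise-free data $\by$ and $\bn := \bLambda_{[p]}^{-\beta}\Phi_\bbT^+\bLambda_{[N]}^{-\alpha}\bdelta$ is the ``noise backprojection'' into parameter space. Expanding the squared error yields
$$\|\wh\btheta_p^\delta - \btheta\|_2^2 \;=\; \|\wh\btheta_p-\btheta\|_2^2 + \|\bn\|_2^2 + 2\Re\langle \wh\btheta_p-\btheta,\bn\rangle.$$
Since $\wh\btheta_p = \bLambda_{[p]}^{-\beta}\Phi_\bbT^+\bLambda_{[N]}^{-\alpha}\Psi\btheta$ is linear in $\btheta$ and $\bbE_\btheta[\btheta]=\bzero$, one has $\bbE_\btheta[\wh\btheta_p-\btheta]=\bzero$; combined with assumption ({\bf A-II}) (which lets us treat $\bdelta$ as fixed while integrating over $\btheta$), the cross term vanishes in expectation.

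Second, I would compute the noise contribution. Setting
$$M \;:=\; \bLambda_{[N]}^{-\alpha}(\Phi_\bbT^+)^*\bLambda_\bbT^{-2\beta}\Phi_\bbT^+\bLambda_{[N]}^{-\alpha},$$
the standard trace identity gives $\|\bn\|_2^2 = \bdelta^* M\bdelta = \trace{M\bdelta\bdelta^*}$, and taking $\bbE_\bdelta$ yields the formula for $\cE_{\rm noise}(P,p,N)$ as stated. Combined with Lemma~\ref{LMMA:Error General} for the first piece, this establishes~\eqref{EQ:Error General Noise}.

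Third, for the variance I would note that after integrating in $\btheta$ alone (with $\bdelta$ fixed), the cross term already dies by the same mean-zero argument, so
$$\bbE_\btheta[\|\wh\btheta^\delta-\btheta\|_2^2] \;=\; \bbE_\btheta[\|\wh\btheta_p-\btheta\|_2^2] + \bdelta^* M\bdelta,$$
in which only the second summand depends on $\bdelta$. Hence
$$\bbE_\btheta[\|\wh\btheta^\delta-\btheta\|_2^2] - \bbE_{\btheta,\bdelta}[\|\wh\btheta^\delta-\btheta\|_2^2] \;=\; \bdelta^*M\bdelta \;-\; \trace{M\,\bbE_\bdelta[\bdelta\bdelta^*]}\,.$$
Squaring, using $(\bdelta^*M\bdelta)^2 = \trace{M\bdelta\bdelta^*M\bdelta\bdelta^*}$ by cyclicity of trace, and taking $\bbE_\bdelta$ produces the claimed formula~\eqref{EQ:General Variance}.

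The only step requiring care is verifying that every cross-contribution between $\btheta$ and $\bdelta$ truly vanishes; beyond that, the argument is a linearity-plus-trace-identities exercise parallel to the proof of Lemma~\ref{LMMA:Error General}. In particular, no Gaussian structure from ({\bf A-III}) is needed: the statement holds for any independent zero-mean $\bdelta$ with well-defined second and fourth moments, which is why the lemma is stated only under ({\bf A-I})--({\bf A-II}).
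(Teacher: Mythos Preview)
Your proposal is correct and follows essentially the same route as the paper: decompose $\wh\btheta_p^\delta=\wh\btheta_p+\bn$ with $\bn=\bLambda_\bbT^{-\beta}\Phi_\bbT^+\bLambda_{[N]}^{-\alpha}\bdelta$, expand the square, and kill the cross term using independence together with $\bbE_\btheta[\wh\btheta_p-\btheta]=\bzero$; the variance then reduces to ${\rm Var}_\bdelta(\bdelta^*M\bdelta)$ exactly as you describe. Your justification for the vanishing of the cross term (invoking $\bbE_\btheta[\btheta]=\bzero$ explicitly, rather than just ``independence'') is in fact slightly more careful than the paper's, and your remark that ({\bf A-III}) is unnecessary here is on point.
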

\begin{proof}
We start with the following standard error decomposition
\begin{equation*}
\|\wh \btheta^\delta-\btheta\|_2^2=\|\wh \btheta^\delta-\wh\btheta+\wh\btheta-\btheta\|_2^2 =\|\wh\btheta-\btheta\|_2^2  +\|\wh \btheta^\delta-\wh\btheta\|_2^2+2\Re\Big((\wh \btheta^\delta-\wh\btheta)^*(\wh\btheta-\btheta)\Big)\,.
\end{equation*}
Using the fact that 
\[
	\wh \btheta_{\bbT^c}^\delta=\wh \btheta_{\bbT^c}=\bzero\,,
\]
the error can be simplified to
\begin{equation}\label{EQ:Error A}
	\|\wh \btheta^\delta-\btheta\|_2^2=\|\wh\btheta-\btheta\|_2^2  +\|\wh \btheta_\bbT^\delta-\wh\btheta_\bbT\|_2^2+\cT_3,
\end{equation}
where
\[
		\cT_3=2\Re\Big((\wh \btheta_\bbT^\delta-\wh\btheta_\bbT)^*(\wh\btheta_\bbT-\btheta_\bbT)\Big)\,.
\]
Taking expectation with respect to $\btheta$ and then the noise $\bdelta$, we have that
\begin{equation}\label{EQ:Error B}
	\bbE_{\bdelta,\btheta}[\|\wh \btheta^\delta-\btheta\|_2^2]=\bbE_\btheta[\|\wh\btheta-\btheta\|_2^2] +\bbE_{\bdelta}[\|\wh \btheta_\bbT^\delta-\wh\btheta_\bbT\|_2^2]+\bbE_{\bdelta, \btheta}[\cT_3]\,.
\end{equation}

The first term on the right-hand side is simply $\cE_{\alpha,\beta}^0(P, p, N)$ and does not depend on $\delta$. To evaluate the second term which does not depend on $\theta$, we use the fact that
\[
	\wh \btheta_\bbT^\delta-\wh\btheta_\bbT=\bLambda_\bbT^{-\beta} \Phi_\bbT^+(\bz^\delta-\bz)=\bLambda_\bbT^{-\beta} \Phi_\bbT^+\bLambda_{[N]}^{-\alpha}(\by^\delta-\by)=\bLambda_\bbT^{-\beta} \Phi_\bbT^+\bLambda_{[N]}^{-\alpha}\bdelta\,.
\]
This leads to
\begin{eqnarray*}
	\bbE_{\bdelta}[\|\wh \btheta_\bbT^\delta-\wh\btheta_\bbT\|_2^2] &=& \bbE_{\bdelta}[\|\bLambda_\bbT^{-\beta} \Phi_\bbT^+\bLambda_{[N]}^{-\alpha}\bdelta\|_2^2]\\
	&=& \bbE_{\bdelta}[\trace{\bdelta^*\bLambda_{[N]}^{-\alpha} (\Phi_\bbT^+)^*\bLambda_\bbT^{-2\beta} \Phi_\bbT^+\bLambda_{[N]}^{-\alpha}\bdelta}] \\
	&=&\trace{\bLambda_{[N]}^{-\alpha} (\Phi_\bbT^+)^*\bLambda_\bbT^{-2\beta} \Phi_\bbT^+\bLambda_{[N]}^{-\alpha}\bbE_\bdelta[\bdelta\bdelta^*]}.
\end{eqnarray*}
The last step is to realize that
\[
	\bbE_{\bdelta, \btheta}[\cT_3]=2\Re\Big(\bbE_{\bdelta, \btheta}\Big[\big(\bLambda_\bbT^{-\beta} \Phi_\bbT^+\bLambda_{[N]}^{-\alpha}\bdelta\big)^*(\wh\btheta_\bbT-\btheta_\bbT)\Big]\Big)=0
\]
due to the assumption that $\bdelta$ and $\btheta$ are independent.

Utilizing the formulas in~\eqref{EQ:Error A} and~\eqref{EQ:Error B}, and the fact that $\bbE_\btheta[\cT_3]=0$, we can write down the variance of the generalization error with respect to the random noise as
\begin{multline*}
	{\rm Var}_\delta\Big(\bbE_\btheta[\|\wh\btheta^\delta-\btheta\|_2^2]\Big)	=\bbE_\bdelta[\|\wh\btheta_\bbT^\delta-\wh\btheta_\bbT\|_2^4]-\Big(\bbE_\bdelta[\|\wh\btheta_\bbT^\delta-\wh\btheta_\bbT\|_2^2]\Big)^2\\
	=\bbE_{\bdelta}[\trace{\bdelta^*\bLambda_{[N]}^{-\alpha} (\Phi_\bbT^+)^*\bLambda_\bbT^{-2\beta} \Phi_\bbT^+\bLambda_{[N]}^{-\alpha}\bdelta\bdelta^*\bLambda_{[N]}^{-\alpha} (\Phi_\bbT^+)^*\bLambda_\bbT^{-2\beta} \Phi_\bbT^+\bLambda_{[N]}^{-\alpha}\bdelta}] - \big(\cE_{\rm noise}(P, p, N)\big)^2\,.
\end{multline*}
The proof is complete.
\end{proof}

We also need the following properties on the the feature matrix $\Psi$ of the random Fourier model.
\begin{lemma}\label{LMMA:Property of Psi}
	(i) For any $\zeta\ge 0$, we define, in the overparameterized regime~\eqref{EQ:Para OP}, the matrices $\bPi_1 \in\bbC^{N\times N}$ and $\bPi_2\in\bbC^{N\times N}$ respectively as
	\begin{equation*}\label{EQ:Matrices}
		\bPi_1:=\Psi_\bbT \bLambda_\bbT^{-\zeta} \Psi_\bbT^*\ \ \  \mbox{and}\ \ \ 
		\bPi_2=\Psi_{\bbT^c}\bLambda_{\bbT^c}^{-\zeta} \Psi_{\bbT^c}^*\,.
	\end{equation*}
	Then $\bPi_1$ and $\bPi_2$ admit the decomposition $\bPi_1=\Psi_{[N]} \bLambda_{\Pi_1,\zeta}\Psi_{[N]}^*$ and $\bPi_2=\Psi_{[N]} \bLambda_{\Pi_2,\zeta}\Psi_{[N]}^*$ respectively with $\bLambda_{\Pi_1,\zeta}$ and $\bLambda_{\Pi_2, \zeta}$ diagonal matrices whose $m$-th diagonal elements are respectively
		\[
			\lambda_{\Pi_1,\zeta}^{(m)}=\sum_{k=0}^{N-1}\Big(\sum_{\eta=0}^{\nu-1}t_{k+N\eta}^{-\zeta} \Big)e_{m,k}^{(N)},\ \ 0\le m\le N-1\,,
	\]
	and
	\[
		\lambda_{\Pi_2,\zeta}^{(m)}=\sum_{k=0}^{N-1}\Big(\sum_{\eta=\nu}^{\mu-1}t_{k+N\eta}^{-\zeta} \Big)e_{m,k}^{(N)},\ \ 0\le m\le N-1\,,
	\]
	where $e_{m,k}^{(N)}$ is defined as, denoting $\omega_N:=e^{\frac{2\pi \mathfrak i}{N}}$,
	\[
		e_{m,k}^{(N)}:=\dfrac{1}{N}\sum_{j=0}^{N-1}\omega_N^{(m-k)j}=\left\{\begin{matrix} 1, & k=m\\ 0, & k\neq m \end{matrix}\right., \qquad 0\le m, k\le N-1\,.
	\]
	(ii) For any $\alpha\ge 0$, we define, in the underparameterized regime~\eqref{EQ:Para UP} with $p<N$, the matrix $\bXi\in\bbC^{p\times(N-p)}$ as
	\begin{equation*}
		\bXi:=\big(\Psi_\bbT^* \bLambda_{[N]}^{-2\alpha} \Psi_\bbT\big)^{-1}\Psi_\bbT^* \bLambda_{[N]}^{-2\alpha} \Psi_{[N]\backslash \bbT}\,.
	\end{equation*}
	Then $\bXi^*\bXi$ has the following representation:
	\[
		\bXi^*\bXi=N (\Psi_{[N]\backslash\bbT}^*\bLambda_{[N]}^{2\alpha}\Psi_{[N]\backslash\bbT})^{-*}\Psi_{[N]\backslash\bbT}^*\bLambda^{4\alpha} \Psi_{[N]\backslash\bbT}(\Psi_{[N]\backslash\bbT}^*\bLambda_{[N]}^{2\alpha}\Psi_{[N]\backslash\bbT})^{-1}-\bI_{[N-p]}\,.
	\]	
\end{lemma}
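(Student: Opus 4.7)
Both parts exploit the Fourier orthogonality of the full DFT block $\Psi_{[N]}$, namely $\Psi_{[N]}^*\Psi_{[N]} = \Psi_{[N]}\Psi_{[N]}^* = N \bI_N$, combined with a simple reindexing in (i) and a Schur-complement identity in (ii). In particular, this orthogonality yields the Fourier inversion $\Psi_{[N]}^{-1}=\tfrac{1}{N}\Psi_{[N]}^*$ together with the complementary relation $\Psi_\bbT\Psi_\bbT^* + \Psi_{[N]\backslash\bbT}\Psi_{[N]\backslash\bbT}^* = N\bI_N$, both of which will be used repeatedly below.

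For part (i), I would proceed by direct entrywise calculation. Since $(\Psi_\bbT)_{jk}=\omega_N^{jk}$ and $(\bLambda_\bbT^{-\zeta})_{kk}=t_k^{-\zeta}$, the $(j,j')$ entry of $\bPi_1$ is $\sum_{k=0}^{p-1} t_k^{-\zeta}\omega_N^{(j-j')k}$. Because $p=\nu N$, I would reindex $k=k'+N\eta$ with $0\le k'\le N-1$ and $0\le \eta\le \nu-1$, and use $\omega_N^{N}=1$ to collapse $\omega_N^{(j-j')k}=\omega_N^{(j-j')k'}$. The resulting double sum $\sum_{k'=0}^{N-1}\bigl(\sum_{\eta=0}^{\nu-1}t_{k'+N\eta}^{-\zeta}\bigr)\omega_N^{(j-j')k'}$ is exactly the $(j,j')$ entry of $\Psi_{[N]}\bLambda_{\Pi_1,\zeta}\Psi_{[N]}^*$ with the claimed diagonal (recognizing that $e_{m,k}^{(N)}$ is the Kronecker delta collapses the stated formula to $\sum_{\eta=0}^{\nu-1}t_{m+N\eta}^{-\zeta}$). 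The treatment of $\bPi_2$ is identical, with $\eta$ ranging from $\nu$ to $\mu-1$ instead of $0$ to $\nu-1$.

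For part (ii), I would introduce $M := \Psi_{[N]}^* \bLambda_{[N]}^{-2\alpha}\Psi_{[N]}$, and block-partition along the split $[N]=\bbT\cup([N]\setminus \bbT)$, so that $M_{11}=\Psi_\bbT^*\bLambda_{[N]}^{-2\alpha}\Psi_\bbT$, $M_{12}=\Psi_\bbT^*\bLambda_{[N]}^{-2\alpha}\Psi_{[N]\backslash\bbT}$, and hence $\bXi = M_{11}^{-1}M_{12}$. The Fourier inversion identity gives $M^{-1}=\tfrac{1}{N^2}\Psi_{[N]}^*\bLambda_{[N]}^{2\alpha}\Psi_{[N]}$, whose $(1,2)$ and $(2,2)$ blocks are $\tfrac{1}{N^2}\Psi_\bbT^*\bLambda_{[N]}^{2\alpha}\Psi_{[N]\backslash\bbT}$ and $\tfrac{1}{N^2}A$, where $A:=\Psi_{[N]\backslash\bbT}^*\bLambda_{[N]}^{2\alpha}\Psi_{[N]\backslash\bbT}$. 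The standard block-inverse identity $(M^{-1})_{12}=-M_{11}^{-1}M_{12}(M^{-1})_{22}$ then rewrites $\bXi$ in the equivalent form $\bXi = -\Psi_\bbT^*\bLambda_{[N]}^{2\alpha}\Psi_{[N]\backslash\bbT}\,A^{-1}$, trading $\bLambda_{[N]}^{-2\alpha}$ for $\bLambda_{[N]}^{+2\alpha}$.

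From there, computing $\bXi^*\bXi$ and invoking $\Psi_\bbT\Psi_\bbT^* = N\bI_N - \Psi_{[N]\backslash\bbT}\Psi_{[N]\backslash\bbT}^*$ yields $A^{-1}\bigl(N\Psi_{[N]\backslash\bbT}^*\bLambda_{[N]}^{4\alpha}\Psi_{[N]\backslash\bbT} - A^2\bigr)A^{-1} = NA^{-1}BA^{-1}-\bI_{N-p}$ with $B := \Psi_{[N]\backslash\bbT}^*\bLambda_{[N]}^{4\alpha}\Psi_{[N]\backslash\bbT}$; since $A$ is Hermitian one has $A^{-1}=A^{-*}$, matching the stated formula. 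The main obstacle is the non-obvious transition from the $\bLambda_{[N]}^{-2\alpha}$-weighted expression for $\bXi$ to the $\bLambda_{[N]}^{+2\alpha}$-weighted one, for a generic $N\times N$ matrix in place of $\Psi_{[N]}$ no such clean reformulation exists, and it is precisely the Fourier identity $\Psi_{[N]}^{-1}\propto \Psi_{[N]}^*$ that unlocks the compact closed form for $\bXi^*\bXi$.
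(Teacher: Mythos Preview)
Your proposal is correct. Part (i) is essentially the paper's argument: the paper simply cites the fact that $\bPi_1,\bPi_2$ are circulant with the standard eigendecomposition, and your entrywise computation (reindexing $k=k'+N\eta$ and using $\omega_N^N=1$) is exactly how one verifies that.

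For part (ii) both you and the paper block-partition $M=\Psi_{[N]}^*\bLambda_{[N]}^{-2\alpha}\Psi_{[N]}$ and exploit the Fourier identity $M^{-1}=\tfrac{1}{N^2}\Psi_{[N]}^*\bLambda_{[N]}^{2\alpha}\Psi_{[N]}$, but your route is noticeably shorter. You invoke the single block-inverse relation $(M^{-1})_{12}=-M_{11}^{-1}M_{12}(M^{-1})_{22}$ (equivalently, the $(1,2)$ block of $MM^{-1}=\bI$) to obtain $\bXi=-\Psi_\bbT^*\bLambda_{[N]}^{2\alpha}\Psi_{[N]\backslash\bbT}\,A^{-1}$ in one step, after which a single use of $\Psi_\bbT\Psi_\bbT^*=N\bI_N-\Psi_{[N]\backslash\bbT}\Psi_{[N]\backslash\bbT}^*$ finishes the job. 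The paper instead compares the $(1,1)$ blocks to get $\bP^{-1}-\bP^{-1}\bR\wt\bR^*=\wt\bP$, solves for $\bP^{-1}\bR=\wt\bP\bR(\bI-\wt\bR^*\bR)^{-1}$, separately computes $\wt\bR^*\bR=\bI-\wt\bQ\bQ$, and then expands $\bR^*\wt\bP^*\wt\bP\bR$ through another application of the complementary identity. Your argument achieves the same endpoint with fewer intermediate identities; the paper's extra steps buy nothing additional here.
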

\begin{proof}
Part (i) is simply Lemma 2 of \citet{XiWaRaCh-arXiv20}. It was proved by verifying that $\bPi_1$ and $\bPi_2$ are both circulant matrices whose eigendecompositions are standard results in linear algebra.
	

To prove part (ii), let us introduce the matrices $\bP$, $\bQ$, $\bR$ through the relation
\[
	\Psi_{[N]}^*\bLambda_{[N]}^{-2\alpha}\Psi_{[N]}= \begin{pmatrix}\Psi_\bbT^* \bLambda_{[N]}^{-2\alpha}\Psi_\bbT & \Psi_\bbT^*\bLambda_{[N]}^{-2\alpha}\Psi_{[N]\backslash\bbT}\\ 
	\Psi_{[N]\backslash\bbT}^* \bLambda_{[N]}^{-2\alpha}\Psi_\bbT & \Psi_{[N]\backslash\bbT}^*\bLambda_{[N]}^{-2\alpha}\Psi_{[N]\backslash\bbT}\end{pmatrix}\equiv \begin{pmatrix} \bP & \bR\\ \bR^*& \bQ\end{pmatrix}\,.
\]
Then by standard linear algebra, we have that
\[
	\Big(\Psi_{[N]}^*\bLambda_{[N]}^{-2\alpha}\Psi_{[N]}\Big)^{-1}= \begin{pmatrix} \bP^{-1}+\bP^{-1}\bR(\bQ-\bR^*\bP^{-1}\bR)^{-1}\bR^*\bP^{-1} & -\bP^{-1}\bR(\bQ-\bR^*\bP^{-1}\bR)^{-1}\\ 
	-(\bQ-\bR^*\bP^{-1}\bR)^{-1} \bR^* \bP^{-1} & (\bQ-\bR^*\bP^{-1}\bR)^{-1}
	\end{pmatrix}
\]
since $\bP=\Psi_\bbT^* \bLambda_{[N]}^{-2\alpha}\Psi_\bbT$ is invertible.

Meanwhile, since $\frac{1}{\sqrt{N}}\Psi_{[N]}$ is unitary, we have that
\begin{eqnarray*}
	\Big(\Psi_{[N]}^*\bLambda_{[N]}^{-2\alpha}\Psi_{[N]}\Big)^{-1}&=&\dfrac{1}{N^2}\Psi_{[N]}^*\bLambda_{[N]}^{2\alpha}\Psi_{[N]} \\
	&=& \dfrac{1}{N^2}\begin{pmatrix}\Psi_\bbT^* \bLambda_{[N]}^{2\alpha}\Psi_\bbT & \Psi_\bbT^*\bLambda_{[N]}^{2\alpha}\Psi_{[N]\backslash\bbT}\\ 
	\Psi_{[N]\backslash\bbT}^* \bLambda_{[N]}^{2\alpha}\Psi_\bbT & \Psi_{[N]\backslash\bbT}^*\bLambda_{[N]}^{2\alpha}\Psi_{[N]\backslash\bbT}\end{pmatrix} \equiv \begin{pmatrix} \wt \bP & \wt \bR\\ \wt \bR^*& \wt \bQ\end{pmatrix}\,.
\end{eqnarray*}
Comparing the two inverse matrices lead us to the following identity
\[
	\bP^{-1}-\bP^{-1}\bR \wt \bR^*=\wt\bP\,.
\]
This gives us that
\[
	\bP^{-1}\bR=\wt\bP\bR(\bI_{[N-p]}-\wt \bR^* \bR)^{-1} \,.
\]
Therefore, we have
\[
	\bR^*\bP^{-*}\bP^{-1}\bR=(\bI_{[N-p]}-\wt \bR^* \bR)^{-*} \bR^*\wt\bP^*\wt\bP\bR(\bI_{[N-p]}-\wt \bR^* \bR)^{-1}\,.
\]
Utilizing the fact that
\begin{eqnarray*}
	\wt\bR^* \bR &=&\dfrac{1}{N^2}\Psi_{[N]\backslash\bbT}^* \bLambda_{[N]}^{2\alpha}\Psi_\bbT  \Psi_\bbT^*\bLambda_{[N]}^{-2\alpha}\Psi_{[N]\backslash\bbT}\\
	&=&\dfrac{1}{N^2}\Psi_{[N]\backslash\bbT}^* \bLambda_{[N]}^{2\alpha}\Big(N\bI_{[N]}-\Psi_{[N]\backslash\bbT}  \Psi_{[N]\backslash\bbT}^*\Big)\bLambda_{[N]}^{-2\alpha}\Psi_{[N]\backslash\bbT}	=\bI_{[N-p]}-\wt\bQ\bQ\,,
\end{eqnarray*}
we can now have the following identity
\[
	\bR^*\bP^{-*}\bP^{-1}\bR=(\wt\bQ\bQ)^{-*} \bR^*\wt\bP^*\wt\bP\bR(\wt\bQ\bQ)^{-1}\,.
\]
Using the formulas for $\wt\bP$ and $\bR$, as well as $\Psi_\bbT\Psi_\bbT^*=N\bI_{[N]}-\Psi_{[N]\backslash\bbT}\Psi_{[N]\backslash\bbT}^*$, we have
\begin{eqnarray*}
	\bR^*\wt\bP^*\wt\bP\bR &=& \dfrac{1}{N^4}\Psi_{[N]\backslash\bbT}^*\bLambda_{[N]}^{-2\alpha}\Psi_\bbT\Psi_\bbT^*\bLambda_{[N]}^{2\alpha}\Psi_\bbT\Psi_\bbT^*\bLambda_{[N]}^{2\alpha}\Psi_\bbT\Psi_\bbT^*\bLambda_{[N]}^{-2\alpha} \Psi_{[N]\backslash\bbT}\\
	&=& \dfrac{1}{N^3}\bQ\Psi_{[N]\backslash\bbT}^*\bLambda_{[N]}^{4\alpha} \Psi_{[N]\backslash\bbT}\bQ-\bQ\wt\bQ\wt\bQ\bQ\,.
\end{eqnarray*}
This finally gives us
\[
	\bR^*\bP^{-*}\bP^{-1}\bR=\dfrac{1}{N^3}\wt\bQ^{-*}\Psi_{[N]\backslash\bbT}^*\bLambda_{[N]}^{4\alpha} \Psi_{[N]\backslash\bbT}\wt\bQ^{-1}-\bI_{[N-p]}\,.
\]
The proof is complete when we insert the definion of $\wt\bQ$ into the result.			
\end{proof}
	
\subsection{Proof of Theorem~\ref{THM:Clean Training}}
\label{SUBSEC:ProofNoisefree}


We provide the proof of Theorem~\ref{THM:Clean Training} here. We split the proof into the overparameterized and the underparameterized regimes.
\begin{proof}[Proof of Theorem~\ref{THM:Clean Training} (Overparameterized Regime)]

In the overparameterized regime~\eqref{EQ:Para OP}, we have that the Moore--Penrose inverse $\Phi_\bbT^+=\Phi_\bbT^*(\Phi_\bbT\Phi_\bbT^*)^{-1}$. Using the definitions of $\Phi_\bbT$ in ~\eqref{EQ:Feature Matrix Rescale}, we can verify that
\[
	\Phi_\bbT^+\Phi_\bbT\bLambda_\bbT^{-\beta} =\bLambda_\bbT^{-\beta}\Psi_\bbT^*(\Psi_\bbT\bLambda_\bbT^{-2\beta}\Psi_\bbT^*)^{-1} \Psi_\bbT\bLambda_\bbT^{-2\beta}\,,
\]
\[
	\bLambda_\bbT^{-\beta} \Phi_\bbT^+\Phi_\bbT=\bLambda_\bbT^{-2\beta}\Psi_\bbT^*(\Psi_\bbT\bLambda_\bbT^{-2\beta}\Psi_\bbT^*)^{-1} \Psi_\bbT\bLambda_\bbT^{-\beta}\,,
\]
and
\[
	\Phi_\bbT^+\Phi_\bbT \bLambda_\bbT^{-2\beta}	\Phi_\bbT^+\Phi_\bbT
	= \bLambda_\bbT^{-\beta}\Psi_\bbT^*(\Psi_\bbT\bLambda_\bbT^{-2\beta}\Psi_{\bbT}^*)^{-1} \Psi_\bbT\bLambda_\bbT^{-4\beta}\Psi_\bbT^*(\Psi_\bbT\bLambda_\bbT^{-2\beta}\Psi_{\bbT}^*)^{-1} \Psi_\bbT\bLambda_\bbT^{-\beta}\,.
\]
We therefore have
\begin{eqnarray*}
&	&\trace{\Phi_\bbT^+\Phi_\bbT\bLambda_\bbT^{-2\beta}\Phi_\bbT^+\Phi_\bbT\bLambda_\bbT^{\beta}\bK_\bbT\bLambda_\bbT^{\beta}}\\	&=& \trace{(\Psi_\bbT\bLambda_\bbT^{-2\beta}\Psi_{\bbT}^*)^{-1} \Psi_\bbT\bLambda_\bbT^{-4\beta}\Psi_\bbT^*(\Psi_\bbT\bLambda_\bbT^{-2\beta}\Psi_{\bbT}^*)^{-1} \Psi_\bbT\bK_\bbT\Psi_\bbT^*}\,.
\end{eqnarray*}
Meanwhile, a trace trick leads to
\begin{eqnarray}
	\trace{\bK_\bbT\Phi_\bbT^+\Phi_\bbT}&=&\trace{\bK_\bbT\Phi_\bbT^*(\Phi_\bbT\Phi_\bbT^*)^{-1}\Phi_\bbT} =\trace{(\Phi_\bbT\Phi_\bbT^*)^{-1}\Phi_\bbT\bK_\bbT\Phi_\bbT^*} \nonumber \\
	&=& \trace{(\bLambda_{[N]}^{-\alpha}\Psi_\bbT\bLambda_\bbT^{-2\beta}\Psi_\bbT^*\bLambda_{[N]}^{-\alpha})^{-1}\bLambda_{[N]}^{-\alpha}\Psi_\bbT\bLambda_\bbT^{-\beta}\bK_\bbT\bLambda_\bbT^{-\beta}\Psi_\bbT^*\bLambda_{[N]}^{-\alpha}} \nonumber \\
	&=& \trace{(\Psi_\bbT\bLambda_\bbT^{-2\beta}\Psi_\bbT^*)^{-1}\Psi_\bbT\bLambda_\bbT^{-\beta}\bK_\bbT\bLambda_\bbT^{-\beta}\Psi_\bbT^*}\,.
\end{eqnarray}
Therefore,  based on~\Cref{LMMA:Error General}, we have finally that
\begin{multline}
	\cP_{\alpha,\beta}=\trace{(\Psi_\bbT\bLambda_\bbT^{-2\beta}\Psi_{\bbT}^*)^{-1} \Psi_\bbT\bLambda_\bbT^{-4\beta}\Psi_\bbT^*(\Psi_\bbT\bLambda_\bbT^{-2\beta}\Psi_{\bbT}^*)^{-1} \Psi_\bbT\bK_\bbT\Psi_\bbT^*}\\ -2 \trace{(\Psi_\bbT\bLambda_\bbT^{-2\beta}\Psi_\bbT^*)^{-1}\Psi_\bbT\bLambda_\bbT^{-\beta}\bK_\bbT\bLambda_\bbT^{-\beta}\Psi_\bbT^*}\,.
\end{multline}
In a similar manner, we can check that
\begin{eqnarray}
	\cQ_{\alpha,\beta}&=&\trace{(\Phi_\bbT^+)^*\bLambda_\bbT^{-2\beta}\Phi_\bbT^+\Phi_{\bbT^c}\bLambda_{\bbT^c}^{\beta} \bK_{\bbT^c}\bLambda_{\bbT^c}^{\beta}\Phi_{\bbT^c}^*} \nonumber \\
	&=&\trace{(\Phi_\bbT\Phi_\bbT^*)^{-*}\Phi_\bbT \bLambda_{\bbT}^{-2\beta}\Phi_\bbT^*(\Phi_\bbT\Phi_\bbT^*)^{-1} \Phi_{\bbT^c}\bLambda_{\bbT^c}^{\beta} \bK_{\bbT^c}\bLambda_{\bbT^c}^{\beta}\Phi_{\bbT^c}^*} \nonumber \\
	&=& \trace{(\Psi_\bbT\bLambda_\bbT^{-2\beta}\Psi_\bbT^*)^{-*}\Psi_\bbT\bLambda_\bbT^{-4\beta}\Psi_\bbT^*(\Psi_\bbT\bLambda_\bbT^{-2\beta}\Psi_\bbT^*)^{-1}\Psi_{\bbT^c} \bK_{\bbT^c}\Psi_{\bbT^c}^*}\,.
\end{eqnarray}

The above calculations give us
\begin{multline}\label{EQ:Error General OP1}
	\cE_{\alpha, \beta}^0(P, p, N)=\trace{\bK} -2 \trace{(\Psi_\bbT\bLambda_\bbT^{-2\beta}\Psi_\bbT^*)^{-1}\Psi_\bbT\bLambda_\bbT^{-\beta}\bK_\bbT\bLambda_\bbT^{-\beta}\Psi_\bbT^*}\\
	+\trace{(\Psi_\bbT\bLambda_\bbT^{-2\beta}\Psi_\bbT^*)^{-*}\Psi_\bbT\bLambda_\bbT^{-4\beta}\Psi_\bbT^*(\Psi_\bbT\bLambda_\bbT^{-2\beta}\Psi_\bbT^*)^{-1}\Psi \bK \Psi^*}\,.
\end{multline}

We are now ready to evaluate the terms in the generalization error. First, we have that since $\bK=c_\gamma\bLambda_{[P]}^{-2\gamma}$, we have, using the definition of $c_\gamma$, that
\begin{equation}\label{EQ:OP Term1}
	\trace{\bK}=c_\gamma \sum_{j=0}^{P-1} t_{j}^{-2\gamma}=1\,.
\end{equation}
Second, using the results in part (i) of Lemma~\ref{LMMA:Property of Psi} and the fact that $\frac{1}{\sqrt{N}}\Psi_{[N]}$ is unitary, we have
\begin{eqnarray}\label{EQ:OP Term2}
	& & \trace{(\Psi_\bbT\bLambda_\bbT^{-2\beta}\Psi_\bbT^*)^{-1}\Psi_\bbT\bLambda_\bbT^{-\beta}\bK_\bbT\bLambda_\bbT^{-\beta}\Psi_\bbT^*} \nonumber \\
	&=& c_\gamma \trace{\dfrac{1}{N^2}\Psi_{[N]}\bLambda_{\Pi_1,2\beta}^{-1}\Psi_{[N]}^*\Psi_{[N]}\bLambda_{\Pi_1,2\beta+2\gamma}\Psi_{[N]}^*} \nonumber \\
	&=& c_\gamma \trace{\dfrac{1}{N}\Psi_{[N]}\Lambda_{\Pi_1,2\beta}^{-1}\Lambda_{\Pi_1,2\beta+2\gamma}\Psi_{[N]}^*} =c_\gamma\trace{\Lambda_{\Pi_1,2\beta}^{-1}\Lambda_{\Pi_1,2\beta+2\gamma}}   \nonumber 	\\
	&=& c_\gamma \sum_{m=0}^{N-1}\dfrac{\sum_{k=0}^{N-1}\Big(\sum_{\eta=0}^{\nu-1}t_{k+N\eta}^{-2\beta-2\gamma} \Big)e_{m,k}^{(N)}}{\sum_{k=0}^{N-1}\Big(\sum_{\eta=0}^{\nu-1}t_{k+N\eta}^{-2\beta} \Big)e_{m,k}^{(N)}}
	=c_\gamma\sum_{k=0}^{N-1}\dfrac{\sum_{\eta=0}^{\nu-1}t_{k+N\eta}^{-2\beta-2\gamma} }{\sum_{\eta=0}^{\nu-1}t_{k+N\eta}^{-2\beta}}\,.
\end{eqnarray}
The results in Lemma~\ref{LMMA:Property of Psi} also give that
\begin{eqnarray*}
& & \trace{(\Psi_\bbT\bLambda_\bbT^{-2\beta}\Psi_\bbT^*)^{-*}\Psi_\bbT\bLambda_\bbT^{-4\beta}\Psi_\bbT^*(\Psi_\bbT\bLambda_\bbT^{-2\beta}\Psi_\bbT^*)^{-1}\Psi \bK \Psi^*}\\
&=& \trace{\dfrac{1}{N^2}\Psi_{[N]}\bLambda_{\Pi_1,2\beta}^{-1}\Psi_{[N]}^*\Psi_{[N]}\bLambda_{\Pi_1,4\beta}\Psi_{[N]}^*\dfrac{1}{N^2}\Psi_{[N]}\bLambda_{\Pi_1,2\beta}^{-1}\Psi_{[N]}^* \Psi_{[N]}\big(\bLambda_{\Pi_1,2\gamma}+\bLambda_{\Pi_2,2\gamma}\big) \Psi_{[N]}^*}\\
	&=& \trace{\bLambda_{\Pi_1,2\beta}^{-1}\bLambda_{\Pi_1,4\beta}\bLambda_{\Pi_1,2\beta}^{-1}\big(\bLambda_{\Pi_1,2\gamma}+\bLambda_{\Pi_2,2\gamma}\big)}\,.
\end{eqnarray*}
We can plug in the formula of ~\ref{LMMA:Property of Psi} to get
\begin{eqnarray}\label{EQ:OP Term3}
&& \trace{(\Psi_\bbT\bLambda_\bbT^{-2\beta}\Psi_\bbT^*)^{-*}\Psi_\bbT\bLambda_\bbT^{-4\beta}\Psi_\bbT^*(\Psi_\bbT\bLambda_\bbT^{-2\beta}\Psi_\bbT^*)^{-1}\Psi \bK \Psi^*}  \nonumber \\
	&=& \sum_{m=0}^{N-1}\dfrac{\Big[\sum_{k=0}^{N-1}\Big(\sum_{\eta=0}^{\nu-1}t_{k+N\eta}^{-4\beta} \Big)e_{m,k}^{(N)}\Big]\Big[\sum_{k=0}^{N-1}\Big(\sum_{\eta=0}^{\mu-1}t_{k+N\eta}^{-2\gamma} \Big)e_{m,k}^{(N)}\Big]}{\Big[\sum_{k=0}^{N-1}\Big(\sum_{\eta=0}^{\nu-1}t_{k+N\eta}^{-2\beta} \Big)e_{m,k}^{(N)}\Big]^2} \nonumber  \\
	&=& \sum_{k=0}^{N-1}\dfrac{\Big(\sum_{\eta=0}^{\nu-1}t_{k+N\eta}^{-4\beta}\Big)\Big(\sum_{\eta=0}^{\mu-1}t_{k+N\eta}^{-2\gamma} \Big) }{\Big[\sum_{\eta=0}^{\nu-1}t_{k+N\eta}^{-2\beta}\Big]^2}\,.
\end{eqnarray}


We can now put ~\eqref{EQ:OP Term1}, ~\eqref{EQ:OP Term2} and ~\eqref{EQ:OP Term3} together into the general error formula~\eqref{EQ:Error General OP1} to get the result~\eqref{EQ:Error OP Exact} in Theorem~\ref{THM:Clean Training}. The proof is complete.
\end{proof}


\begin{proof}[Proof of Theorem~\ref{THM:Clean Training} (Underparameterized Regime)]

In the underparameterized regime~\eqref{EQ:Para UP}, we have that the Moore--Penrose inverse $\Phi_\bbT^+=(\Phi_\bbT^*\Phi_\bbT)^{-1}\Phi_\bbT^*$. This leads to 
\[
	\Phi_\bbT^+\Phi_\bbT=\bI_\bbT \ \ \ \mbox{and}\ \ \ (\Phi_\bbT^+)^*\bLambda_{\bbT}^{-2\beta} \Phi_\bbT^+=\Phi_\bbT(\Phi_\bbT^*\Phi_\bbT)^{-*} \bLambda_{\bbT}^{-2\beta}(\Phi_\bbT^*\Phi_\bbT)^{-1}\Phi_\bbT^*\,.
\]
Therefore, using notations in~\Cref{LMMA:Error General}, we have that $\cP_{\alpha,\beta}$ is simplified to
\[
	\cP_{\alpha,\beta}=-\trace{\bK_\bbT}\,,
\]
while $\cQ_{\alpha,\beta}$ is simplified to
\begin{eqnarray}
\cQ_{\alpha,\beta} &=&\trace{\Phi_\bbT(\Phi_\bbT^*\Phi_\bbT)^{-*} \bLambda_{\bbT}^{-2\beta}(\Phi_\bbT^*\Phi_\bbT)^{-1}\Phi_\bbT^* \bLambda_{[N]}^{-\alpha} \Psi_{\bbT^c} \bK_{\bbT^c}\Psi_{\bbT^c}^* \bLambda_{[N]}^{-\alpha}}   \nonumber \\
&=& \trace{\bLambda_{[N]}^{-2\alpha}\Psi_\bbT(\Psi_\bbT^*\bLambda_{[N]}^{-2\alpha}\Psi_\bbT)^{-*}(\Psi_\bbT^*\bLambda_{[N]}^{-2\alpha}\Psi_\bbT)^{-1}\Psi_\bbT^*\bLambda_{[N]}^{-2\alpha} \Psi_{\bbT^c} \bK_{\bbT^c}\Psi_{\bbT^c}^*}\,. 
\end{eqnarray}

In this regime, $p\le N\le P$. We therefore have $\bbT^c=\big([N]\backslash\bbT\big)\cup[N]^c$ ($[N]^c:=[P]\backslash[N]$). Using the fact that $\bK$ is diagonal, we obtain
\[
	\Psi_{\bbT^c} \bK_{\bbT^c}\Psi_{\bbT^c}^*=\Psi_{[N]\backslash\bbT} \bK_{[N]\backslash\bbT}\Psi_{[N]\backslash\bbT}^*+\Psi_{[N]^c} \bK_{[N]^c}\Psi_{[N]^c}^*
\]
Following the result of Lemma~\ref{LMMA:Property of Psi}, the second part of decomposition is simply the matrix $\bPi_2$ with $\nu=1$. To avoid confusion, we denote it by $\bPi_3$ and use the result of  Lemma~\ref{LMMA:Property of Psi} to get
\[
	\Psi_{[N]^c} \bK_{[N]^c}\Psi_{[N]^c}^*=\Psi_{[N]}\bLambda_{\Pi_3,2\gamma} \Psi_{[N]}^*
\]
where the diagonal elements of $\bLambda_{\Pi_3,2\gamma}$ are
\begin{equation}\label{EQ:Eigen Val Pi3}
		\lambda_{\Pi_3,2\gamma}^{(m)}=\sum_{k=0}^{N-1}\Big(\sum_{\eta=1}^{\mu-1}t_{k+N\eta}^{-2\gamma} \Big)\Big(\dfrac{1}{N}\sum_{j=0}^{N-1}\omega_N^{(m-k)j}\Big),\ \ 0\le m\le N-1\,.
\end{equation}
Next we perform the decomposition
\[
	\Psi_{[N]}\bLambda_{\Pi_3,2\gamma}\Psi_{[N]}^* 
	=\Psi_\bbT{\bLambda_{\Pi_3,2\gamma}}_\bbT\Psi_\bbT^*+\Psi_{[N]\backslash\bbT}{\bLambda_{\Pi_3,2\gamma}}_{[N]\backslash\bbT} \Psi_{[N]\backslash\bbT}^*\,,
\]
and define the diagonal matrices
\[
	\wh\bK_1:={\bLambda_{\Pi_3,2\gamma}}_{\bbT}\ \ \ \mbox{and}\ \ \ \wh\bK_2:=\bK_{[N]\backslash\bbT} +{\bLambda_{\Pi_3,2\gamma}}_{[N]\backslash\bbT} \,.
\]
We can then have
\[
	\Psi_{\bbT^c} \bK_{\bbT^c}\Psi_{\bbT^c}^*=\Psi_\bbT\wh\bK_1 \Psi_\bbT^*+\Psi_{[N]\backslash\bbT} \wh\bK_2\Psi_{[N]\backslash\bbT}^*\,.
\]

Plugging this into the expression for $\cQ_{\alpha,\beta}$, we have
\begin{multline}
	\cQ_{\alpha,\beta} =\trace{\bLambda_{[N]}^{-2\alpha}\Psi_\bbT(\Psi_\bbT^*\bLambda_{[N]}^{-2\alpha}\Psi_\bbT)^{-*}(\Psi_\bbT^*\bLambda_{[N]}^{-2\alpha}\Psi_\bbT)^{-1}\Psi_\bbT^*\bLambda_{[N]}^{-2\alpha} \Psi_\bbT\wh\bK_1 \Psi_\bbT^*}\\
	+\trace{\bLambda_{[N]}^{-2\alpha}\Psi_\bbT(\Psi_\bbT^*\bLambda_{[N]}^{-2\alpha}\Psi_\bbT)^{-*}(\Psi_\bbT^*\bLambda_{[N]}^{-2\alpha}\Psi_\bbT)^{-1}\Psi_\bbT^*\bLambda_{[N]}^{-2\alpha} \Psi_{[N]\backslash\bbT} \wh\bK_2\Psi_{[N]\backslash\bbT}^*}\,.
\end{multline}
The first term simplifies to $\trace{\wh\bK_1}$. The second term vanishes in the case of $\alpha=0$ and in the case when the problem is formally determined; that is, when $p=N$, but is in general nonzero when $p<N$. Using the result in part (ii) of Lemma~\ref{LMMA:Property of Psi}, we have that
\begin{eqnarray*}
	& & \trace{\bLambda_{[N]}^{-2\alpha}\Psi_\bbT(\Psi_\bbT^*\bLambda_{[N]}^{-2\alpha}\Psi_\bbT)^{-*}(\Psi_\bbT^*\bLambda_{[N]}^{-2\alpha}\Psi_\bbT)^{-1}\Psi_\bbT^*\bLambda_{[N]}^{-2\alpha} \Psi_{[N]\backslash\bbT} \wh\bK_2\Psi_{[N]\backslash\bbT}^*}\\
	&=& \trace{\Psi_{[N]\backslash\bbT}^*\bLambda_{[N]}^{-2\alpha}\Psi_\bbT(\Psi_\bbT^*\bLambda_{[N]}^{-2\alpha}\Psi_\bbT)^{-*}(\Psi_\bbT^*\bLambda_{[N]}^{-2\alpha}\Psi_\bbT)^{-1}\Psi_\bbT^*\bLambda_{[N]}^{-2\alpha} \Psi_{[N]\backslash\bbT} \wh\bK_2}\\
	&=& N\trace{\bX^{-*}\Psi_{[N]\backslash\bbT}^*\bLambda_{[N]}^{4\alpha} \Psi_{[N]\backslash\bbT}\bX^{-1} \wh\bK_2}-\trace{\wh\bK_2},\ \ \bX:=\Psi_{[N]\backslash\bbT}^*\bLambda_{[N]}^{2\alpha}\Psi_{[N]\backslash\bbT}\,.
\end{eqnarray*}
Using this result and the singular value decomposition $$\bLambda_{[N]}^{\alpha}\Psi_{[N]\backslash\bbT}=\bU{\rm diag}([\Sigma_{00}, \cdots, \Sigma_{(N-p-1)(N-p-1)}])\bV^*$$ introduced in Theorem~\ref{THM:Clean Training}, we have
\begin{eqnarray*}
	\cQ_{\alpha,\beta} &=&\trace{\wh\bK_1-\wh\bK_2}+N\trace{\bSigma^{-1}\bU^*\bLambda_{[N]}^{2\alpha}\bU\bSigma^{-1}\bV^*\wh\bK_2\bV}\\
	&=& \trace{{\bLambda_{\Pi_3,2\gamma}}_{\bbT}-\bK_{[N]\backslash\bbT} -{\bLambda_{\Pi_3,2\gamma}}_{[N]\backslash\bbT}}+N\sum_{i=0}^{N-p-1}\sum_{j=0}^{N-p-1}\dfrac{\wt e_{ij}^{(N)}\wh e_{ji}^{(N)}}{\Sigma_{ii} \Sigma_{jj}}\,.
\end{eqnarray*}
where
$$ 
\wt e_{ij}^{(N)}=\sum_{k=0}^{N-1}t_{k}^{2\alpha} \overline{U}_{ki}U_{kj}, \quad
	\wh e_{ij}^{(N)} =\sum_{k'=0}^{N-p-1}(c_\gamma t_{p+k'}^{-2\gamma}+\lambda_{\Pi_3,2\gamma}^{(p+k')}) \overline{V}_{ik'}V_{jk'}, \ \ \ 0\le i, j\le N-p-1\,.
$$
The proof is complete when we insert the expressions  of $\cP_{\alpha,\beta}$ and $\cQ_{\alpha,\beta}$ back to the general formula~\eqref{EQ:Error General} and use the form of $\lambda_{\Pi_3, 2\gamma}^{(m)}$ in~\eqref{EQ:Eigen Val Pi3}.
\end{proof}

When the problem is formally determined,  i.e.,  in the case of $N=p$ (equivalent to $\nu=1$), $\Psi_\bbT^*\bLambda_{[N]}^{-2\alpha}\Psi_\bbT=\Psi_\bbT\bLambda_{[N]}^{-2\alpha}\Psi_\bbT^*$. This allows us  to find that
\begin{eqnarray*}
	\cQ_{\alpha,\beta}&=&\trace{\dfrac{1}{N^2} \Psi_{[N]}\bLambda_{\Pi_1,2\alpha}^{-1}\Psi_{[N]}^*\Psi_{[N]}\bLambda_{\Pi_1,4\alpha}\Psi_{[N]}^*\Psi_{[N]}\bLambda_{\Pi_1,2\alpha}^{-1}\Psi_{[N]}^*\dfrac{1}{N^2}\Psi_{[N]}\bLambda_{\Pi_2,2 \gamma} \Psi_{[N]}^*}\\
	&=& \trace{\bLambda_{\Pi_1,2\alpha}^{-2}\bLambda_{\Pi_1,4\alpha}\bLambda_{\Pi_2,2\gamma}}=\sum_{k=0}^{N-1} \dfrac{\Big(\sum_{\eta=0}^{\nu-1}t_{k+N\eta}^{-4\alpha}\Big)\Big(\sum_{\eta=\nu}^{\mu-1}t_{k+N\eta}^{-2\gamma} \Big) }{\Big[\sum_{\eta=0}^{\nu-1}t_{k+N\eta}^{-2\alpha}\Big]^2}\,,
\end{eqnarray*}
which degenerates to its form~\eqref{EQ:OP Term3} in the overparameterized regime with $\nu=1$. The result is then independent of $\alpha$ since the terms that involve $\alpha$ cancel each other.  Similarly,  the simplification of  $\cQ_{\alpha,\beta}$ in~\eqref{EQ:OP Term3} with $\nu=1$ ($N = p$) also leads to the fact that $\beta$ disappears from the formula.  

\subsection{Proofs of Lemma~\ref{THM:Noisy Training} and Theorem~\ref{THM:Noisy Training Bounds}}
\label{SUBSEC:ProofNoise}

\begin{proof}[Proof of Lemma~\ref{THM:Noisy Training}]
By Lemma~\ref{LMMA:Error General Noise}, the main task is to estimate the size of the generalization error caused by the random noise in the two regimes. The error is, according to~\eqref{EQ:Error General Noise},
\[
	\cE_{\rm noise}(P, p, N)=\trace{\bLambda_{[N]}^{-\alpha} (\Phi_\bbT^+)^*\bLambda_\bbT^{-2\beta} \Phi_\bbT^+\bLambda_{[N]}^{-\alpha}\bbE_\btheta[\bdelta\bdelta^*]}\,.
\]
In the overparameterized regime, we have that
\begin{eqnarray*}
	\bLambda_{[N]}^{-\alpha} (\Phi_\bbT^+)^*\bLambda_\bbT^{-2\beta} \Phi_\bbT^+\bLambda_{[N]}^{-\alpha}&=&\bLambda_{[N]}^{-\alpha}(\Phi_\bbT\Phi_\bbT^*)^{-*}\Phi_\bbT \bLambda_{\bbT}^{-2\beta}\Phi_\bbT^*(\Phi_\bbT\Phi_\bbT^*)^{-1} \bLambda_{[N]}^{-\alpha}\\
	&=& (\Psi_\bbT\bLambda_\bbT^{-2\beta}\Psi_\bbT^*)^{-*}\Psi_\bbT\bLambda_\bbT^{-4\beta}\Psi_\bbT^*(\Psi_\bbT\bLambda_\bbT^{-2\beta}\Psi_\bbT^*)^{-1}\,.
\end{eqnarray*}
Therefore we have, using the results in part (i) of Lemma~\ref{LMMA:Property of Psi},
\begin{eqnarray*}
	\cE_{\rm noise}(P, p, N)&=&\sigma^2\trace{\dfrac{1}{N^2}\Psi_{[N]}\bLambda_{\Pi_1,2\beta}^{-1}\Psi_{[N]}^*\Psi_{[N]}\bLambda_{\Pi_1,4\beta}\Psi_{[N]}^*\dfrac{1}{N^2}\Psi_{[N]}\Lambda_{\Pi_1,2\beta}^{-1}\Psi_{[N]}^*}\\
	&=&\dfrac{\sigma^2}{N}\trace{\bLambda_{\Pi_1,2\beta}^{-1}\bLambda_{\Pi_1,4\beta}\bLambda_{\Pi_1,2\beta}^{-1}}
	=\dfrac{\sigma^2}{N} \sum_{k=0}^{N-1}\dfrac{\sum_{\eta=0}^{\nu-1}t_{k+N\eta}^{-4\beta} }{\Big[\sum_{\eta=0}^{\nu-1}t_{k+N\eta}^{-2\beta}\Big]^2}\,.
\end{eqnarray*}
To get the variance of the generalization error with respect to noise, we first compute
\begin{eqnarray*}
&&	\bbE_\bdelta[\trace{\bdelta^*\bLambda_{[N]}^{-\alpha} (\Phi_\bbT^+)^*\bLambda_\bbT^{-2\beta} \Phi_\bbT^+\bLambda_{[N]}^{-\alpha}\bdelta\bdelta^*\bLambda_{[N]}^{-\alpha} (\Phi_\bbT^+)^*\bLambda_\bbT^{-2\beta} \Phi_\bbT^+\bLambda_{[N]}^{-\alpha}\bdelta}]\\
	&=&	\sigma^4\trace{\bLambda_{[N]}^{-\alpha} (\Phi_\bbT^+)^*\bLambda_\bbT^{-2\beta} \Phi_\bbT^+\bLambda_{[N]}^{-\alpha}}\trace{\bLambda_{[N]}^{-\alpha} (\Phi_\bbT^+)^*\bLambda_\bbT^{-2\beta} \Phi_\bbT^+\bLambda_{[N]}^{-\alpha}}\\
&&	+ 2\sigma^4\trace{\bLambda_{[N]}^{-\alpha} (\Phi_\bbT^+)^*\bLambda_\bbT^{-2\beta} \Phi_\bbT^+\bLambda_{[N]}^{-\alpha}\bLambda_{[N]}^{-\alpha} (\Phi_\bbT^+)^*\bLambda_\bbT^{-2\beta} \Phi_\bbT^+\bLambda_{[N]}^{-\alpha}}\\
&	=&\dfrac{\sigma^4}{N^4} \trace{\Psi_{[N]}\bLambda_{\Pi_1,2\beta}^{-1}\bLambda_{\Pi_1,4\beta}\bLambda_{\Pi_1,2\beta}^{-1}\Psi_{[N]}^*}\trace{\Psi_{[N]}\bLambda_{\Pi_1,2\beta}^{-1}\bLambda_{\Pi_1,4\beta}\bLambda_{\Pi_1,2\beta}^{-1}\Psi_{[N]}}\\
&&	+\dfrac{2\sigma^4}{N^4} \trace{\Psi_{[N]}\bLambda_{\Pi_1,2\beta}^{-1}\bLambda_{\Pi_1,4\beta}\bLambda_{\Pi_1,2\beta}^{-1}\Psi_{[N]}^*\Psi_{[N]}\bLambda_{\Pi_1,2\beta}^{-1}\bLambda_{\Pi_1,4\beta}\bLambda_{\Pi_1,2\beta}^{-1}\Psi_{[N]}^*}\\
&=&\dfrac{\sigma^4}{N^2} \trace{\bLambda_{\Pi_1,2\beta}^{-1}\bLambda_{\Pi_1,4\beta}\bLambda_{\Pi_1,2\beta}^{-1}}\trace{\bLambda_{\Pi_1,2\beta}^{-1}\bLambda_{\Pi_1,4\beta}\bLambda_{\Pi_1,2\beta}^{-1}}\\
&&	+\dfrac{2\sigma^4}{N^2} \trace{\bLambda_{\Pi_1,2\beta}^{-1}\bLambda_{\Pi_1,4\beta}\bLambda_{\Pi_1,2\beta}^{-1}\bLambda_{\Pi_1,2\beta}^{-1}\bLambda_{\Pi_1,4\beta}\bLambda_{\Pi_1,2\beta}^{-1}}\\
&=&\frac{\sigma^4}{N^2} \Big[\sum_{k=0}^{N-1}\dfrac{\sum_{\eta=0}^{\nu-1}t_{k+N\eta}^{-4\beta}}{\Big[\sum_{\eta=0}^{\nu-1}t_{k+N\eta}^{-2\beta}\Big]^2}\Big]^2+\frac{2\sigma^4}{N^2} \sum_{k=0}^{N-1}\dfrac{\Big[\sum_{\eta=0}^{\nu-1}t_{k+N\eta}^{-4\beta} \Big]^2}{\Big[\sum_{\eta=0}^{\nu-1}t_{k+N\eta}^{-2\beta}\Big]^4}\,.
\end{eqnarray*}
This, together with the general form of the variance in~\eqref{EQ:General Variance}, gives the result in ~\eqref{EQ:Noise Error OP}.

In the underparameterized regime, we have that
\begin{equation*}
	\bLambda_{[N]}^{-\alpha} (\Phi_\bbT^+)^*\bLambda_\bbT^{-2\beta} \Phi_\bbT^+\bLambda_{[N]}^{-\alpha}=
	\bLambda_{[N]}^{-2\alpha}\Psi_\bbT (\Psi_\bbT^*\bLambda_{[N]}^{-2\alpha}\Psi_\bbT)^{-*}(\Psi_\bbT^*\bLambda_{[N]}^{-2\alpha} \Psi_\bbT)^{-1}\Psi_\bbT^*\bLambda_{[N]}^{-2\alpha}\,.
\end{equation*}
Therefore we have, using the fact that $\Psi_\bbT\Psi_\bbT^*+ \Psi_{[N]\backslash\bbT}\Psi_{[N]\backslash\bbT}^*=N\bI_{[N]}$, that
\begin{eqnarray}\label{EQ:Inv}
&&	\bLambda_{[N]}^{-\alpha} (\Phi_\bbT^+)^*\bLambda_\bbT^{-2\beta} \Phi_\bbT^+\bLambda_{[N]}^{-\alpha} \\
	&=&\dfrac{1}{N^2}\Big(\Psi_\bbT\Psi_\bbT^*+ \Psi_{[N]\backslash\bbT}\Psi_{[N]\backslash\bbT}^*\Big)\bLambda_{[N]}^{-\alpha} (\Phi_\bbT^+)^*\bLambda_\bbT^{-2\beta} \Phi_\bbT^+\bLambda_{[N]}^{-\alpha}\Big(\Psi_\bbT\Psi_\bbT^*+ \Psi_{[N]\backslash\bbT}\Psi_{[N]\backslash\bbT}^*\Big) \nonumber  \\
	&=& \dfrac{1}{N^2} \Psi_\bbT\Psi_\bbT^*+ \dfrac{1}{N^2}\Psi_\bbT(\Psi_\bbT^*\bLambda_{[N]}^{-2\alpha}\Psi_\bbT)^{-1}\Psi_\bbT^*\bLambda_{[N]}^{-2\alpha}\Psi_{[N]\backslash\bbT}\Psi_{[N]\backslash\bbT}^* \nonumber \\
	&& +\dfrac{1}{N^2}\Psi_{[N]\backslash\bbT}\Psi_{[N]\backslash\bbT}^*\bLambda_{[N]}^{-2\alpha}\Psi_\bbT (\Psi_\bbT^*\bLambda_{[N]}^{-2\alpha}\Psi_\bbT)^{-*}\Psi_\bbT^* \nonumber  \\
&&	+\dfrac{1}{N^2}\Psi_{[N]\backslash\bbT}\Psi_{[N]\backslash\bbT}^*\bLambda_{[N]}^{-2\alpha}\Psi_\bbT (\Psi_\bbT^*\bLambda_{[N]}^{-2\alpha}\Psi_\bbT)^{-*}(\Psi_\bbT^*\bLambda_{[N]}^{-2\alpha} \Psi_\bbT)^{-1}\Psi_\bbT^*\bLambda_{[N]}^{-2\alpha}\Psi_{[N]\backslash\bbT}\Psi_{[N]\backslash\bbT}^*\,. \nonumber 
\end{eqnarray}
This leads to, using the fact that  $\Psi_\bbT^*\Psi_\bbT=N\bI_{[p]}$ and properties of traces,
\begin{multline*}
	\cE_{\rm noise}(P, p, N)=\sigma^2 \trace{\bLambda_{[N]}^{-\alpha} (\Phi_\bbT^+)^*\bLambda_\bbT^{-2\beta} \Phi_\bbT^+\bLambda_{[N]}^{-\alpha}}\\
	=\dfrac{p}{N}\sigma^2+\dfrac{\sigma^2}{N}\trace{\Psi_{[N]\backslash\bbT}^*\bLambda_{[N]}^{-2\alpha}\Psi_\bbT (\Psi_\bbT^*\bLambda_{[N]}^{-2\alpha}\Psi_\bbT)^{-*}(\Psi_\bbT^*\bLambda_{[N]}^{-2\alpha} \Psi_\bbT)^{-1}\Psi_\bbT^*\bLambda_{[N]}^{-2\alpha}\Psi_{[N]\backslash\bbT}}
\end{multline*}
Using the result in part (ii) of Lemma~\ref{LMMA:Property of Psi}, we have that the second term simplifies to
\begin{multline*}
	\trace{\Psi_{[N]\backslash\bbT}^*\bLambda_{[N]}^{-2\alpha}\Psi_\bbT(\Psi_\bbT^*\bLambda_{[N]}^{-2\alpha}\Psi_\bbT)^{-*}(\Psi_\bbT^*\bLambda_{[N]}^{-2\alpha}\Psi_\bbT)^{-1}\Psi_\bbT^*\bLambda_{[N]}^{-2\alpha} \Psi_{[N]\backslash\bbT} }\\
	=N\trace{\bX^{-*}\Psi_{[N]\backslash\bbT}^*\bLambda_{[N]}^{4\alpha} \Psi_{[N]\backslash\bbT}\bX^{-1}}-(N-p),\ \ \bX:=\Psi_{[N]\backslash\bbT}^*\bLambda_{[N]}^{2\alpha}\Psi_{[N]\backslash\bbT}\,.
\end{multline*}
Using this result and the singular value decomposition $$\bLambda_{[N]}^{\alpha}\Psi_{[N]\backslash\bbT}=\bU{\rm diag}([\Sigma_{00}, \cdots, \Sigma_{(N-p-1)(N-p-1)}])\bV^*$$ introduced in Theorem~\ref{THM:Clean Training}, we have
\begin{multline*}
\cE_{\rm noise}(P, p, N)=\dfrac{p}{N}\sigma^2-\dfrac{N-p}{N}\sigma^2+ \sigma^2\trace{\bSigma^{-1}\bU^*\bLambda_{[N]}^{2\alpha}\bU\bSigma^{-1}}=\sigma^2\Big(\dfrac{2p-N}{N}+\sum_{j=0}^{N-p-1}\dfrac{\wt e_{jj}^{(N)}}{\Sigma_{jj}^2}\Big)\,.
\end{multline*}

Following the same procedure as in the overparameterized regime, the variance of the generalization error with respect to noise in this case is
\begin{eqnarray*}
&&	\bbE_\bdelta[\trace{\bdelta^*\bLambda_{[N]}^{-\alpha} (\Phi_\bbT^+)^*\bLambda_\bbT^{-2\beta} \Phi_\bbT^+\bLambda_{[N]}^{-\alpha}\bdelta\bdelta^*\bLambda_{[N]}^{-\alpha} (\Phi_\bbT^+)^*\bLambda_\bbT^{-2\beta} \Phi_\bbT^+\bLambda_{[N]}^{-\alpha}\bdelta}]\\
	&=&\sigma^4 \trace{\bLambda_{[N]}^{-\alpha} (\Phi_\bbT^+)^*\bLambda_\bbT^{-2\beta} \Phi_\bbT^+\bLambda_{[N]}^{-\alpha}}\trace{\bLambda_{[N]}^{-\alpha} (\Phi_\bbT^+)^*\bLambda_\bbT^{-2\beta} \Phi_\bbT^+\bLambda_{[N]}^{-\alpha}}\\
&& +2\sigma^4 \trace{\bLambda_{[N]}^{-\alpha} (\Phi_\bbT^+)^*\bLambda_\bbT^{-2\beta} \Phi_\bbT^+\bLambda_{[N]}^{-\alpha}\bLambda_{[N]}^{-\alpha} (\Phi_\bbT^+)^*\bLambda_\bbT^{-2\beta} \Phi_\bbT^+\bLambda_{[N]}^{-\alpha}}
\end{eqnarray*}
The first term on the right hand side is simply $\big(\cE_{\rm noise}(P, p, N)\big)^2$. To evaluate the second term, we use the formula~\eqref{EQ:Inv}. It is straightforward to obtain, after some algebra, that
\begin{eqnarray*}
& & \trace{\bLambda_{[N]}^{-\alpha} (\Phi_\bbT^+)^*\bLambda_\bbT^{-2\beta} \Phi_\bbT^+\bLambda_{[N]}^{-\alpha}\bLambda_{[N]}^{-\alpha} (\Phi_\bbT^+)^*\bLambda_\bbT^{-2\beta} \Phi_\bbT^+\bLambda_{[N]}^{-\alpha}}\\
& =& \dfrac{1}{N^3} \trace{\Psi_\bbT\Psi_\bbT^*}+
	 \dfrac{2}{N^2}\trace{\Psi_{[N]\backslash\bbT}^*\bLambda_{[N]}^{-2\alpha}\Psi_\bbT (\Psi_\bbT^*\bLambda_{[N]}^{-2\alpha}\Psi_\bbT)^{-*}(\Psi_\bbT^*\bLambda_{[N]}^{-2\alpha}\Psi_\bbT)^{-1}\Psi_\bbT^*\bLambda_{[N]}^{-2\alpha}\Psi_{[N]\backslash\bbT}}\\
	& & +\dfrac{1}{N^2}\trace{\big[\Psi_{[N]\backslash\bbT}^*\bLambda_{[N]}^{-2\alpha}\Psi_\bbT (\Psi_\bbT^*\bLambda_{[N]}^{-2\alpha}\Psi_\bbT)^{-*}(\Psi_\bbT^*\bLambda_{[N]}^{-2\alpha} \Psi_\bbT)^{-1}\Psi_\bbT^*\bLambda_{[N]}^{-2\alpha}\Psi_{[N]\backslash\bbT}\big]^2}\\
	&= & \dfrac{p}{N^2}+\dfrac{2}{N^2}\Big(N\trace{\bX^{-*}\Psi_{[N]\backslash\bbT}^*\bLambda_{[N]}^{4\alpha} \Psi_{[N]\backslash\bbT}\bX^{-1}}-(N-p)\Big)\\
	& & +\dfrac{1}{N^2}\Big(N^2\trace{\bX^{-*}\Psi_{[N]\backslash\bbT}^*\bLambda_{[N]}^{4\alpha} \Psi_{[N]\backslash\bbT}\bX^{-1}\bX^{-*}\Psi_{[N]\backslash\bbT}^*\bLambda_{[N]}^{4\alpha} \Psi_{[N]\backslash\bbT}\bX^{-1}}\\
	& & -2N\trace{\bX^{-*}\Psi_{[N]\backslash\bbT}^*\bLambda_{[N]}^{4\alpha} \Psi_{[N]\backslash\bbT}\bX^{-1}}+\trace{\bI_{[N-p]}}\Big)\\
	&=& \dfrac{2p-N}{N^2}+\trace{\bX^{-*}\Psi_{[N]\backslash\bbT}^*\bLambda_{[N]}^{4\alpha} \Psi_{[N]\backslash\bbT}\bX^{-1}\bX^{-*}\Psi_{[N]\backslash\bbT}^*\bLambda_{[N]}^{4\alpha} \Psi_{[N]\backslash\bbT}\bX^{-1}}\\
	&=&\dfrac{2p-N}{N^2}+\trace{\bX^{-*}\Psi_{[N]\backslash\bbT}^*\bLambda_{[N]}^{4\alpha} \Psi_{[N]\backslash\bbT}\bX^{-1}\bX^{-*}\Psi_{[N]\backslash\bbT}^*\bLambda_{[N]}^{4\alpha} \Psi_{[N]\backslash\bbT}\bX^{-1}}
\end{eqnarray*}
This leads to
\begin{eqnarray*}
&& \bbE_\bdelta[\trace{\bdelta^*\bLambda_{[N]}^{-\alpha} (\Phi_\bbT^+)^*\bLambda_\bbT^{-2\beta} \Phi_\bbT^+\bLambda_{[N]}^{-\alpha}\bdelta\bdelta^*\bLambda_{[N]}^{-\alpha} (\Phi_\bbT^+)^*\bLambda_\bbT^{-2\beta} \Phi_\bbT^+\bLambda_{[N]}^{-\alpha}\bdelta}]\\
	&=& \big(\cE_{\rm noise}(P, p, N)\big)^2+	2 \sigma^4 \Big(\dfrac{2p-N}{N^2}+\trace{\bSigma^{-2}\bU_\bbT^*\bLambda_{[N]}^{2\alpha}\bU_\bbT\bSigma^{-2}\bU_\bbT^*\bLambda_{[N]}^{2\alpha}\bU_\bbT}\Big)\\
	&=& \big(\cE_{\rm noise}(P, p, N)\big)^2+	2 \sigma^4 \Big(\dfrac{2p-N}{N^2}+\sum_{i=0}^{N-p-1} \sum_{j=0}^{N-p-1}\dfrac{\wt e_{ij}^{(N)}\wt e_{ji}^{(N)}}{\Sigma_{ii}^2 \Sigma_{jj}^2}\Big)\,.
\end{eqnarray*}
Inserting this into the general form of the variance, i.e.~\eqref{EQ:General Variance}, will lead to~\eqref{EQ:Noise Error UP} for the underparameterized regime. The proof is complete.
\end{proof}

\begin{proof}[Proof of Theorem~\ref{THM:Noisy Training Bounds}]
By standard decomposition, we have that
\begin{eqnarray}\label{EQ:Error Decomp}
  \|\wh\btheta^\delta-\btheta\|_{2}^2 &=& \|\wh\btheta^\delta-\wh\btheta+\wh\btheta-\btheta\|_{2}^2\nonumber  \\
  &=& \left\|\begin{pmatrix}\Psi_\bbT^+\\ \bO_{(P-p)\times N}\end{pmatrix}(\by^\delta-\by)+(\bI_{[P]}-\begin{pmatrix}\Psi_\bbT^+\Psi\\ \bO_{(P-p)\times P}\end{pmatrix})\btheta\right\|_{2}^2 \nonumber \\ 
 & \le& 2\|\Psi_\bbT^+\bdelta\|_{2}^2+2\left\|(\bI_{[P]}-\begin{pmatrix}\Psi_\bbT^+\Psi\\ \bO_{(P-p)\times P}\end{pmatrix})\btheta\right\|_{2}^2\,.
\end{eqnarray}
From the calculations in the previous sections and the normalization of $\Psi$ we used, it is straightforward to verify that, for any fixed $N$,
\[
	\|\Psi_\bbT^+\|_{2,\bLambda_{[N]}^{-\alpha} \mapsto 2} \sim \left\{\begin{matrix} \frac{1}{\sqrt{\nu N}}p^{-\alpha}=p^{-\frac{1}{2}-\alpha}, & p=\nu N\\
	p^{-\alpha}, & p<N
	\end{matrix}
	\right.
\]
and
\[
	\left\|\bI_{[P]}-\begin{pmatrix}\Psi_\bbT^+\Psi\\ \bO_{(P-p)\times P}\end{pmatrix}\right\|_{2, \bLambda_{[P]}^{-\beta} \mapsto 2} \sim p^{-\beta}\,,
\]
where $\|\bX\|_{2,\bLambda_{[N]}^{-\alpha} \mapsto 2}$ denotes the norm of $\bX$ as an operator from the $\bLambda_{[N]}^{-\alpha}$-weighted $\bbC^N$ to $\bbC^p$. 

This allows us to conclude, after taking expectation with respect to $\btheta$ and then $\bdelta$,  that
\begin{multline*}
  \bbE_{\bdelta, \btheta}[\|\wh\btheta^\delta-\btheta\|_{2}^2] \leq 2\|\Psi^+ \|_{2,\bLambda_{[N]}^{-\alpha} \mapsto 2}^2 \bbE_{\bdelta}[\|\bLambda_{[N]}^{-\alpha}(\by^\delta-\by)\|_{2}^2] \\
  + 2\|(\bI_{[P]}-\Psi^+\Psi)\|_{2,\bLambda_{[P]}^{-\beta} \mapsto 2}^2 \bbE_{\btheta}[\|\btheta\|_{2,\bLambda_{[P]}^{-\beta}}^2] 
  \lesssim \rho p^{-2\alpha} \bbE_{\bdelta}[\|\bdelta\|_{2,\bLambda_{[N]}^{-\alpha}}^2] + p^{-2\beta}\bbE_{\btheta}[\|\btheta\|_{2,\bLambda_{[P]}^{-\beta}}^2]\,,
\end{multline*}
where $\rho=\min(1,N/p)$. For any fixed $N$, when $\alpha>-1/2$ in the over-parameterized regime (resp. $\alpha>0$ in the under-parameterized regime), the error decrease monotonically with respect to $p$. When $\alpha<-1/2$ in the over-parameterized regime (resp. $\alpha<0$ in the under-parameterized regime), the first term increase with $p$ while the second term descreases with $p$. 
To minimize the right-hand side, we take 
\begin{equation}\label{EQ:p critical OP}
	p \sim (\bbE_{\bdelta}[\|\bdelta\|_{2,\bLambda_{[N]}^{-\alpha}}^2]^{-1} \bbE_\btheta[\|\btheta\|_{2,\bLambda_{[P]}^{-\beta}}^2])^{\frac{1}{2(\beta-\wh\alpha)}}\,,
\end{equation}
where $\wh \alpha: = \alpha+\frac{1}{2}$ in the over-parameterized regime and $\wh \alpha=\alpha$ in the under-parameterized regime. This leads to 
\begin{equation}
 \bbE_{\btheta, \bdelta}[\|\wh\btheta^\delta -\btheta\|_{2}^2] \lesssim  \bbE_{\btheta}[\|\btheta\|_{2,\bLambda_{[P]}^{-\beta}}^2]^{\frac{-2\wh\alpha}{2(\beta-\wh\alpha)}} \bbE_{\bdelta}[\|\bdelta\|_{2,\bLambda_{[N]}^{-\alpha}}^2]^{\frac{2\beta}{2(\beta-\wh\alpha)}}.
\end{equation}
The proof is now complete.
\end{proof}

\subsection{Proof of Theorem~\ref{THM:Kernel Learning}}
\label{SUBSEC:ProofKernel}

\begin{proof}[Proof of Theorem~\ref{THM:Kernel Learning}]

Without loss of generality, we assume that $\Psi$ is diagonal with diagonal elements $\Psi_{kk}\sim k^{-\zeta}$. If not, we can rescale the weight matrix $\bLambda_{[p]}^{-\beta}$ by $\bV$ and weight matrix $\bLambda_{[N]}^{-\alpha}$ by $\bU$ as given in the theorem.

Let $\Psi^+$ be the pseudoinverse of $\Psi$ that consists of the first $p$ features such that
\begin{equation*}
  (\Psi^+)_{qq} \sim \begin{cases}
         q^\zeta,  & q\le p \\
         0, & q >p
    \end{cases}
\end{equation*}
where $\zeta$ is the exponent in the SVD of $\Psi$ assumed in Theorem~\ref{THM:Kernel Learning}.

We can then check that the operators $\bI-\Psi^+\Psi: \ell^2_{\bLambda_{[P]}^{-\beta}} \mapsto \ell^2$ and $\Psi^+: \ell^2_{\bLambda_{[N]}^{-\alpha}} \mapsto \ell^2$ have the following norms respectively
\[
	\|\Psi^+\|_{2,\bLambda_{[N]}^{-\alpha} \mapsto 2} \sim p^{\zeta-\alpha} \qquad \mbox{and}\qquad \|(\bI-\Psi^+ \Psi)\|_{2, \bLambda_{[P]}^{-\beta} \mapsto 2} \sim p^{-\beta}.
\]

By the error decomposition~\eqref{EQ:Error Decomp}, we then conclude, after taking expectation with respect to $\btheta$ and then $\bdelta$, that
\begin{multline*}
  \bbE_{\bdelta, \btheta}[\|\wh\btheta^\delta_c-\btheta\|_{2}^2] \lesssim \|\Psi^+ \|_{2,\bLambda_{[N]}^{-\alpha} \mapsto 2}^2 \bbE_{\bdelta}[\|\bLambda_{[N]}^{-\alpha}\bdelta\|_{2}^2] \\
  + \|(\bI-\Psi^+\Psi)\|_{2,\bLambda_{[P]}^{-\beta} \mapsto 2}^2 \bbE_{\btheta}[\|\btheta\|_{2,\bLambda_{[P]}^{-\beta}}^2] 
  \lesssim p^{2(\zeta-\alpha)} \bbE_{\bdelta}[\|\bdelta\|_{2,\bLambda_{[N]}^{-\alpha}}^2] + p^{-2\beta}\bbE_{\btheta}[\|\btheta\|_{2,\bLambda_{[P]}^{-\beta}}^2].
\end{multline*}

We can now select 
\begin{equation}\label{EQ:Jc}
	p \sim ( \bbE_{\bdelta}[\|\bdelta\|_{2,\bLambda_{[N]}^{-\alpha}}^2]^{-1}\bbE_\btheta[\|\btheta\|_{2,\bLambda_{[P]}^{-\beta}}]^2)^{\frac{1}{2(\zeta+\beta-\alpha)}}
\end{equation}
to minimize the error. This leads to 
\begin{equation}
 \bbE_{\bdelta, \btheta}[\|\wh\btheta^\delta -\btheta\|_2^2] \lesssim \bbE_\btheta[\|\btheta\|_{2,\bLambda_{[P]}^{-\beta}}^2]^{\frac{\zeta-\alpha}{(\zeta-\alpha+\beta)}} \bbE_{\bdelta}[\|\bdelta\|_{2,\bLambda_{[N]}^{-\alpha}}^2]^{\frac{\beta}{(\zeta+\beta-\alpha)}}.
\end{equation}
The proof is now complete.
\end{proof}

\section{On the applicability of Theorem~\ref{THM:Kernel Learning}}
\label{SUBSEC:ProofKernel Applicability}

In order for the result of Theorem~\ref{THM:Kernel Learning} to hold, we need the model to be learned to have the smoothing property. In the case of feature regression, this requires that the feature matrices (or sampling matrices) correspond to kernels whose singular values decay algebraically. This turns out to be true for many kernel regression models in practical applications. In the case of solving inverse problems, this is extremely common as most inverse problems based on physical models have smoothing operators; see, for instance, \citet{Isakov-Book06, Kirsch-Book11} for a more in-depth discussion on this issue.

\paragraph{General kernel regression.} Let $\cH$ be a reproducing kernel Hilbert space (RKHS) over $X$, and $\cK: \cH\times \cH \to \bbR$ be the corresponding (symmetric and positive semidefinite) reproducing kernel. We are interested in learning a function $f^*$ from given data $\{x_j, y_j^\delta\}_{j=1}^N$. The learning process can be formulated as
\begin{equation}
	\min_{f\in\cH} \sum_{j=1}^N \Big(f(x_j)-y_j^\delta\Big)^2+\beta\|f\|_\cH^2\,.
\end{equation}
Let $\{\varphi_k\}_{k\ge 0}$ be the eigenfunctions of the kernel $\cK$ such that
\begin{equation}
	\int \cK(x, \tilde x) \varphi_k(\tilde x) \mu(\tilde x)  d \tilde x = \lambda_k \varphi_k(x)\,,
\end{equation}
where $\mu$ is the probability measure that generates the data. Following Mercer's Theorem~\citep{RaWi-Book06}, we  have that $\cK$ admits a representation in terms of its kernel eigenfunctions; that is, $\cK(x, \tilde x)=\sum_{k\ge 0}\lambda_k \varphi_k(x)\varphi_k(\tilde x)\,$. Moreover, the solution to the learning problem as well as the target function can be approximated respectively as
\begin{equation}
	f_\btheta^*(x)=\sum_{k =0}^{P-1} \theta_k \varphi_k(x),\ \ \ \mbox{and}\ \ \ f_{\wh \btheta_p}(x)=\sum_{k=0}^{p-1} \wh \theta_k \varphi_k(x)\,,
\end{equation}
where to be consistent with the setup in the previous sections, we have used $P$ and $p$ to denote the numbers of modes in the target function and the learning solution respectively. We can now define $\Psi$ to be the feature matrix (with components $(\Psi)_{kj}=\varphi_k(x_j)$) so that the kernel regression problem can be recast as the optimization problem
\begin{equation}
	\wh \btheta_p^\delta=\argmin_{\btheta}\|\Psi \btheta_p-\by^\delta\|_{2}^2+\beta \|\btheta_p\|_{2}^2\,.
\end{equation}
For a given training data set consisting of $N$ data points, the generalization error for this learning problem can then be written in the same form as~\eqref{EQ:Generalization Error}; that is,
\begin{equation}
	\cE_{\alpha,\beta}^\delta(P, p, N)=\bbE_{\btheta,\delta}\left[\|f^*_\btheta(x)-f_{\wh\btheta_p}(x)\|_{L^2(X)}^2\right]=\bbE_{\btheta,\delta}\left[\|\wh\btheta_p-\btheta\|_{2}^2\right],
\end{equation}
using the generalized Parseval's identity (also Mercer's Theorem) in the corresponding reproducing kernel Hilbert space.

Popular kernels~\citep[Chapter 4]{RaWi-Book06} in applications include the polynomial kernel
\begin{equation}\label{EQ:Poly Kernel}
	K_{Polynomial}(x, \tilde x)=(\alpha \aver{x, \tilde x}+1)^d,
\end{equation}
where $d$ is the degree of the polynomial, and the Gaussian RBF (Radial Basis Function) kernel 
\begin{equation}\label{EQ:Gaussian Kernel}
	K_{Gaussian}(x, \tilde x)=\exp\left(-\dfrac{\|x-\tilde x\|^2}{2\sigma^2} \right) . 
\end{equation}
where $\sigma$ is the standard deviation.  For appropriate datasets, such as those normalized ones that live on the unit sphere $\bbS^{d-1}$, 
there is theoretical as well as numerical evidence to show that the feature matrix $\Psi$ has eigenvalues decay fast (for instance, algebraically). This means that for such problems, we also have that the low-frequency modes dominate the high-frequency modes in the target function. This means that the weighted optimization framework we analyzed in the previous sections should also apply here. We refer interested readers to~\citet{RaWi-Book06} and references therein for more technical details and summarize the main theoretical results here.

\paragraph{Neural tangent kernel.} It turns out that a similar technique can be used to understand some aspects of learning with neural networks. It is particularly related to the frequency bias of neural networks that has been extensively studied~\citep{BaJaKaKr-NeurIPS19,WaWuHuXi-CVPR20}. It is also closely related to the regularization properties of neural networks~\citep{MaMa-arXiv18}. To make the connection, we consider the training of a simple two-layer neural network following the work of~\citet{YaSa-arXiv19} and~\citet{BaJaKaKr-NeurIPS19}. We refer interested readers to~\citet{DaFrSi-NIPS16} where kernel formulation of the initialization of deep neural nets was first introduced. In their setting, the neural network is a concentration of a computation skeleton, and the initialization of the neural network is done by sampling a Gaussian random variable.

We denote by $f$ a two-layer neural network that takes input vector $\bx\in\bbR^d$ to output a scalar value. We assume that the hidden layer has $J$ neurons. We can then write the network, with activation function $\sigma$, as
\begin{equation}
	f(\bx; \Theta, \boldsymbol \alpha)=\dfrac{1}{\sqrt{M}}\sum_{m=1}^M \alpha_m \sigma(\btheta_m^\fT \bx),
\end{equation}
where $\Theta=[\btheta_1,\cdots, \btheta_M]\in\bbR^{d \times M}$ and $\boldsymbol\alpha=[\alpha_1,\cdots, \alpha_M]^\fT\in\bbR^J$ are respectively the weights of the hidden layer and the output layer of the network. We omit bias in the model only for simplicity.

In the analysis of~\citet{BaJaKaKr-NeurIPS19,YaSa-arXiv19}, it is assumed that the weight of the output layer $\boldsymbol \alpha$ is known and one is therefore only interested in fitting the data to get $\Theta$. This training process is done by minimizing the $L^2$ loss over the data set $\{\bx_j, y_j\}_{j=1}^N$:
\[
	\Phi(\Theta)=\dfrac{1}{2}\sum_{j=1}^N\Big(y_j-f(\bx_j; \Theta, \boldsymbol\alpha)\Big)^2\,.
\]
When the activation function $\sigma$ is taken as the ReLU function $\sigma(x)=\max(x, 0)$, we can define the matrix, which depends on $\Theta$,
\[
	\Psi(\Theta)=\dfrac{1}{\sqrt{M}}\begin{pmatrix} 
	\alpha_1\chi_{11}\bx_1 & \alpha_2\chi_{12}\bx_1 & \cdots & \alpha_M\chi_{1M}\bx_1\\
	\alpha_1\chi_{21}\bx_2 & \alpha_2\chi_{22}\bx_2 & \cdots & \alpha_M\chi_{2M}\bx_2\\
	\vdots & \vdots &\ddots & \vdots\\
	\alpha_1\chi_{N1}\bx_N & \alpha_2\chi_{N2}\bx_N & \cdots & \alpha_M\chi_{NM}\bx_N
	\end{pmatrix},
\]
where $\chi_{jm}=1$ if $\btheta_m^\fT \bx_j\ge 0$ and  $\chi_{jm}=0$ if $\btheta_m^\fT \bx_j< 0$. The least-squares training loss can then be written as $\frac{1}{2}\|\Psi(\Theta)\Theta-y\|_{2}^2$. A linearization of the least-squares problem around $\Theta_0$ can then be formulated as
\begin{equation}
	\Delta\Theta = \argmin_{\Delta\Theta}\|\Psi(\Theta_0) \Delta \Theta-\Delta y\|_{2}^2,
\end{equation}
where $\Delta y:=y-\Psi(\Theta_0)\Theta_0$ is the perturbed data.

Under the assumption that the input data are normalized such that $\|\bx\|=1$ and the weight $\alpha_k\sim \cU(-1, 1)$ ($1\le k\le M$), it was shown in~\citet{BaJaKaKr-NeurIPS19} that the singular values 
of the matrix $\Psi(\Theta_0)$ decays algebraically. In fact, starting from initialization $\btheta_m \sim \cN(0, \kappa^2 \bI)$, this result holds during the whole training process under some mild assumptions.

We summarize the main result on the Gaussian kernel and the linear kernel in the following theorem.
\begin{theorem}[Theorem 2 and Theorem 3 of~\citet{MiNiYa-LNCS06}]
Let $X=\bbS^{n-1}$, $n\in\bbN$ and $n\ge 2$. Let $\mu$ be the uniform probability distribution on $\bbS^{n-1}$. Then eigenvalues and eigenfunctions for the Gaussian kernel~\eqref{EQ:Gaussian Kernel} are respectively
\begin{equation}
	\lambda_k=e^{-2/\sigma^2}\sigma^{n-2} I_{k+n/2-1}\left(\frac{2}{\sigma^2} \right)\Gamma\left(\frac{n}{2} \right)
\end{equation}
for all $k\ge 0$, where $I$ denotes the modified Bessel function of the first kind. Each $\lambda_k$ occurs with multiplicity $N(n,k)$ with the corresponding eigenfunctions being spherical harmonics of order $k$ on $\bbS^{d-1}$. The $\lambda_k$'s are decreasing if $\sigma\ge \sqrt{2/n}$ for the Gaussian kernel~\citep{RaWi-Book06}. Furthermore, $\lambda_k$ forms a descreasing sequence and
\[
	\left(\frac{2e}{\sigma^2}\right)^k\frac{A_1}{(2k+n-2)^{k+\frac{n-1}{2}}}<\lambda_k\left(\frac{2e}{\sigma^2} \right)^k \frac{A_2}{(2k+n-2)^{k+\frac{n-1}{2}}}
\]
The nonzero eigenvalues are
\begin{equation}
	\lambda_{k}=2^{d+n-2}\dfrac{d!}{(d-k)!}\dfrac{\Gamma(d+\frac{n-1}{2})\Gamma(\frac{n}{2})}{\sqrt{\pi}\Gamma(d+k+n-1)}
\end{equation}
for the polynommial kernel~\eqref{EQ:Poly Kernel} with $0\le k\le d$. Each $\lambda_k$ occurs with multiplicity $N(n,k)$, with the correpsonding eigenfunctions being spherical harmonics of order $k$ on $\bbS^{n-1}$. Furthermore, $\lambda_k$ forms a descreasing sequence and
\[
	\dfrac{B_1}{(k+d+n-2)^{2d+n-\frac{3}{2}}}<\lambda_k<\dfrac{B_2}{(k+d+n-2)^{d+n-\frac{3}{2}}}\,.
\]
\end{theorem}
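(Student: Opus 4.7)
The plan is to exploit the fact that both kernels are zonal on the sphere: since $\|x\|=\|\tilde x\|=1$, we have $\|x-\tilde x\|^2 = 2-2\langle x,\tilde x\rangle$, so the Gaussian kernel equals $e^{-1/\sigma^2}\exp(\langle x,\tilde x\rangle/\sigma^2)$, and the polynomial kernel is $(1+\alpha\langle x,\tilde x\rangle)^d$. Both are thus of the form $f(\langle x,\tilde x\rangle)$, and the Funk--Hecke theorem immediately tells us that the eigenfunctions of the associated integral operator on $L^2(\mathbb{S}^{n-1},\mu)$ are exactly the spherical harmonics, grouped by degree $k$ with multiplicity $N(n,k)=\binom{n+k-1}{k}-\binom{n+k-3}{k-2}$, and that the eigenvalue on the degree-$k$ block is
\[
\lambda_k \;=\; c_n \int_{-1}^{1} f(t)\, C_k^{(n-2)/2}(t)\,(1-t^2)^{(n-3)/2}\,dt,
\]
where $C_k^{(n-2)/2}$ is the Gegenbauer polynomial of degree $k$ and $c_n$ is an explicit normalization. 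This reduces the whole problem to two one-dimensional integrals plus estimates.

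For the Gaussian kernel I would substitute $f(t)=e^{-1/\sigma^2}e^{t/\sigma^2}$ and recognize the resulting integral
\[
\int_{-1}^{1} e^{t/\sigma^2}\, C_k^{(n-2)/2}(t)\,(1-t^2)^{(n-3)/2}\,dt
\]
as a classical integral representation of the modified Bessel function $I_{k+n/2-1}(2/\sigma^2)$; after tracking the normalizing constants $\Gamma(n/2)$ and the factor $\sigma^{n-2}$ coming from a change of variable (or equivalently from the standard identity $\int_{-1}^{1}e^{zt}C_k^\nu(t)(1-t^2)^{\nu-1/2}dt = \pi^{1/2}(2/z)^\nu \Gamma(2\nu+k)/(k!\Gamma(\nu))\, I_{k+\nu}(z)$ with $\nu=(n-2)/2$ and $z=2/\sigma^2$), the stated formula $\lambda_k=e^{-2/\sigma^2}\sigma^{n-2}I_{k+n/2-1}(2/\sigma^2)\Gamma(n/2)$ drops out. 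Monotonicity for $\sigma\ge\sqrt{2/n}$ follows by differentiating $I_\nu(z)/I_{\nu-1}(z)$ in $\nu$ and using the recurrence $I_{\nu-1}(z)-I_{\nu+1}(z)=(2\nu/z)I_\nu(z)$.

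For the polynomial kernel $(1+\alpha t)^d$, the integrand is a polynomial of exact degree $d$, so only the Gegenbauer modes $k=0,1,\dots,d$ can produce nonzero coefficients. Writing $(1+\alpha t)^d=\sum_{j=0}^d \binom{d}{j}\alpha^j t^j$, expanding each $t^j$ in the Gegenbauer basis, and using orthogonality gives a closed-form finite sum that, after applying standard Gamma-function identities (duplication, shifts), telescopes to the claimed expression $\lambda_k = 2^{d+n-2}\,\frac{d!}{(d-k)!}\,\frac{\Gamma(d+(n-1)/2)\Gamma(n/2)}{\sqrt{\pi}\,\Gamma(d+k+n-1)}$. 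Decay in $k$ is transparent from the $1/\Gamma(d+k+n-1)$ factor, and the two-sided bound $B_1(k+d+n-2)^{-(2d+n-3/2)} < \lambda_k < B_2(k+d+n-2)^{-(d+n-3/2)}$ follows from Stirling's formula applied to the ratio of Gamma functions, with the gap between the two exponents reflecting the contribution of the $d!/(d-k)!$ prefactor when $k$ is close to $d$.

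The main obstacle is the sharp two-sided bound for the Gaussian case. Once the eigenvalues are identified with $I_{k+n/2-1}(2/\sigma^2)$, I would use the uniform asymptotic expansion of $I_\nu(z)$ for large order (Debye/Olver), namely
\[
I_\nu(z)\;\sim\;\frac{1}{\sqrt{2\pi\nu}}\left(\frac{ez}{2\nu}\right)^{\!\nu}
\]
as $\nu\to\infty$ with $z$ fixed, together with explicit non-asymptotic envelopes (Luke's inequalities) to obtain constants $A_1,A_2$ that are uniform in $k$. Plugging $\nu=k+n/2-1$ and $z=2/\sigma^2$ converts the leading factor $(ez/(2\nu))^\nu$ into $(2e/\sigma^2)^k$ multiplied by $(2k+n-2)^{-k-(n-2)/2}$, while the $1/\sqrt{2\pi\nu}$ prefactor absorbs the extra half-power, matching the advertised $(2k+n-2)^{-k-(n-1)/2}$. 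The delicate part is showing that the constants $A_1,A_2$ can be chosen independently of $k$; this requires a uniform remainder estimate in the Debye expansion, or, as a cleaner alternative, the monotonicity identity $I_\nu(z) = (z/2)^\nu\sum_{j\ge 0}(z^2/4)^j/(j!\,\Gamma(\nu+j+1))$ truncated at $j=0$ for the lower bound and compared term-by-term to a geometric series for the upper bound.
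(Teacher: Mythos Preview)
The paper does not prove this theorem at all: it is quoted verbatim as ``Theorem 2 and Theorem 3 of \citet{MiNiYa-LNCS06}'' and immediately followed by the sentence ``We refer interested readers to \citet{MiNiYa-LNCS06,RaWi-Book06} and references therein for more details.'' So there is no paper's own proof to compare against; the result is imported as a black box to illustrate that the hypotheses of Theorem~\ref{THM:Kernel Learning} (algebraically decaying singular values of the feature map) are met by standard kernels.

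That said, your outline is essentially the proof that appears in the cited source. Minh--Niyogi--Yao proceed exactly by observing that both kernels are zonal on $\mathbb{S}^{n-1}$, invoking Funk--Hecke to reduce to a one-dimensional Gegenbauer integral, identifying the Gaussian integral with the modified Bessel function via the classical formula you quote, and handling the polynomial kernel by a finite Gegenbauer expansion. The asymptotic envelopes also come from Stirling on the Gamma ratios and large-order Bessel asymptotics, as you describe. One small bookkeeping point: with the normalization $K(x,\tilde x)=\exp(-\|x-\tilde x\|^2/(2\sigma^2))$ used in the paper, the zonal function is $e^{-1/\sigma^2}e^{t/\sigma^2}$, so the Bessel argument that falls out is $1/\sigma^2$ rather than $2/\sigma^2$; the $2/\sigma^2$ in the quoted statement corresponds to the alternative convention $\exp(-\|x-\tilde x\|^2/\sigma^2)$. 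This is a harmless normalization mismatch between the paper and its source, not an error in your argument.
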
 
The results have been proved in different settings. When the samples are not on $\bbS^{n-1}$ but in the cube $[-1, 1]^n$ or the unit ball $\cB^n$, or the underlying distrubion of the data is not uniform, there are similar results. We refer interested readers to~\citet{MiNiYa-LNCS06,RaWi-Book06} and references therein for more details. In Figure~\ref{FIG:Kernel SVD}, we plot the singular values of the sampling matrix for the polynomial kernel and  the Gaussian RBF kernel for the MINST data set.
\begin{figure}
	\centering
	\includegraphics[width=0.45\linewidth]{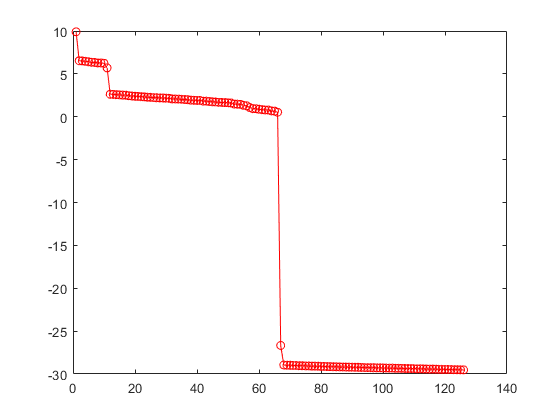}
	\includegraphics[width=0.45\linewidth]{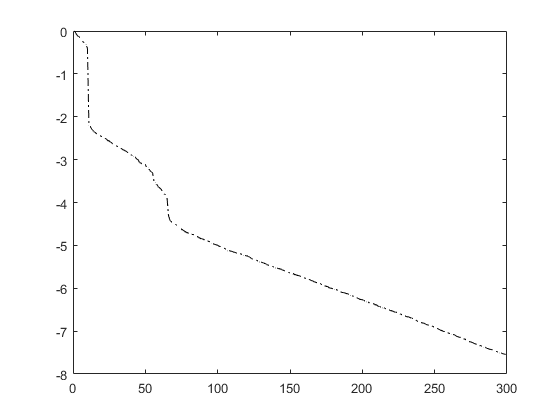}
	\caption{Decay of singular values of sampling matrices for the polynomial and Gaussian kernels respectively for the (normalized) MNIST data set. The $x$-axis represents the singular value index.}
	\label{FIG:Kernel SVD}
\end{figure}

For the theorem to work for learning with neural networks as we discussed in Section~\ref{SEC:Generalization}, it has been shown that the neural tangent kernel also satisfies the required property in different settings. The main argument is that neural tangent kernel is equivalent to kernel regression with the Laplace kernel as proved in~\citet{ChXu-ICLR21}. We cite the following result for the two-layer neural network model.
\begin{theorem}[Proposition 5 of~\citet{BiMa-NIPS19}]
	For any $x,\ \wt x\in\bbS^{n-1}$, the eigenvalues of the neural tangent kernel $\cK$ are non-negative, satisfying $\mu_0, \mu_1>0$, $\mu_k=0$ if $k=2j+1$ with $j\ge 1$, and otherwise $\mu_k\sim C(n)k^{-n}$ as $k\to\infty$, with $C(n)$ a constant depending only on $n$. The eigenfunction corresponding to $\mu_k$ is the spherical harmonic polynomials of degree $k$.
\end{theorem}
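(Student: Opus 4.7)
The plan is to proceed in three stages, exploiting rotational symmetry to reduce the spectral question on $\bbS^{n-1}$ to a one-dimensional coefficient problem for an explicit scalar kernel.

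First, I would show that the neural tangent kernel $\cK$ for the two-layer ReLU network is a dot-product kernel on the sphere, i.e.\ $\cK(x,\tilde x) = \kappa(\langle x,\tilde x\rangle)$ for some $\kappa:[-1,1]\to\bbR$. This follows from (i) rotation invariance of the Gaussian initialization and (ii) positive homogeneity of ReLU, which collapses norm dependence once restricted to $\bbS^{n-1}$. A direct moment computation in the spirit of Cho--Saul and Jacot--Gabriel--Hongler yields the explicit form
\[
\kappa(t) \;=\; t\,\kappa_1(t) \;+\; \kappa_0(t),
\]
where $\kappa_0(t)=\tfrac{1}{2\pi}(\pi-\arccos t)$ and $\kappa_1(t)=\tfrac{1}{2\pi}\bigl(\sqrt{1-t^2}+(\pi-\arccos t)\,t\bigr)$ are the zeroth- and first-order arc-cosine kernels, with any dimension-dependent prefactor absorbed into the constant $C(n)$ that appears in the final statement.

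Second, I would invoke the Funk--Hecke formula to diagonalize $\cK$. Any dot-product kernel on $\bbS^{n-1}$ admits a Mercer decomposition whose eigenspaces are exactly the spaces $\cH_k$ of spherical harmonics of degree $k$, with common eigenvalue on $\cH_k$ given by
\[
\mu_k \;=\; c_n \int_{-1}^{1} \kappa(t)\, P_k^{(n)}(t)\, (1-t^2)^{(n-3)/2}\, dt,
\]
where $P_k^{(n)}$ is the Gegenbauer polynomial of index $(n-2)/2$ and $c_n$ a normalization. Non-negativity $\mu_k\ge 0$ follows from positive semidefiniteness of $\cK$ (it is an expectation of outer products). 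For the parity claim, I would use $P_k^{(n)}(-t)=(-1)^k P_k^{(n)}(t)$ and decompose $\kappa$ into even/odd parts: $\kappa_0$ and the odd part of $t\,\kappa_1$ are the only contributors, and a direct expansion of $\arccos t$ and $t\sqrt{1-t^2}$ in the Gegenbauer basis shows that every odd coefficient of index $\ge 3$ cancels, leaving $\mu_1>0$ alone among odd indices, while $\mu_0>0$ comes from the constant/even piece.

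Third, for the asymptotic $\mu_k\sim C(n)\,k^{-n}$, I would analyze the endpoint singularity of $\kappa$ at $t=\pm 1$. The arc-cosine kernels are $C^\infty$ on $(-1,1)$ but non-analytic at the endpoints through the terms $\sqrt{1-t^2}$ and $\arccos t$, each behaving like $(1-t)^{1/2}$ near $t=1$. Setting $t=\cos\theta$ converts the Funk--Hecke integral into a Gegenbauer coefficient, whose decay is controlled by the leading non-analytic term of $\kappa$; applying the standard Darboux / Hilb-type asymptotics for $P_k^{(n)}(\cos\theta)$ at high $k$ with a $(1-t)^{1/2}$ singularity in ambient dimension $n$ gives precisely the rate $k^{-n}$, with $C(n)$ read off from the strength of this leading singularity. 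The main obstacle is this third step: one must correctly identify which of several coexisting non-analytic contributions dominates, and track the dimension-dependent exponent through the Gegenbauer asymptotics so that the rate comes out to $k^{-n}$ rather than some neighboring power; care is also required to confirm that odd coefficients $\mu_{2j+1}$ for $j\ge1$ vanish exactly, not merely decay faster.
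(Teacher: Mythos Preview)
The paper does not prove this statement at all: it is quoted as Proposition~5 of \citet{BiMa-NIPS19}, and the surrounding text explicitly defers the proof to that reference (``The proof of this result can be found in~\citet{BiMa-NIPS19}''). There is therefore no in-paper argument to compare your proposal against.

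For what it is worth, your three-stage plan is the correct strategy and is essentially what the original reference does: reduce to a dot-product kernel via rotation invariance and ReLU homogeneity, diagonalize on the sphere with Funk--Hecke so that the eigenspaces are the degree-$k$ spherical harmonics and $\mu_k$ is a Gegenbauer coefficient of an explicit scalar $\kappa$, then read off the decay $k^{-n}$ from the $(1-t)^{1/2}$-type endpoint singularity of the arc-cosine pieces against the weight $(1-t^2)^{(n-3)/2}$. Two small remarks. First, your formula $\kappa(t)=t\,\kappa_1(t)+\kappa_0(t)$ has the indices swapped relative to the standard two-layer ReLU NTK, which is $\kappa_1(t)+t\,\kappa_0(t)$; this does not affect the plan but would derail the bookkeeping if left uncorrected. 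Second, the parity step is actually cleaner than you fear: a short calculation shows that the odd part of $\kappa_1(t)+t\,\kappa_0(t)$ is exactly $t$, a pure degree-one polynomial, so $\mu_{2j+1}=0$ for $j\ge 1$ and $\mu_1>0$ drop out immediately without any delicate cancellation analysis.
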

The proof of this result can be found in~\citet{BiMa-NIPS19}. The similarity between neural tangent kernel and the Laplace kernel was first documented in~\citet{GeYaKaGaJaBa-arXiv20}, and a rigorous theory was developed in~\citet{ChXu-ICLR21}.

\bibliography{RH-BIB}
\bibliographystyle{iclr2022_conference}

\end{document}